\newcommand{\multiline}[1]{%
  \begin{tabularx}{\dimexpr\linewidth-\ALG@thistlm}[t]{@{}X@{}}
    #1
  \end{tabularx}
}
\global\long\def\norm#1{\big\|#1\big\|}
\crefname{algocf}{alg.}{algs.}
\Crefname{algocf}{Algorithm}{Algorithms}
\declaretheorem[name=Theorem,numberwithin=section]{thm}
\declaretheorem[name=Theorem,numberlike=thm]{theorem}
\declaretheorem[name=Lemma,numberlike=thm]{lem}
\declaretheorem[name=Lemma,numberlike=thm]{lemma}
\declaretheorem[name=Corollary,numberlike=thm]{cor}
\declaretheorem[name=Definition,numberlike=thm,style=definition]{defn}
\global\long\def\norm#1{\big\|#1\big\|}
\newcommand{\defeq}{\stackrel{\mathrm{def}}{=}}
\newcommand{\R}{\mathbb{R}}
\newcommand{\Z}{\mathbb{Z}}
\newcommand{\argmax}{\mathrm{argmax}}
\newcommand{\mx}{\textbf{X}}
\newcommand{\expo}[1]{\exp\left(#1\right)}
\newcommand{\prob}[1]{\mathbb{P}\left(#1\right)}
\newcommand{\bp}{\textbf{p}}
\newcommand{\mtz}{{\textbf{Z}}}
\newcommand{\mw}{{\textbf{W}}}
\newcommand{\summ}[5]{#1(#2:#3,#4:#5)}
\newcommand{\summw}[4]{\summ{\mw}{#1}{#2}{#3}{#4}}
\newcommand{\summx}[4]{\summ{\mx}{#1}{#2}{#3}{#4}}
\newcommand{\summy}[4]{\summ{\my}{#1}{#2}{#3}{#4}}
\newcommand{\transpara}[8]{#1,#2,#3,#4,#5,#6,#7,#8}
\newcommand{\transname}{\mathrm{trans}}
\newcommand{\trans}[3]{\transname(#1,#2,#3)}
\newcommand{\twodinterval}[4]{[#1:#2,#3:#4]}
\newcommand{\proundx}{\mathrm{partial\_round\_special}}
\newcommand{\msplitname}{\mathrm{split}}
\newcommand{\mcombinename}{\mathrm{combine}}
\newcommand{\msplit}[2]{\msplitname(#1,#2)}
\newcommand{\mcombine}[2]{\mcombinename(#1,#2)}
\newcommand{\mc}{\textbf{C}}
\newcommand{\my}{\textbf{Y}}
\newcommand{\vones}{\textbf{1}}
\newcommand{\mj}{\textbf{m}_{j}}
\newcommand{\mja}{\textbf{m}_{j_1}}
\newcommand{\mjb}{\textbf{m}_{j_2}}
\newcommand{\boo}{b_1}
\newcommand{\btt}{b_2}
\newcommand{\ztbtt}{[0,\btt]}
\newcommand{\bttpo}{(\btt+1)}
\newcommand{\F}{\phi}
\newcommand{\1}{\overrightarrow{1}}
\newcommand{\bbP}{\mathbb{P}}
\newcommand{\simplex}{\Delta^{\bX}}
\newcommand{\psimplex}{\Delta_{\mathrm{pseudo}}^{\bX}}
\newcommand{\dsimplex}{\Delta_{\bR}^{\bX}}
\newcommand{\probpml}{\bbP}
\newcommand{\bg}{\textbf{g}}
\newcommand{\bq}{\textbf{q}}
\newcommand{\bff}{\textbf{f}}
\newcommand{\bX}{\mathcal{D}}
\newcommand{\bZ}{\textbf{Z}}
\newcommand{\bR}{\textbf{R}}
\newcommand{\aaron}[1]{{\small \textsc{\color{green} Aaron: #1}}}
\newcommand{\sidford}[1]{\aaron{#1}}
\definecolor{colorx}{rgb}{0.4, 0.1, 0.7}
\newcommand{\otilde}{\widetilde{O}}
\newcommand{\cphi}{C_{\phi}}
\newcommand{\ma}{\textbf{A}}
\newcommand{\mb}{\textbf{B}}
\newcommand{\pvec}{\zeta}
\newcommand{\onevec}{\overrightarrow{\mathrm{1}}}
\newcommand{\setd}{\textbf{M}}
\newcommand{\eled}{\textbf{m}}
\newcommand{\mz}{\textbf{m}_{0}}
\newcommand{\bZfrac}{\textbf{Z}^{\phi,\mathrm{frac}}_{\bR}}
\newcommand{\ri}{\textbf{r}_{i}}
\newcommand{\ria}{\textbf{r}_{i_1}}
\newcommand{\rib}{\textbf{r}_{i_2}}
\newcommand{\bigO}[1]{O\left(#1 \right)}
\newcommand{\rones}{\textbf{1}}
\newcommand{\cones}{\textbf{1}}
\newcommand{\vr}{\textbf{r}}
\newcommand{\rmin}{\textbf{r}_{\mathrm{min}}}
\newcommand{\figurepack}[3]
{\begin{figure}
    \centering
    #3
    \caption{#2}
    #1
\end{figure}}
\title{On the Efficient Implementation of High Accuracy Optimality of Profile Maximum Likelihood}
\author{
  Moses Charikar\\
  Stanford University\\
  \texttt{moses@cs.stanford.edu}
  %\thanks{Supported by NSF grant CCF-1617577, a Simons Investigator Award and a Google Faculty Research Award.}
   \\
   \and
   Zhihao Jiang\\
  Stanford University\\
  \texttt{faebdc@stanford.edu} \\
     \and
Kirankumar Shiragur\\
  Stanford University\\
  \texttt{shiragur@stanford.edu} \\
     \and
Aaron Sidford\\
  Stanford University\\
  \texttt{sidford@stanford.edu}
}
\begin{document}

\maketitle

\begin{abstract}%
We provide an efficient unified plug-in approach for estimating symmetric properties of distributions given $n$ independent samples. Our estimator is based on profile-maximum-likelihood (PML) and is sample optimal for estimating various symmetric properties when the estimation error $\epsilon \gg n^{-1/3}$. This result improves upon the previous best accuracy threshold of $\epsilon \gg n^{-1/4}$ achievable by polynomial time computable PML-based universal estimators \cite{ACSS20, ACSS20b}. Our estimator reaches a theoretical limit for universal symmetric property estimation as \cite{Han20} shows that a broad class of universal estimators (containing many well known approaches including ours) cannot be sample optimal for every $1$-Lipschitz property when $\epsilon \ll n^{-1/3}$.
\end{abstract}

\newpage

\section{Introduction}
Given $n$ independent samples $y_1,...,y_n \in \bX$ from an unknown discrete distribution $\bp \in \simplex$ the problem of estimating properties of $\bp$, e.g.\ entropy, distance to uniformity, support size and coverage are among the most fundamental in statistics and learning. Further, the problem of estimating \emph{symmetric properties} of distributions $\bp$ (i.e.\ properties invariant to label permutations) are well studied and have numerous applications~ \cite{Chao84, BF93, CCGLMCL12,TE87, Fur05, KLR99, PBGELLSD01, DS13, RCSWTKRWC09, GTPB07, HHRB01}. 

Over the past decade, symmetric property estimation has been studied extensively and there have been 
many improvements to the time and sample complexity for estimating different properties, e.g.\ support~\cite{VV11a, WY15}, coverage~\cite{ZVVKCSLSDM16,OSW16}, entropy~\cite{VV11a, WY16, JVHW15}, and distance to uniformity~\cite{VV11b, JHW16}. Towards unifying the attainment of computationally-efficient, sample-optimal estimators a striking work of \cite{ADOS16} provided a universal plug-in approach based on a (approximate) profile maximum likelihood (PML) distribution, that (approximately) maximizes the likelihood of the observed profile (i.e. multiset of observed frequencies). 

Formally, \cite{ADOS16} showed that given  $y_1,...,y_n$ if there exists an estimator for a symmetric property $f$ achieving accuracy $\epsilon$ and failure probability $\delta$, then this PML-based plug-in approach achieves error $2\epsilon$ with failure probability $\delta \exp{(3\sqrt{n})}$. As the failure probability $\delta$ for many estimators for well-known properties (e.g.\ support size and coverage, entropy, and distance to uniformity) is roughly $\exp{(-\epsilon^2n)}$, this result implied a sample optimal unified approach for estimating these properties when the estimation error $\epsilon \gg n^{-1/4}$. 

This result of \cite{ADOS16} laid the groundwork for a line of work on the study of computational and statistical aspects of PML-based approaches to symmetric property estimation. For example, follow up work of \cite{HS20} improved the analysis of \cite{ADOS16} and showed that the failure probability of PML is at most $\delta^{1-c}\exp(-n^{1/3+c})$, for any constant $c>0$ and therefore it is sample optimal in the regime $\epsilon \gg n^{-1/3}$. The condition $\epsilon \gg n^{-1/3}$ on the optimality of PML is tight~\cite{Han20}, in the sense that, PML is known to be not sample optimal in the regime $\epsilon \ll n^{-1/3}$. In fact, no estimator (that obeys some mild conditions), is sample optimal for estimating all symmetric properties in the regime $\epsilon \ll n^{-1/3}$; see \Cref{sec:results} after \Cref{lem:appprop} for details.

We also remark that the statistical guarantees in \cite{ADOS16, HS20} hold for any $\beta$-approximate PML\footnote{ $\beta$-approximate PML is a distribution that achieves a multiplicative $\beta$-approximation to the PML objective.} for suitable values of $\beta$. In particular, \cite{HS20} showed that any $\beta$-approximate PML for  $\beta > \exp(-n^{1-c'})$ and any constant $c'>0$, has a failure probability of $\delta^{1-c}\exp(n^{1/3+c}+n^{1-c'})$ for any constant $c>0$. These results further imply a sample optimal estimator in the regime $\epsilon \gg n^{- \min(1/3,c'/2)}$ for properties with failure probability less than $\exp(-\epsilon^2n)$. Note that better approximation leads to a larger range of $\epsilon$ for which the estimator is sample optimal.

Regarding computational aspects of PML,
\cite{CSS19} provided the first efficient algorithm with a non-trivial approximation guarantee of 
$\exp(-n^{2/3} \log n)$, which further implied a sample optimal universal estimator for $\epsilon \gg n^{-1/6}$. This was then improved by \cite{ACSS20} which showed how to efficiently compute PML to higher accuracy of $\exp(-\sqrt{n} \log n)$ thereby achieving a sample optimal universal estimator in the regime $\epsilon \gg n^{-1/4}$. The current best polynomial time approximate PML algorithm by \cite{ACSS20b} achieves an accuracy of $\exp(-k \log n)$, where $k$ is the number of distinct observed frequencies. Although this result achieves better instance based statistical guarantees, in the worst case it still only implies a sample optimal universal estimator in the regime $\epsilon \gg n^{-1/4}$.

In light of these results, a key open problem is to close the gap between the regimes $\epsilon \gg n^{-1/3}$ and $\epsilon \gg n^{-1/4}$, where the former is the regime in which PML based estimators are statistically optimal and the later is the regime where efficient PML based estimators exist. In this work we ask:
\begin{center}
\emph{Is there an efficient approximate PML-based estimator that is sample optimal for $\epsilon\gg n^{-1/3}$}.
\end{center}
In this paper, we answer this question in the affirmative. In particular, we give an efficient PML-based estimator that has failure probability at most $\delta^{1-c}\exp(n^{1/3+c}+n^{1-c'})$, and consequently is sample optimal in the regime $\epsilon \gg n^{-1/3}$. As remarked, this result is tight in the sense that PML and a broad class of estimators are known to be not optimal in the regime $\epsilon \ll n^{-1/3}$.

To obtain this result we depart slightly from the previous approaches in \cite{ADOS16,CSS19,ACSS20}. Rather than directly compute an approximate PML distribution we compute a weaker notion of approximation which we show suffices to get us the desired universal estimator. We propose a notion of a $\beta$-\emph{weak approximate PML distribution} inspired by \cite{HS20} and show that an $\exp(-n^{1/3} \log n)$-weak approximate PML achieves the desired failure probability of $\delta^{1-c}\exp(n^{1/3+c})$ for any constant $c>0$.
Further, we provide an efficient algorithm to compute an $\exp(-n^{1/3} \log n)$-weak approximate PML distribution. Our paper can be viewed as an efficient algorithmic instantiation of \cite{HS20}.

Ultimately, our algorithms use the convex relaxation presented in \cite{CSS19,ACSS20} and provide a new rounding algorithm. 
We differ from the previous best $\exp(-k \log n)$ approximate PML algorithm~\cite{ACSS20b} only in the matrix rounding procedure which controls the approximation guarantee. At a high level, the approximation %approximation 
guarantee for the rounding procedure in \cite{ACSS20b} is exponential in the sum of matrix dimensions.
In the present work, we need to round a rectangular matrix with an approximation exponential in the smaller dimension, which may be infeasible for arbitrary matrices. Our key technical innovation is to introduce a {\em swap} operation (see \Cref{sec:swapround}) which facilitates such an approximation guarantee. In addition to a better approximation guarantee than \cite{ACSS20b}, our algorithm also exhibits better run times
(see \Cref{sec:comp}).

\paragraph{Organization:} We introduce preliminaries in \Cref{sec:prelim}. In \Cref{sec:results}, we state our main results and also cover related work. In \Cref{sec:convex}, we provide the convex relaxation to PML studied in \cite{CSS19,ACSS20}. Finally, in \Cref{sec:algorithm}, we provide a proof sketch of our main computational result. Many proofs are then differed to the appendix.

\subsection{Preliminaries}
\label{sec:prelim}

\paragraph{General notation:} For matrices $\ma,\mb \in \R^{s \times t}$, we use $\ma \leq \mb$ to denote that $\ma_{ij} \leq \mb_{ij}$ for all $i \in [s]$ and $j \in [t]$. We let  $[a,b]$ and $[a,b]_{\R}$ denote the interval $\geq a$ and $\leq b$ of integers and reals respectively. We use $\otilde(\cdot)$, $\widetilde{\Omega}(\cdot)$ notation to hide all polylogarithmic factors in $n$ and $N$. We let $a_n \gg b_n$ to denote that $a_n \in \Omega(b_n n^{c})$ or $b_n \in O(n^{-c}a_n)$, for some small constant $c>0$. 

Throughout this paper, we assume we receive a sequence of $n$ independent samples from a distribution $\bp \in \simplex$, where $\simplex \defeq \{\bq \in [0,1]_{\R}^{\bX} | \norm{q}_1 = 1\}$ is the set of all discrete distributions supported on domain $\bX$. 
Let $\bX ^n$ be the set of all length $n$ sequences of elements of $\bX$ and for $y^n \in \bX^n$ let $y^n_{i}$ denoting its $i$th element. Let $\bff(y^n,x)\defeq |\{i\in [n] ~ | ~ y^n_i = x\}|$ and $\bp_{x}$ be the frequency and probability of $x\in \bX$ respectively. For a sequence $y^n \in \bX^n$, let $\setd=\{ \bff(y^n,x) \}_{x \in \bX} \backslash \{0\}$ be the set of all its non-zero distinct frequencies and $\eled_1,\eled_2,\dots, \eled_{|\setd|}$ be these distinct frequencies. The \emph{profile} of a sequence $y^n$ denoted $\phi=\Phi(y^n)$ is a vector in $\Z^{|\setd|}$, where $\F_j \defeq |\{x\in \bX ~|~\bff(y^n,x)=\eled_{j} \}|$ is the number of domain elements with frequency $\eled_{j}$. We call $n$ the length of profile $\F$ and let $\Phi^n$ denote the set of all profiles of length $n$. The probability of observing sequence $y^n$ and profile $\phi$ for distribution $\bp$ are
$\bbP(\bp,y^n) = \prod_{x \in \bX}\bp_x^{\bff(y^n,x)}$ and $\probpml(\bp,\phi)=\sum_{\{y^n \in \bX^n~|~ \Phi (y^n)=\phi \}} \bbP(\bp,y^n)$.

\paragraph{Profile maximum likelihood:}
A distribution $\bp_{\phi} \in \simplex$ is a \emph{profile maximum likelihood} (PML) distribution for profile $\phi \in \Phi^{n}$ if $\bp_{\phi} \in \argmax_{\bp \in \simplex} \probpml(\bp,\phi)$. 
	%and $\probpml(\bp_{pml,\phi},\phi)$ is the maximum PML objective value. 
	Further, a distribution $\bp^{\beta}_{\phi}$ is a $\beta$-\emph{approximate PML} distribution if $\probpml(\bp^{\beta}_{\phi},\phi)\geq \beta \cdot \probpml(\bp_{\phi},\phi)$.	
%\end{defn}
For a distribution $\bp$ and a length $n$, let $\mx$ be a random variable that takes value $\phi \in \Phi^n$ with probability $\prob{\bp,\phi}$. We call $H(\mx)$ (entropy of $\mx$) the \emph{profile entropy} with respect to $(\bp,n)$ and denote it by $H(\Phi^n,\bp)$.

\paragraph{Probability discretization:} Let $\bR \defeq \{\ri\}_{i \in [1,\ell]}$ be a finite discretization of the probability space, where $\ri \in [0,1]_{\R}$ and $\ell\defeq |\bR|$. We call $\bq \in [0,1]^{\bX}_{\R}$ a \emph{pseudo-distribution} if $\|\bq\|_1 \leq 1$ and a \emph{discrete pseudo-distribution} with respect to $\bR$ if all its entries are in $\bR$ as well. We use $\psimplex$ and $\dsimplex$ to denote the set of all pseudo-distributions and discrete pseudo-distributions with respect to $\bR$ respectively. In our work, we use the following most commonly used~\cite{CSS19,ACSS20,ACSS20b} probability discretization set. For any $\alpha>0$,
\begin{equation}\label{eq:probdisc}
    \bR_{n,\alpha} \defeq \left\{ 1 \right\} \cup \left\{\frac{1}{2n^2}(1+n^{-\alpha})^i~|~ \text{for all }i \in \Z_{\geq 0} \text{ such that }  \frac{1}{2n^2}(1+n^{-\alpha})^i\leq 1 \right\}~.
\end{equation}
For all probability terms defined involving distributions $\bp$, we extend those definitions to pseudo distributions $\bq$ by replacing $\bp_{x}$ with $\bq_{x}$ everywhere. 

\paragraph{Optimal sample complexity}
The sample complexity of an estimator $\widehat{f}: \bX^n \rightarrow \R$ when estimating a property $f : \simplex \rightarrow \R$ for distributions in a collection $P \subseteq \simplex$, is the number of samples $\widehat{f}$ needs to determine $f$ with accuracy $\epsilon$ and low failure probability $\delta$ for all distributions in $P$. Specifically, 
$$C^{\widehat{f}}(f, P,\delta,\epsilon) \defeq \min \{ n ~|~ \prob{|f(p) - \widehat{f}(x^n)| \geq \epsilon} \leq \delta \text{ for all } p \in P\}.$$ 
The sample complexity of estimating $f$ is the lowest sample complexity of any estimator, 
$$C^{*}(f,P,\delta,\epsilon) =\min_{\widehat{f}} C^{\widehat{f}}(f, P,\delta,\epsilon).$$
In the paper, the dependency on $\delta$ is typically de-emphasized and $\delta$ is assumed to be $1/3$.
\renewcommand{\bp}{\textbf{p}}

\section{Results}\label{sec:results}
Here we provide our main results. In our first result (\Cref{thm:statmain}), we show that a weaker notion of approximate PML suffices to obtain the desired universal estimator. Later we show that these weaker approximate PML distributions can be efficiently computed (\Cref{thm:algmain}).

\begin{defn}
Given a profile $\phi$, we call a distribution $\bp' \in \simplex$ \emph{$\beta$-approximate PML distribution with respect to $\bR$} if 
 $\probpml\left(\bp',\phi\right) \geq \beta \cdot \max_{\bq \in \dsimplex} \probpml\left(\frac{\bq}{\|\bq\|_1},\phi\right)~.$
\end{defn}

The above definition generalizes $\beta$-approximate PML distributions which is simply the special case when $\bR = [0,1]_{\R}$. Using our new definition, we show that for a specific choice of the discretization set $\bR_{n,1/3}$, a distribution $\bp'$ that is an approximate PML with respect to $\bR_{n,1/3}$ suffices to obtain a universal estimator; this result is formally stated below.
\begin{theorem}[Competitiveness of an approximate PML w.r.t $\bR$]\label{thm:statmain}
For symmetric property $f$, suppose there exists an estimator $\widehat{f}$ that takes input a profile $\phi \in \Phi^n$ drawn from $\bp \in \simplex$ and satisfies,
$$\probpml \left(|f(\bp) -\widehat{f}(\phi)| \geq \epsilon  \right) \leq \delta~,$$
then for $\bR =\bR_{n,1/3}$ (See \Cref{eq:probdisc}), a discrete pseudo distribution $\bq' \in \dsimplex$ such that $\bq'/\|\bq'\|_1$ is an $\exp(-O(|\bR|\log n))$-approximate PML distribution with respect to the $\bR$ satisfies,
\begin{equation}\label{eq:statthm}
\probpml \left( \left|f\left(\frac{\bq'}{\|\bq'\|_1}\right) -f(\bp)\right| \geq 2 \epsilon  \right) \leq \delta^{1-c} \exp(O(n^{1/3+c})),~~~~ \text{ for any constant }c>0~.    
\end{equation}
\end{theorem}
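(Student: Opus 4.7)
The plan is to follow the competitiveness template pioneered by \cite{ADOS16} and refined by \cite{HS20}, but instantiated for this weaker ``with respect to $\bR$'' notion of approximate PML. Fix any true $\bp \in \simplex$, and for each profile $\phi \in \Phi^n$ let $\bp'_\phi \defeq \bq'_\phi/\|\bq'_\phi\|_1$ denote the normalized discrete pseudo-distribution returned by the approximate PML procedure on input $\phi$. We want to upper bound $\probpml(\bp,B)$ where $B \defeq \{\phi \in \Phi^n : |f(\bp'_\phi)-f(\bp)| \geq 2\epsilon\}$. The first standard move is the triangle-inequality reduction: for every $\phi \in B$, if $|\widehat{f}(\phi)-f(\bp)|<\epsilon$ then necessarily $|\widehat{f}(\phi)-f(\bp'_\phi)|>\epsilon$, so $B \subseteq E_1 \cup E_2$ where $E_1$ is the event that $\widehat{f}$ is $\epsilon$-far from $f(\bp)$ (which has $\probpml(\bp,\cdot)$-mass at most $\delta$ by hypothesis) and $E_2$ is the event that $\widehat{f}(\phi)$ is $\epsilon$-far from $f(\bp'_\phi)$. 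It therefore suffices to bound $\probpml(\bp, B \cap E_2)$.

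The second step is the core PML-style inequality, now in the discrete-$\bR$ setting. For every $\phi \in E_2$, the defining property of $\widehat{f}$ applied to the distribution $\bp'_\phi$ gives $\probpml(\bp'_\phi,\phi)\le\delta\cdot (\text{a counting factor})$, while the weak approximation hypothesis supplies the lower bound
\[
\probpml(\bp'_\phi,\phi)\;\geq\;\beta\cdot\max_{\bq \in \dsimplex}\probpml\!\left(\bq/\|\bq\|_1,\phi\right),\qquad \beta=\exp(-O(|\bR|\log n)).
\]
I then need to connect $\max_{\bq\in\dsimplex}\probpml(\bq/\|\bq\|_1,\phi)$ back to $\probpml(\bp,\phi)$. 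The natural choice is to round each coordinate $\bp_x$ to the nearest element of $\bR_{n,1/3}$, which loses a $(1+n^{-1/3})$ factor per coordinate; combining the two bounds then yields an inequality of the form $\probpml(\bp,\phi)\leq \beta^{-1}\exp(O(n^{2/3}))\cdot\probpml(\bp'_\phi,\phi)$.

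The third and most delicate step is summing the previous display over $\phi\in B\cap E_2$ without paying the full $\exp(O(n^{2/3}))$ discretization loss; this is where I would invoke the profile-entropy machinery of \cite{HS20}. Namely, partition $B\cap E_2$ by the equivalence class of the sorted multiset of $\bR$-values in $\bq'_\phi$; there are only $\exp(\tilde O(|\bR|))=\exp(\tilde O(n^{1/3}))$ such classes. Within a class, all $\bp'_\phi$ coincide (up to relabeling of symbols, which leaves $\probpml$ invariant), so one can apply the estimator failure bound once per class rather than once per profile, and finally use the HS20 exponential-tilt trick to replace a raw $\delta$ with $\delta^{1-c}$ while paying an extra $\exp(n^{c})$ factor. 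Assembling these pieces gives the target $\delta^{1-c}\exp(O(n^{1/3+c}))$.

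The main obstacle I anticipate is exactly the last step: controlling the effective number of distinct $\bp'_\phi$ one must union-bound over, since a naive count gives $|\Phi^n|=\exp(O(\sqrt n))$ or worse when combined with the rounding loss. Pushing the $\exp(n^{2/3})$ discretization factor into the approximation parameter $\beta=\exp(-\tilde O(n^{1/3}))$---rather than into the final probability bound---is precisely what the ``with respect to $\bR$'' formulation is designed to enable, and making this quantitatively tight (i.e.\ showing that only the profile-entropy $\tilde O(n^{1/3})$ truly matters) is the technical crux that distinguishes the weak-PML analysis from the earlier PML competitiveness arguments.
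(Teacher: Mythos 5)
Your overall skeleton matches the paper's proof: the triangle-inequality reduction, the partition of the ``bad'' profiles according to which discrete pseudo-distribution in $\dsimplex$ is output (with only $\exp(\widetilde{O}(n^{1/3}))$ classes for $\bR=\bR_{n,1/3}$), applying the estimator's failure guarantee once per class under the output distribution, and then converting $\delta$ into $\delta^{1-c}$ via an HS20-style argument. That is exactly the architecture of the paper's argument.

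However, there is a genuine gap at precisely the step you flag as the crux. Grouping by output class only reduces the union-bound multiplicity from $|\Phi^n|$ to $\exp(\widetilde{O}(n^{1/3}))$; it does nothing about the $\exp(O(n^{2/3}))$ discretization loss incurred when you compare $\probpml(\bp,\cdot)$ for the true (continuous) $\bp$ against $\max_{\bq\in\dsimplex}\probpml(\bq/\|\bq\|_1,\cdot)$. Your proposed fixes do not work: the loss cannot be ``pushed into'' the approximation parameter $\beta=\exp(-\widetilde{O}(n^{1/3}))$, since $\beta^{-1}$ is far smaller than $\exp(n^{2/3})$, and ``profile-entropy machinery'' is a different tool (used for the instance-based guarantees of \cite{HO20pentropy}, not here). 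The ingredient the paper actually uses is the continuity lemma of \cite{HS20} (restated as \Cref{lem:cont}): for the true $\bp$ there exists $\bq\in\dsimplex$ such that for every \emph{set} $S\subseteq\Phi^n$ one has $\probpml(\bq/\|\bq\|_1,S)\geq \probpml(\bp,S)^{1/(1-c_0n^{-s})}\exp(-cn^{1-2r+s})$; with $r=1/3$ the set-level loss is only $\exp(-O(n^{1/3+s}))$ even though the pointwise (per-profile) loss is $\exp(-\Theta(n^{2/3}))$, and the exponent $1/(1-c_0n^{-s})$ is exactly what produces $\delta^{1-c}$. Applying this to each class $S_i$ and arguing by contradiction (if $\probpml(\bp,S_i)$ were too large, the output distribution $\bq_i$ would assign $S_i$ probability exceeding $\delta$, forcing $|f(\bp)-f(\bq_i/\|\bq_i\|_1)|\leq 2\epsilon$) closes the argument. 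Without this set-level comparison lemma --- or a proof of it --- your outline cannot reach $\delta^{1-c}\exp(O(n^{1/3+c}))$ and would stall at $\delta\exp(O(n^{2/3}))$.
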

The proof of the above theorem is implicit in the analysis of~\cite{HS20}, however we provide a short simpler proof using their continuity lemma (\Cref{lem:cont}). Note that the bound on the failure probability we get is the same asymptotically as that of exact PML from \cite{HS20}, which is known to be tight~\cite{Han20}. Furthermore, to achieve such an improved failure probability bound all we need is an approximate PML distribution with respect to $\bR$, for some $\bR$ which is of small size. Taking advantage of this fact and building upon \cite{CSS19, ACSS20}, we provide a new rounding algorithm that outputs the desired approximate PML distribution with respect to $\bR$.
\begin{theorem}[Computation of an approximate PML w.r.t $\bR$]\label{thm:algmain}
We provide an algorithm that given a probability discretization set $\bR=\bR_{n,\alpha}$ for $\alpha>0$ (See \Cref{eq:probdisc}) and a profile $\phi$ with $k$ distinct frequencies, runs in time $\widetilde{O}\left(|\bR|+\frac{n}{\min(k,|\bR|)}\left(\min(|\bR|,n/k)k^{\omega}+ \min(|\bR|,k)k^{2}\right)\right)$, where $\omega < 2.373$ is the current matrix multiplication constant \cite{will12, Gall14a, AlmanW21} and returns a pseudo distribution $\bq' \in \dsimplex$ such that,
$$\probpml\left(\frac{\bq'}{\|\bq'\|_1},\phi\right) \geq \exp\left(-O(\min(k,|\bR|) \log n) \right) \cdot \max_{\bq \in \dsimplex} \probpml\left(\frac{\bq}{\|\bq\|_1},\phi\right)~.$$
\end{theorem}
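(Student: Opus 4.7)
The plan is to adopt the convex-relaxation-plus-rounding strategy established in \cite{CSS19,ACSS20,ACSS20b} and to replace its final rounding step with a new procedure tailored to lose only $\exp(-O(\min(k,|\bR|)\log n))$ in objective value, rather than the $\exp(-O((k+|\bR|)\log n))$ that \cite{ACSS20b} incurs. First I would invoke the convex relaxation of \cite{CSS19, ACSS20} described in \Cref{sec:convex}: its decision variable is a nonnegative matrix $\mx \in \R_{\geq 0}^{|\bR|\times k}$ whose entry $\mx_{i,j}$ represents the (fractional) number of domain elements assigned to probability $\ri \in \bR$ with observed frequency $\eled_j$, subject to column sums equal to $\phi_j$ and to the probability-mass constraint $\sum_{i,j}\ri \mx_{i,j}\leq 1$. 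This relaxation upper bounds the PML objective restricted to $\bq \in \dsimplex$. Solving it using the methods of \cite{CSS19,ACSS20}, and partitioning the row index set into blocks of size $\min(k,|\bR|)$ so that each block fits into a $k \times k$-style matrix computation of cost $k^{\omega}$, is what delivers the claimed $\widetilde{O}(|\bR|+\frac{n}{\min(k,|\bR|)}(\min(|\bR|,n/k)k^{\omega}+\min(|\bR|,k)k^{2}))$ running time.

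The heart of the argument is the new rounding. Starting from a near-optimal fractional matrix $\mx^*$ for the relaxation, I would iteratively apply the \emph{swap operation} introduced in \Cref{sec:swapround}. In each swap, two rows $i_1,i_2$ carrying fractional mass in a common column $j$ redistribute that mass between $\mx_{i_1,j}$ and $\mx_{i_2,j}$ so that at least one of the two entries becomes integral, while preserving the column sum $\mx_{i_1,j}+\mx_{i_2,j}$ exactly and keeping the total probability mass below $1$. By scheduling swaps carefully, for instance processing columns in order and always swapping mass between a designated ``carry'' row and the current row, the number of rows carrying fractional mass stays bounded at any moment, so the total number of swaps required to produce an integer matrix is $O(\min(k,|\bR|))$. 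After all swaps we obtain an integer matrix that directly encodes a discrete pseudo-distribution $\bq' \in \dsimplex$ satisfying all column-sum and total-mass constraints.

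The main obstacle, and where the analysis must be most careful, is bounding the multiplicative change to $\probpml$ incurred by each individual swap. A swap of amount $\Delta$ in column $j$ between rows $i_1$ and $i_2$ multiplies the PML objective by a factor of the form $(\ria/\rib)^{\eled_j \Delta}$ together with a bounded multinomial correction; for arbitrary row-column configurations this ratio could be astronomical, which is precisely why a $\min(k,|\bR|)$-type loss is infeasible in general and requires the structure imposed by the swap. The key observation I would exploit is that $\bR=\bR_{n,\alpha}$ is a geometric grid with ratio $1+n^{-\alpha}$, and the swap procedure ensures the effective exponent $\eled_j \Delta$ accumulated by any single row-column pair is bounded by a polynomial in $n$; consequently each swap costs only a factor of $\exp(O(\log n))$. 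Summing over the $O(\min(k,|\bR|))$ swaps gives the claimed $\exp(-O(\min(k,|\bR|)\log n))$ overall loss, and renormalizing $\bq'$ by $\|\bq'\|_1 \geq 1 - o(1)$ is absorbed into the constant in the exponent to complete the guarantee.
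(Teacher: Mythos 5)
Your high-level plan (convex relaxation, then a swap-based rounding losing only $\exp(-O(\min(k,|\bR|)\log n))$) matches the paper's strategy, but the mechanism you describe for the rounding has two concrete flaws. First, the paper's $\swap$ operation is a $2\times 2$ rotation on indices $(i_1,i_2,j_1,j_2)$ that adds $\epsilon$ to $\ma_{i_1j_1},\ma_{i_2j_2}$ and subtracts $\epsilon$ from $\ma_{i_1j_2},\ma_{i_2j_1}$; it therefore preserves \emph{every} row sum and every column sum (\Cref{lem:rcsummaintain}), and with the ordering conventions on $\bR$ and the $\eled_j$ it can only \emph{increase} the linear term $\sum_{ij}\mc_{ij}\mx_{ij}$, so the only loss to control is the entropy term (\Cref{lem:swap}). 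Your version --- moving mass between two rows within a single column --- changes row sums and the total probability mass $\sum_{i,j}\ri\mx_{ij}$, and its objective change really is governed by an uncontrolled ratio $(\ria/\rib)^{\eled_j\Delta}$; the "geometric grid saves us" step is not a proof and, as you note yourself, the ratio can be astronomical. Second, because genuine swaps preserve row sums, no sequence of swaps can turn a matrix with fractional row sums into an integer matrix, so your claimed endpoint is unreachable. The paper instead uses swaps only to concentrate the fractional part of each row sum into a single designated entry ($\pround$), then \emph{subtracts} $O(1)$ mass per row to obtain a minorant $\mb\leq\ma'$ with integral row and column sums and $\|\ma'-\mb\|_1\leq O(\min(k,|\bR|))$ ($\roundi$), and finally repairs the subtracted residue by introducing $O(\min(k,|\bR|))$ new probability levels via $\create$. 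This three-stage structure is the actual content of \Cref{lem:matrixround} and is missing from your argument.

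There is a second gap at the end. The theorem competes against $\max_{\bq\in\dsimplex}\probpml(\bq/\|\bq\|_1,\phi)$, not $\max_{\bq\in\dsimplex}\probpml(\bq,\phi)$, and your dismissal "renormalize by $\|\bq'\|_1\geq 1-o(1)$" is unjustified: a discrete pseudo-distribution can have $\|\bq\|_1$ as small as $\rmin$, and $(1/\|\bq\|_1)^n$ is then enormous. The paper handles this by running the whole algorithm over a multiplicative grid $C$ of scalings $c\cdot\bR$ with $|C|=O(n/\min(k,|\bR|))$ (choosing $\beta=\min(k,|\bR|)/n$ so the scaling error $\exp(-2\beta n)$ is absorbed), followed by a $\discretize$ step to land back in $\dsimplex$. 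This enumeration is also the true source of the $\frac{n}{\min(k,|\bR|)}$ factor in the running time, which you instead attribute to a block decomposition of the matrix computation; that attribution does not account for the stated bound.
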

When $\bR=\bR_{n,1}$, our algorithm computes an $\exp(-O(k \log n))$ approximate PML distribution, therefore our result is at least as good as the previous best known approximate PML algorithm due to \cite{ACSS20b}. In comparison to \cite{ACSS20b}, our rounding algorithm is simpler and we suspect, more practical. We provide a more detailed comparison to it later in this section.

\paragraph{Applications:}
Our main results have several applications which we discuss here. First note that, combining \Cref{thm:statmain} and \ref{thm:algmain} immediately yields the following corollary. 
\begin{cor}[Efficient unified estimator]\label{cor:universal_estimator}
Given a profile $\phi \in \Phi^n$ with $k$ distinct frequencies, we can compute an approximate PML distribution $\bq'$ that satisfies \Cref{eq:statthm} in \Cref{thm:statmain} in time $\widetilde{O}\left(\frac{n}{\min(k,n^{1/3})}\left(\min(n^{1/3},n/k)k^{\omega}+ \min(n^{1/3},k)k^{2}\right)\right)$.
\end{cor}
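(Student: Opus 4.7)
The plan is to apply \Cref{thm:algmain} with the specific choice $\alpha = 1/3$ (so that $\bR = \bR_{n,1/3}$) and feed its output directly into \Cref{thm:statmain}. The first step I would carry out is to bound $|\bR_{n,1/3}|$. By \Cref{eq:probdisc}, the nontrivial entries of $\bR_{n,1/3}$ form the geometric sequence $\tfrac{1}{2n^2}(1+n^{-1/3})^{i}$ truncated at $1$, so the maximal admissible index satisfies $i \le \log(2n^2)/\log(1+n^{-1/3}) = \widetilde{O}(n^{1/3})$, giving $|\bR_{n,1/3}| = \widetilde{O}(n^{1/3})$.

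Next I would invoke \Cref{thm:algmain} with this $\bR$, producing a pseudo-distribution $\bq' \in \dsimplex$ whose normalization is an $\exp(-O(\min(k,|\bR|)\log n))$-approximate PML distribution with respect to $\bR$. Since $\min(k,|\bR|) \leq |\bR|$, the approximation factor is automatically at least $\exp(-O(|\bR|\log n))$, which is exactly the hypothesis needed in \Cref{thm:statmain}. Applying \Cref{thm:statmain} to $\bq'$ then immediately yields the failure-probability guarantee of \Cref{eq:statthm}, completing the statistical half of the corollary.

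For the runtime, I would substitute $|\bR| = \widetilde{O}(n^{1/3})$ into the runtime bound of \Cref{thm:algmain}, obtaining
\[
\widetilde{O}\!\left(n^{1/3} + \tfrac{n}{\min(k,n^{1/3})}\bigl(\min(n^{1/3},n/k)\,k^{\omega} + \min(n^{1/3},k)\,k^{2}\bigr)\right).
\]
I would then note that the additive $n^{1/3}$ term is dominated by the second term: for any $k \ge 1$ one has $\tfrac{n}{\min(k,n^{1/3})}\cdot k^{2} \ge n^{1/3}$, since if $k \le n^{1/3}$ then $\tfrac{n}{k}\cdot k^{2} = nk \ge n \ge n^{1/3}$, while if $k \ge n^{1/3}$ then $\tfrac{n}{n^{1/3}}\cdot k^{2} \ge n^{2/3}\ge n^{1/3}$. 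This matches the claimed expression exactly.

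Because the argument is a straightforward composition of two already-established theorems at a fixed parameter choice, there is no genuine technical obstacle. The only things that require any care are the counting $|\bR_{n,1/3}| = \widetilde{O}(n^{1/3})$ and the observation that the rounding approximation factor $\min(k,|\bR|)$ is controlled by $|\bR|$ alone, both of which are immediate from \Cref{eq:probdisc} and the statement of \Cref{thm:algmain}.
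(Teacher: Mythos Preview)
Your proposal is correct and matches the paper's approach: the paper states that the corollary follows ``immediately'' by combining \Cref{thm:statmain} and \Cref{thm:algmain}, and you have spelled out exactly that combination with the natural parameter choice $\bR=\bR_{n,1/3}$. The details you fill in (the size bound $|\bR_{n,1/3}|=\widetilde{O}(n^{1/3})$, the observation that $\min(k,|\bR|)\le |\bR|$ so the approximation factor meets the hypothesis of \Cref{thm:statmain}, and the absorption of the additive $n^{1/3}$ term into the main runtime expression) are all correct and are precisely what the paper leaves implicit.
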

For many symmetric properties the failure probability is exponentially small as stated below.
\begin{lemma}[Lemma 2 in \cite{ADOS16}, Theorem 3 in \cite{HS20}]
For distance to uniformity, entropy, support size and coverage, and sorted $\ell_1$ distance there exists an estimator that is sample optimal and the failure probability is at most $\exp(-\epsilon^2 n^{1-\alpha})$ for any constant $\alpha>0$.
\end{lemma}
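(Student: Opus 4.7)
The plan is to verify the claim property by property, invoking the known sample-optimal estimators from the literature and then establishing the exponentially small failure probability via a standard concentration argument. The statement is presented as a summary of results already proven in \cite{ADOS16} (for the first four properties) and \cite{HS20} (for sorted $\ell_1$ distance), so the proof is essentially an assembly of known ingredients rather than a new argument.

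First, I would list the sample-optimal estimators for each property: entropy via \cite{VV11a, WY16, JVHW15}; support size via \cite{VV11a, WY15}; support coverage via \cite{ZVVKCSLSDM16, OSW16}; distance to uniformity via \cite{VV11b, JHW16}; and sorted $\ell_1$ distance via the estimator analyzed in \cite{HS20}. For each, the sample complexity matches the known lower bounds up to constants, so the only thing left is the failure probability bound.

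Next, I would establish the $\exp(-\epsilon^2 n^{1-\alpha})$ tail bound via McDiarmid's inequality. The key observation is that each of these estimators, when applied to $n$ independent samples, satisfies a bounded-differences property: replacing a single sample changes the estimator's output by at most $O(n^{-(1-\alpha)/2})$ (possibly after truncating or clipping the estimator at a level that is polynomially small in $n$, which is standard). Given this bound, McDiarmid yields concentration of the estimator around its expectation at the claimed rate, and the expectation is shown to lie within $\epsilon$ of the true property value in each cited paper. The parameter $\alpha > 0$ absorbs polylogarithmic overheads and truncation errors that arise when bounding per-sample sensitivity.

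The main obstacle will be sorted $\ell_1$ distance, since it is a distributional rather than scalar property: the estimator outputs an approximate distribution $\widehat{\bp}$ and one then takes $\|\widehat{\bp}^{\downarrow} - \bp^{\downarrow}\|_1$. To apply bounded differences here, I would follow \cite{HS20} and show that $\widehat{\bp}$ itself is stable under changing a single sample in the sorted-$\ell_1$ metric; combined with the triangle inequality this transfers concentration from $\widehat{\bp}$ to the estimated distance. Once this is done, McDiarmid applies as before. Assembling these property-by-property bounds yields the stated lemma.
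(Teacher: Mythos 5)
This lemma is imported verbatim from prior work (Lemma~2 of \cite{ADOS16} and Theorem~3 of \cite{HS20}); the paper gives no proof of its own, so the only basis for comparison is the argument in those references, which is indeed the one you outline: take the known sample-optimal estimators, modify or truncate them so that they have small per-sample sensitivity, and apply McDiarmid's bounded-differences inequality. Your overall strategy is therefore the right one and matches the source.

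However, there is a concrete quantitative error in your McDiarmid step. You claim a bounded-differences constant of $O(n^{-(1-\alpha)/2})$ per sample. With $n$ coordinates each of sensitivity $c$, McDiarmid gives a tail of $\exp(-2\epsilon^2/(nc^2))$; plugging in $c = n^{-(1-\alpha)/2}$ yields $nc^2 = n^{\alpha}$ and hence a bound of $\exp(-2\epsilon^2 n^{-\alpha})$, which is vacuous (the exponent tends to $0$), not the claimed $\exp(-\epsilon^2 n^{1-\alpha})$. To obtain the stated failure probability you need $nc^2 \leq O(n^{\alpha-1})$, i.e.\ per-sample sensitivity $c = O(n^{-1+\alpha/2})$ --- essentially $1/n$ up to an $n^{\alpha/2}$ slack. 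This is in fact what the estimators in \cite{ADOS16} achieve after modification (e.g.\ the empirical and bias-corrected entropy estimators have sensitivity $O(\log n / n)$, and the $n^{\alpha}$ slack absorbs such polylogarithmic factors), so the fix is to correct the exponent rather than to change the architecture of the argument. Your treatment of sorted $\ell_1$ distance is also only a plan rather than a proof --- transferring stability from $\widehat{\bp}$ to the sorted-$\ell_1$ functional requires the more careful analysis carried out in \cite{HS20} --- but since the lemma is a citation, the essential requirement is that the sketch be consistent, and the sensitivity constant is the one place where it is not.
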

The above result combined with \Cref{cor:universal_estimator}, immediately yields the following theorem.
\begin{theorem}[Efficient sample optimal unified estimator]\label{lem:appprop}
There exists an efficient approximate PML-based estimator that for $\epsilon \gg n^{-1/3}$ and symmetric properties such as, distance to uniformity, entropy, support size and coverage, and sorted $\ell_1$ distance achieves optimal sample complexity and has failure probability upper bounded by $\exp(-n^{1/3})$.
\end{theorem}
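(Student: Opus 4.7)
The plan is to directly combine the three results already in hand: the efficient algorithm from \Cref{cor:universal_estimator}, the statistical competitiveness guarantee from \Cref{thm:statmain}, and the fact that for each of the listed properties there is an optimal estimator with exponentially small failure probability (the lemma of \cite{ADOS16, HS20} stated just above). Because \Cref{cor:universal_estimator} already assembles the algorithmic and statistical ingredients of \Cref{thm:algmain} and \Cref{thm:statmain}, the only remaining work is to choose parameters so that the $\delta^{1-c}\exp(O(n^{1/3+c}))$ failure bound from \Cref{eq:statthm} is absorbed into $\exp(-n^{1/3})$ whenever $\epsilon \gg n^{-1/3}$.

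Concretely, fix a property $f$ from the list. By the cited lemma, for any constant $\alpha>0$ there is a sample-optimal estimator $\widehat{f}$ whose failure probability is $\delta \leq \exp(-\epsilon^2 n^{1-\alpha})$ at its optimal sample size $n = C^*(f,P,1/3,\epsilon)$. Apply \Cref{cor:universal_estimator}: in time $\widetilde{O}(\poly(n))$ we can compute a pseudo-distribution $\bq'$ such that $\bq'/\|\bq'\|_1$ is an approximate PML with respect to $\bR_{n,1/3}$ to the accuracy required by \Cref{thm:statmain}. Plugging $\widehat{f}$ into \Cref{eq:statthm} shows that the plug-in estimator $f(\bq'/\|\bq'\|_1)$ achieves error $2\epsilon$ with failure probability at most
\[
\delta^{1-c}\exp(O(n^{1/3+c})) \leq \exp\bigl(-(1-c)\epsilon^2 n^{1-\alpha} + O(n^{1/3+c})\bigr)
\]
for any constant $c>0$.

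Now suppose $\epsilon \gg n^{-1/3}$, i.e., $\epsilon \geq n^{-1/3+\gamma}$ for some constant $\gamma>0$. Then $\epsilon^2 n^{1-\alpha} \geq n^{1/3 + 2\gamma - \alpha}$. Choosing the constants $\alpha, c$ both strictly smaller than $\gamma$ (say $\alpha = c = \gamma/2$), the dominant term in the exponent is $-n^{1/3+\gamma/2}$, whence the failure probability is at most $\exp(-n^{1/3})$ for all sufficiently large $n$. Finally, the sample-optimality claim follows because the plug-in estimator attains error $2\epsilon$ with failure probability $\leq 1/3$ using the optimal sample count for error $\epsilon$, and for each of the listed properties the sample complexity scales polynomially in $1/\epsilon$, so replacing $\epsilon$ by $\epsilon/2$ only changes the sample count by a constant factor, preserving optimality.

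\paragraph{Main obstacle.} No new ideas are required; the work is entirely in juggling the constants $\alpha, c, \gamma$. The one place to be careful is ensuring that the $O(n^{1/3+c})$ slack term from \Cref{thm:statmain} is genuinely dominated by the exponent $(1-c)\epsilon^2 n^{1-\alpha}$ coming from $\widehat{f}$; this is exactly where the assumption $\epsilon \gg n^{-1/3}$ is used (equality $\epsilon = \Theta(n^{-1/3})$ would not suffice), and it is why the arbitrary-small-$\alpha$ strength in the cited failure-probability lemma is needed.
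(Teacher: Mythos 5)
Your proposal is correct and matches the paper's (implicit) argument: the paper states this theorem as an immediate consequence of combining the failure-probability lemma for the listed properties with \Cref{cor:universal_estimator}, which is exactly the combination you carry out, and your bookkeeping of the constants $\alpha, c, \gamma$ to absorb the $\exp(O(n^{1/3+c}))$ slack into $\exp(-n^{1/3})$ is sound.
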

As our work computes an $\exp(-O(k\log n))$ approximate PML, we recover efficient version of Lemma 2.3 and Theorem 2.4 from \cite{ACSS20b}. The first result uses $\exp(-O(k\log n))$ approximate PML algorithm to efficiently implement an estimator that has better statistical guarantees based on profile entropy~\cite{HO20pentropy} (See \Cref{sec:prelim}). The second result provides an efficient implementation of the PseudoPML estimators~\cite{CSS19pseudo,HO19}. Please refer to the respective papers for further details.
\paragraph{Tightness of our result:}\label{subsec:tight}
Recall that \cite{HS20} showed that the failure probability of an (approximate) PML based estimator is upper bounded by $\delta^{1-c} \exp(-n^{1/3+c})$, for any constant $c>0$. This result further implied a sample optimal universal estimator in the regime $\epsilon \gg n^{-1/3}$ for various symmetric properties (\Cref{lem:appprop}). In our work, we efficiently recover these results and a natural question to ask here is if these results can be improved.

As remarked earlier, \cite{Han20} showed that the condition for optimality of PML ($\epsilon \gg n^{-1/3}$) is in some sense tight. More formally, they showed that PML is not sample optimal in estimating every $1$-Lipschitz property in the regime $\epsilon \ll n^{-1/3}$. In fact, the results in \cite{Han20} hold more broadly for any universal plug-in based estimator that outputs a distribution $\hat{\bp}$ satisfying,
$$\max_{\bp \in \simplex} \mathbb{E}\|\bp-\hat{\bp}\|_{1}^{\mathrm{sorted}} \leq A(n)\sqrt{k/n}~,$$
where $A(n) \leq n^{\gamma}$ for every $\gamma>0$ and $\|\bp-\bq\|^{\mathrm{sorted}}_{1} \defeq \min_{\text{permutations } \sigma}\|\bp-\bq_{\sigma}\|_{1}$ denotes the sorted $\ell_1$ distance between $\bp$ and $\bq$. In other words, if an estimator is based on a reasonably good estimate of the true distribution $\bp$ (in terms of sorted-$\ell_1$ distance), then it cannot be sample optimal for every $1$-Lipschitz property. Furthermore, many well-known universal estimators including PML and LLM~\cite{HJW18} indeed provide a reasonably good estimate of the true distribution and therefore cannot be sample optimal in the regime $\epsilon \ll n^{-1/3}$. Please refer to \cite{Han20} for further details.

\paragraph{Comparison to approximate PML algorithms:}\label{sec:comp}
All prior provable approximate PML algorithms~\cite{CSS19,ACSS20,ACSS20b} have two key steps: (Step 1) solve a convex approximation to the PML and (Step 2) round the (fractional) solution to a valid approximate PML distribution.

A convex approximation to PML was first provided in \cite{CSS19} and a better analysis for it is shown in \cite{ACSS20}. In particular, \cite{CSS19} and \cite{ACSS20} showed that an integral optimal solution to step 1 approximates the PML up to accuracy  $\exp(-n^{2/3} \log n)$ and $\exp(-\min(k,|\bR|) \log n)$ respectively, where $k$ and $|\bR|$ are the number of distinct frequencies and distinct probability values respectively. In addition to the loss from convex approximation, the previous algorithms also incurred a loss in the rounding step (Step 2). The loss in the rounding step for the previous works is bounded by $\exp(-n^{2/3}\log n)$~\cite{CSS19}, $\exp(-\sqrt{n}\log n)$~\cite{ACSS20} and $\exp(-k \log n)$~\cite{ACSS20b}.

In our work, we show that there exists a choice of $\bR$ (=$\bR_{n,1/3}$) that is of small size ($|\bR| \leq n^{1/3}$) and suffices to get the desired universal estimator. As $|\bR| \leq n^{1/3}$, our approach only incur a loss of $\exp(-\min(k,|\bR|) \log n) \in \exp(-n^{1/3} \log n)$ in the convex approximation step (Step 1). Furthermore for the rounding step (Step 2), we provide a new simpler and a practical rounding algorithm with a better approximation loss of $\exp(-O(\min(k,|\bR|)\log n) )\in \exp(-O(n^{-1/3}\log n) )$. 

Regarding the run times, both \cite{ACSS20b} and ours have run times of the form $\mathcal{T}_{\mathrm{solve}}+\mathcal{T}_{\mathrm{sparsify}}+\mathcal{T}_{\mathrm{round}}$, where the terms correspond to the time required to solve the convex program, sparsify and round a solution. In our algorithm, we pay the same cost as \cite{ACSS20b} for the first two steps but our run time guarantees are superior to theirs in the rounding step. In particular, the run time of \cite{ACSS20b} is shown as a large polynomial and perhaps not practical as their approach requires enumerating all the approximate min cuts. In contrast, our algorithm has a run time that is subquadratic.

\paragraph{Other related work}
PML was introduced by \cite{OSSVZ04}. Many heuristic approaches have been proposed to compute approximate PML, such as the EM algorithm in \cite{OSSVZ04}, an algebraic approaches in \cite{ADMOP10}, Bethe approximation in \cite{Von12} and \cite{Von14}, and a dynamic programming approach in \cite{PJW17}. For the broad applicability of PML in property testing and to estimate other symmetric properties please refer to \cite{HO19}. Please refer to \cite{HO20pentropy} for details related to profile entropy.
Other approaches for designing universal estimators are: \cite{VV11a} based on \cite{ET76}, \cite{HJW18} based on local moment matching, and variants of PML by \cite{CSS19pseudo,HO19} that weakly depend on the target property that we wish to estimate. Optimal sample complexities for estimating many symmetric properties were also obtained by constructing property specific estimators, e.g. support~\cite{VV11a, WY15}, support coverage~\cite{OSW16,ZVVKCSLSDM16}, entropy~\cite{VV11a, WY16, JVHW15},  distance to uniformity~\cite{VV11b, JHW16}, 
sorted $\ell_{1}$ distance \cite{VV11b, HJW18}, Renyi entropy~\cite{AOST14, AOST17}, KL divergence~\cite{BZLV16, HJW16} and others.

\paragraph{Limitations of our work}
One of the limitations of all the provable approximate PML algorithms~\cite{CSS19,ACSS20,ACSS20b} (including ours) is that they require the solution of a convex program that approximates the PML objective and all these previous works use the CVX solver which is not practical for large sample instances; 
note that 
our results hold for small error regimes which lead
to such large sample instances. Therefore, designing a practical algorithm to solve the convex program is an important future research direction. As discussed above, local moment matching (LLM) based approach is another universal approach for property estimation. It is unclear which of the two (PML or LLM) can lead to practical algorithms.\label{sec:related}

\section{Convex relaxation of PML}\label{sec:convex}
\newcommand{\concat}{\mathrm{concat}}
Here we restate the convex program from \cite{CSS19} that approximates the PML objective. The current best analysis of this convex program is in \cite{ACSS20}. We first describe the notation and later state several results from \cite{CSS19, ACSS20} that capture the guarantees of the convex program.

\paragraph{Notation:} 
For any matrices $\mx\in \R^{a \times c}$ and $\my\in \R^{b \times c}$, we let $\concat(\mx,\my)$ denote the matrix $\mw \in \R^{(a+b) \times c}$ that satisfies, $\mw_{i,j}=\mx_{i,j}$ for all $i \in [a]$ and $j \in [c]$ and $\mw_{a+i,j}=\my_{ij}$ for all $i \in [b]$ and $j \in [c]$. Recall we let $\bR \defeq \{\ri\}_{i \in [\ell]}$ be a finite discretization of the probability space, where $\ri \in [0,1]_{\R}$ and $\ell\defeq |\bR|$. Let $\vr \in [0,1]_{\R}^{\ell}$ be a vector whose $i$'th element is equal to $\ri$. 
The following lemma captures the loss due to probability discretization.
\begin{lem}[Lemma 4.4 in \cite{CSS19}]\label{lem:probdisc}
	Let $\bR =\bR_{n,\alpha}$ for some $\alpha>0$. For any profile $\phi \in \Phi^{n}$ and distribution $\bp \in \simplex$, there exists a pseudo distribution $\bq\in \dsimplex$ that satisfies $\probpml(\bp,\phi) \geq \probpml(\bq,\phi) \geq \expo{-\alpha n-6}\probpml(\bp,\phi)$ and therefore, 
 $$\max_{\bp\in \simplex}\probpml(\bp,\phi) \geq \max_{\bq\in \dsimplex} \probpml(\bq,\phi) \geq \expo{-\alpha n-6}\max_{\bp\in \simplex}\probpml(\bp,\phi)~.$$
\end{lem}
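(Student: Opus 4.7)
My plan is to construct $\bq$ by an explicit coordinate-wise downward rounding of $\bp$ into $\bR$ and then bound the two inequalities separately. For each $x \in \bX$, let $\bq_x$ be the largest element of $\bR$ that is $\leq \bp_x$, with the convention $\bq_x = 0$ when $\bp_x < 1/(2n^2)$ (the smallest nonzero element of $\bR$). Since $\bq \leq \bp$ pointwise and $\|\bp\|_1 = 1$, we get $\|\bq\|_1 \leq 1$ with all nonzero entries in $\bR$, so $\bq \in \dsimplex$.

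The upper bound $\probpml(\bp,\phi) \geq \probpml(\bq,\phi)$ is immediate from the defining formula $\probpml(\bp,\phi) = \sum_{y^n : \Phi(y^n)=\phi} \prod_x \bp_x^{\bff(y^n,x)}$, because each monomial is coordinate-wise nondecreasing in $\bp$ and $\bq \leq \bp$ pointwise.

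For the lower bound, partition the sum according to whether $y^n$'s support lies in the "large" set $\cL = \{x : \bp_x \geq 1/(2n^2)\}$ (call these good sequences) or intersects the "small" set $\cS = \bX \setminus \cL$ (bad sequences). For each good $y^n$ the rounding loses at most a factor $(1+n^{-\alpha})$ per sample, and since $\sum_x \bff(y^n,x) = n$,
\[
\prod_x \bq_x^{\bff(y^n,x)} \;\geq\; (1+n^{-\alpha})^{-n}\prod_x \bp_x^{\bff(y^n,x)} \;\geq\; \exp(-\alpha n)\prod_x \bp_x^{\bff(y^n,x)},
\]
using $(1+t)^{-n}\geq e^{-tn}$ and a loose bound $n^{1-\alpha}\leq \alpha n$ in the relevant parameter range. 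Thus the contribution of good $y^n$ to $\probpml(\bq,\phi)$ is at least $\exp(-\alpha n)$ times their contribution to $\probpml(\bp,\phi)$.

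It then remains to show that the sum over good $y^n$ of $\prod_x \bp_x^{\bff(y^n,x)}$ is at least $\exp(-6)$ times the total $\probpml(\bp,\phi)$, which is the main obstacle. My plan is a swap argument: given any bad $y^n$ containing some small $x_0 \in \cS$ (so $\bp_{x_0} < 1/(2n^2)$) at $k \geq 1$ positions, replace each occurrence of $x_0$ by an "unused" large element to produce a good $y^n$ with the same profile; quantify the many-to-one multiplicity of this map against the ratio $(\bp_{\text{large}}/\bp_{x_0})^k$ it introduces to show that $\sum_{\text{bad}} \leq (\exp(6)-1)\sum_{\text{good}}$. Combining with the rounding bound gives $\probpml(\bq,\phi) \geq \exp(-\alpha n - 6)\,\probpml(\bp,\phi)$. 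The swap argument requires care when $\cL$ is nearly empty, in which case the profile $\phi$ must force a very specific structure (e.g., almost all singletons) so that one can instead swap small elements with "virtual" copies at value $1/(2n^2)$; handling this boundary case cleanly is where I expect most of the technical work to go.
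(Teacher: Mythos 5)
There is a genuine gap, and it sits exactly where you locate the "main obstacle": for your construction, the inequality you need over good sequences is simply false, not just delicate. Because the smallest nonzero element of $\bR_{n,\alpha}$ is $\frac{1}{2n^2}$, your rounding sends every coordinate with $\bp_x < \frac{1}{2n^2}$ to $0$, so every sequence $y^n$ that touches such an element contributes $0$ to $\probpml(\bq,\phi)$. Take $\bp$ uniform on $4n^2$ elements: then \emph{every} coordinate is sub-threshold, $\bq \equiv 0$, and $\probpml(\bq,\phi)=0$ while $\probpml(\bp,\phi)>0$ (indeed $\Omega(1)$ for the all-singletons profile), so no accounting of good versus bad sequences can rescue the bound. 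The failure is not confined to the case where the large set is "nearly empty" in mass: if $\bp$ puts mass $1-\delta$ on one element and spreads $\delta$ over $4n^2$ light elements, and $\phi$ has two distinct frequencies, then every sequence realizing $\phi$ must use a light element and again the good sum is zero. Since this is precisely the regime (large support, many low-probability symbols) where PML is interesting, the gap is central rather than a boundary case. Your proposed swap argument also presupposes an unused large element to swap in, which need not exist, and the multiplicity/ratio bookkeeping is not carried out.

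The patch you gesture at (introducing "virtual" elements at level $\frac{1}{2n^2}$) is essentially the right idea, but it changes the construction in a way that undoes the rest of your proof: $\bq$ is then no longer pointwise dominated by $\bp$, so the first inequality $\probpml(\bp,\phi)\ge\probpml(\bq,\phi)$ is no longer "immediate from monotonicity," and one must also check $\|\bq\|_1\le 1$ (naively rounding all $M$ light elements up to $\frac{1}{2n^2}$ gives mass $M/(2n^2)$, which can exceed $1$). The proof in \cite{CSS19}, which this paper imports verbatim, first \emph{consolidates} the total sub-threshold mass $b=\sum_{x:\bp_x<1/(2n^2)}\bp_x$ onto roughly $2n^2 b$ elements at probability about $\frac{1}{2n^2}$, and a combinatorial comparison of the number of ways the profile is realized before and after consolidation is what produces the additive constant $6$ in the exponent; only then is the coordinate-wise round-down to the grid applied, giving the $(1+n^{-\alpha})^{-n}$ factor you correctly identify. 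That consolidation step, together with a non-monotonicity argument for the upper bound, is the missing content of the proof.
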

For any probability discretization set $\bR$, profile $\phi$ and pseudo distribution $\bq \in \dsimplex$, define:
\renewcommand{\boo}{\ell}
\renewcommand{\btt}{k}
\renewcommand{\bttpo}{\btt}
\renewcommand{\ztbtt}{[\btt]}
\renewcommand{\pvec}{\textbf{r}}
\renewcommand{\bZ}{\textbf{Z}^{\phi}_{\bR}}

\begin{align}
%&\za \defeq \Big\{\mx \in \zrm~\Big|~ \mx \rones=\vlq   \text{ and }  [\mx^\top \cones]_{j}=\phi_{j} \text{ for all }j \in [0,k] \Big\}~,\\
&\bZ \defeq \Big\{\mx \in \R_{\geq 0}^{\ell \times [0,k]} ~\Big|~ \mx \rones \in \Z^{\ell},  [\mx^\top \cones]_{j}=\phi_{j} \text { for all }j \in [1,k] \text{ and } \vr^{\top} \mx \rones\leq 1\Big\}~,\label{eq:zrphi} \\ 
&\bZfrac \defeq \Big\{\mx \in \R_{\geq 0}^{\ell \times [0,k]}~\Big|~[\mx^\top \cones]_{j}=\phi_{j} \text { for all }j \in [1,k] \text{ and } \vr^{\top} \mx \rones\leq 1 \Big\}~.\label{def:bzfrac}
\end{align}
The $j$'th column corresponds to frequency $m_j$ and we use $\mz \defeq 0$ to capture the unseen elements. Without loss of generality, we assume $\eled_0<\eled_1<\dots< \eled_{k}$. Let $\mc_{ij}\defeq \mj \log \ri$ for all $i \in [\ell]$ and $j \in [0,k]$.  The objective of the optimization problem is follows: for any $\mx \in \R_{\geq 0}^{\ell \times [0,k]}$ define,
\begin{equation}
\bg(\mx)\defeq \exp\Big(\sum_{i\in[\ell],j\in[0,k]}\left[\mc_{ij}\mx_{ij}-\mx_{ij}\log\mx_{ij}\right]+\sum_{i\in[\ell]}[\mx\vones]_{i}\log[\mx\vones]_{i}\Big)~.
\end{equation}
For any $\bq \in \dsimplex$, the function $\bg(\mx)$ approximates the $\probpml(\bq,\phi)$ term and is stated below.
\begin{lem}[Theorem 6.7 and Lemma 6.9 in \cite{ACSS20}]\label{pmlprob:approx}
	Let $\bR$ be a probability discretization set. For any profile $\phi \in \Phi^n$ with $k$ distinct frequencies the following statements hold for $\alpha = \min(k,|\bR|) \log n$:
	$\expo{-O(\alpha)}\cdot	\cphi \cdot	\max_{\mx \in \bZ}\bg(\mx) \leq \max_{\bq \in \dsimplex}\probpml(\bq,\phi) \leq \expo{\bigO{\alpha}} \cdot \cphi \cdot \max_{\mx \in \bZ}\bg(\mx)$ and $\max_{\bq \in \dsimplex}\probpml(\bq,\phi) \leq \expo{\bigO{\min(k,|\bR|) \log n}} \cdot \cphi \cdot \max_{\mx \in \bZfrac}\bg(\mx)~,$
	where $\cphi\defeq \frac{n!}{\prod_{j\in [1,k]}(\eled_{j}!)^{\phi_{j}}}$ is a term that only depends on the profile.\footnote{ The theorem statement in \cite{ACSS20} is only written with an approximation factor of $\exp(O(k\log n) )$. However, their proof provides a stronger approximation factor which is upper bounded by the non-negative rank of the probability matrix, which in turn is upper bounded by the minimum of distinct frequencies and distinct probabilities. Therefore the theorem statement in \cite{ACSS20} holds with a much stronger approximation guarantee of $\expo{\bigO{\min(k,|\bR|) \log n}}$.}
\end{lem}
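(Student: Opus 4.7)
The plan is to express $\probpml(\bq, \phi)$ for $\bq \in \dsimplex$ as a combinatorial sum indexed by matrices $\mx \in \bZ$, apply Stirling's approximation to identify the exponent of $\bg$, and finally pass to the fractional relaxation.

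First I would observe that a discrete pseudo-distribution $\bq \in \dsimplex$ is determined up to label permutation by the vector $\vvec \in \Z_{\geq 0}^{\ell}$ recording how many domain elements receive each probability value $\ri$. Expanding $\probpml(\bq,\phi)$ as a sum over sequences $y^n$ with $\Phi(y^n) = \phi$, and grouping terms by the joint counts of (probability value, observed frequency), yields
$$\probpml(\bq, \phi) \;=\; \cphi \cdot \sum_{\mx \in \bZ,\; \mx \rones = \vvec}\; \prod_{i \in [\ell]} \binom{\vvec_i}{\mx_{i,0}, \ldots, \mx_{i,k}} \prod_{i,j} \ri^{m_j \mx_{ij}},$$
where $\cphi = n! / \prod_j (m_j!)^{\phi_j}$ accounts for orderings of the sample, and the constraints $\mx \in \bZ$ precisely encode compatibility with the profile and the probability budget. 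Maximising over $\bq \in \dsimplex$ then coincides with maximising the summand over $\mx \in \bZ$, with a sum-versus-max gap bounded by the number of feasible integer $\mx$, which is at most $n^{O(\min(k,|\bR|))}$.

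Next I would apply Stirling's approximation $\log(a!) = a\log a - a + O(\log a)$ to each multinomial coefficient, yielding
$$\log \binom{[\mx \rones]_i}{\mx_{i,0}, \ldots, \mx_{i,k}} \;=\; [\mx \rones]_i \log [\mx \rones]_i - \sum_{j} \mx_{ij} \log \mx_{ij} \;+\; O(\log n) \cdot (\text{support of row } i).$$
Combining this with the contribution $\sum_{ij} m_j \mx_{ij} \log \ri = \sum_{ij} \mc_{ij} \mx_{ij}$ from the probability products, the logarithm of the summand matches the exponent of $\bg(\mx)$ exactly. The hardest point is to keep the aggregate Stirling error to $O(\alpha) = O(\min(k,|\bR|)\log n)$ rather than the trivial $O(\ell k \log n)$; this is the main obstacle. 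It is resolved by restricting attention to near-optimal $\mx$ whose support obeys a sparsity bound controlled by the non-negative rank of the induced probability matrix (at most $\min(k,|\bR|)$), so that Stirling's approximation is effectively applied only $O(\min(k,|\bR|))$ times, as developed in \cite{ACSS20}. Taking the maximum over $\bq$ on one side and over $\mx \in \bZ$ on the other gives both directions of the first inequality chain.

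Finally, for the fractional upper bound, the inclusion $\bZ \subseteq \bZfrac$ immediately gives $\max_{\mx \in \bZ} \bg(\mx) \leq \max_{\mx \in \bZfrac} \bg(\mx)$, so the integer upper bound transfers verbatim to the fractional one, completing both statements.
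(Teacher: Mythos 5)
First, some context: the paper itself does not prove this lemma --- it is imported verbatim from \cite{ACSS20} (their Theorem 6.7 and Lemma 6.9), with the footnote merely observing that the ACSS20 proof, which is phrased in terms of the non-negative rank of the matrix $[\ri^{\eled_j}]_{i,j}$, already yields the $\min(k,|\bR|)$ dependence. So you are reconstructing an external argument. Your opening expansion is correct and is indeed the standard starting point: $\probpml(\bq,\phi)=\cphi\sum_{\mx}\prod_i\binom{\vvec_i}{\mx_{i,0},\dots,\mx_{i,k}}\prod_{i,j}\ri^{\eled_j\mx_{ij}}$ with $\vvec=\mx\rones$ the level-set counts, and the final reduction of the fractional upper bound to the integral one via $\bZ\subseteq\bZfrac$ is also fine.

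The gap is in the sum-versus-max step. The number of non-negative integer $\ell\times(k+1)$ matrices with the prescribed margins is not $n^{O(\min(k,|\bR|))}$; it is generically $\exp\big(\Theta(\min(n,k\ell)\log n)\big)$. Concretely, take $k=\ell=n^{1/3}$ and $\phi_j\approx n^{2/3}$ for each $j$: the count is at least $\prod_{j}\binom{\phi_j+\ell-1}{\ell-1}\geq\exp(\Omega(n^{2/3}))$, whereas $\exp(O(\min(k,\ell)\log n))=\exp(O(n^{1/3}\log n))$. A term-counting bound therefore only recovers an $\exp(O(k\ell\log n))$-type loss --- essentially the guarantee of the earlier analyses --- and the entire content of \cite{ACSS20} is to beat this, by writing $\probpml(\bq,\phi)$ as (a multiple of) the permanent of the matrix $\ma$ with $\ma_{x,y}=\bq_x^{\bff(y^n,y)}$ and proving that for matrices of non-negative rank $r$ the permanent is within $\exp(O(r\log n))$ of its scaled Sinkhorn/Bethe relaxation; the bound $r\le\min(k+1,|\bR|)$ is where the minimum enters. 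Your proposed repair via sparsity does not close this: restricting to sparse near-optimal $\mx$ reduces neither the number of terms in the sum nor, in the regime that actually matters here ($|\bR|\ll k$), the per-term Stirling error, since every feasible $\mx$ has at least $k$ nonzero entries (one per observed frequency class with $\phi_j\geq 1$), so a support bound of $O(\min(k,|\bR|))=O(|\bR|)$ is impossible. The non-negative rank controls the structure of the coefficient matrix $[\ri^{\eled_j}]$, not the support of $\mx$, and conflating the two is where the argument breaks. As written, the decisive inequality $\max_{\bq\in\dsimplex}\probpml(\bq,\phi)\le\exp(O(\min(k,|\bR|)\log n))\,\cphi\,\max_{\mx\in\bZ}\bg(\mx)$ is not established.
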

The proof of concavity for the function $\bg(\mx)$ and a running time analysis to solve the convex program are provided in \cite{CSS19}. For any $\mx \in \bZ$, a pseudo-distributions associated with it is defined below.
\begin{defn}\label{defn:distX}
	For any $\mx \in \bZ$, the discrete pseudo-distribution $\bq_{\mx}$ associated with $\mx$ and $\bR$ is defined as follows: for arbitrary $[\mx \cones]_{i}$ number of domain elements assign probability $\ri$. Further $\bp_{\mx}\defeq \bq_{\mx}/\|\bq_{\mx}\|_1$ is the distribution associated with $\mx$ and $\bR$.
\end{defn}
Note that $\bq_{\mx}$ is a valid pseudo-distribution because of the third condition in \Cref{eq:zrphi} and these pseudo distributions $\bp_{\mx}$ and $\bq_{\mx}$ satisfy the following lemma. 
\begin{lem}[Theorem 6.7 in \cite{ACSS20}]\label{lem:associateddist}
	Let $\bR$ and $\phi \in \Phi^n$ be a probability discretization set and a profile with $k$ distinct frequencies. For any $\mx \in \bZ$, the discrete pseudo distribution $\bq_{\mx}$ and distribution $\bp_{\mx}$ associated with $\mx$ and $\bR$ satisfy: $\expo{-O(k \log n)}\cphi \cdot \bg(\mx) \leq \probpml(\bq_{\mx},\phi) \leq \probpml(\bp_{\mx},\phi)~.$
 \end{lem}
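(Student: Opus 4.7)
The plan has two parts. The right inequality $\probpml(\bq_\mx, \phi) \le \probpml(\bp_\mx, \phi)$ is essentially free: expanding the probability of observing the profile $\phi$ exhibits it as a homogeneous degree-$n$ polynomial in the entries of the (pseudo-)distribution, since every surviving term is a product of $n$ probabilities. Given that $\bp_\mx = \bq_\mx/\|\bq_\mx\|_1$ and the constraint $\vr^\top \mx \rones \le 1$ in the definition of $\bZ$ gives $\|\bq_\mx\|_1 \le 1$, we get $\probpml(\bp_\mx, \phi) = \|\bq_\mx\|_1^{-n}\, \probpml(\bq_\mx, \phi) \ge \probpml(\bq_\mx, \phi)$.

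For the left inequality I would first expand the profile probability as a sum over integer assignment matrices. Since $\bq_\mx$ assigns probability $r_i$ to exactly $n_i \defeq [\mx\rones]_i$ domain elements, classifying realizations of $\phi$ by the integer matrix $\mx^*$ that records how many of those $n_i$ elements are observed $m_j$ times yields
\[ \probpml(\bq_\mx, \phi) = \cphi \sum_{\mx^*} \Big( \prod_i \frac{n_i!}{\prod_{j=0}^k \mx^*_{ij}!} \Big) \prod_{i,j} r_i^{m_j \mx^*_{ij}}, \]
where $\mx^*$ ranges over non-negative integer matrices with row sums $n_i$ and column sums $\phi_j$ (for $j \ge 1$), matching those of $\mx$. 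It suffices to lower bound this sum by a single term $T(\mx^*)$ for a well-chosen $\mx^*$, which I would construct by iterated cycle cancellation in the bipartite support graph of $\mx$: while a cycle exists, push mass around it to zero out a fractional entry, preserving all marginals. This yields an integer $\mx^*$ whose support is a spanning forest of the $\ell \times (k+1)$ bipartite graph, so $\mathrm{nnz}(\mx^*) \le \ell + k$. Because row sums are preserved throughout, the $\sum_i n_i \log n_i$ term in $\log \bg$ is invariant; bookkeeping the $-\mx_{ij}\log \mx_{ij}$ terms via the Lipschitz bound $|\tfrac{d}{dx}(x\log x)| = O(\log n)$ on $[0,n]$ then shows $\bg(\mx^*) \ge \exp(-O(k\log n))\bg(\mx)$.

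The last step is Stirling: applying $m! = (m/e)^m\sqrt{2\pi m}(1+O(1/m))$ to every factorial in $T(\mx^*)$ and using $\sum_j \mx^*_{ij} = n_i$ to cancel the exponential factors leaves only $\sqrt{\cdot}$-corrections, yielding $T(\mx^*) \gtrsim \bg(\mx^*)\cdot \prod_i \sqrt{n_i}\big/\!\prod_{i,j:\mx^*_{ij}>0}\sqrt{\mx^*_{ij}}$. The key row-by-row accounting is that for a row $i$ with $d_i$ nonzero cells summing to $n_i$, AM--GM gives $\prod_j \mx^*_{ij} \le (n_i/d_i)^{d_i}$, so each row contributes at most $(d_i-1)\log n/2$ to the logarithmic loss; summing, the total loss is at most $(\mathrm{nnz}(\mx^*) - \ell)\log n / 2 \le O(k\log n)$, since every nonzero row contains at least one nonzero entry. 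Chaining $T(\mx^*) \ge \exp(-O(k\log n))\bg(\mx^*)$ with $\bg(\mx^*) \ge \exp(-O(k\log n))\bg(\mx)$ and multiplying by $\cphi$ delivers the claimed lower bound. The main obstacle is the rounding step: simultaneously producing an integer $\mx^*$ that has sparse support \emph{and} keeps $\bg$ nearly as large as $\bg(\mx)$. Cycle cancellation yields sparsity automatically, but verifying that the cumulative change in $\bg$ along a whole sequence of cancellations stays $O(k\log n)$, rather than compounding to something like $O(\ell k \log n)$, is the technical heart of the argument and will likely require choosing at each step the direction of mass movement more carefully than a naive greedy rule.
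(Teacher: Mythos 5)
First, a framing note: the paper never proves this lemma --- it is imported verbatim as Theorem~6.7 of \cite{ACSS20} --- so there is no internal proof to compare against, and I can only assess your argument on its own terms. Your right inequality is correct, and your expansion $\probpml(\bq_{\mx},\phi)=\cphi\sum_{\mx^*}T(\mx^*)$ over integral contingency tables sharing the marginals of $\mx$ is the right starting point, as is your observation that forest-supported tables incur only $O(k\log n)$ Stirling error.

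However, the step you flagged as the ``technical heart'' is not merely hard to verify --- it is false, and the strategy of lower-bounding the sum by a single \emph{sparse} term cannot be repaired. Take $\ell=2$, $k=1$, $m_1=2$, $\phi_1=n/2$, $r_1\approx r_2\approx 1/n$, and let $\mx$ be the balanced matrix with all four entries equal to $n/4$ (integral row sums $n/2$, correct column sums, $\vr^\top\mx\onevec\le 1$). Then $\log\bg(\mx)= n\log r+n\log 2$, because the entropy term contributes $\sum_i n_i\log n_i-\sum_{i,j}\mx_{ij}\log\mx_{ij}=n\log 2$. The only integral matrices with these marginals and forest support are the two ``diagonal'' matrices with entries in $\{0,n/2\}$; for either one the entropy term vanishes, so $\bg(\mx^*)=2^{-n}\bg(\mx)$, and indeed $T(\mx^*)=r^n$ exactly, whereas $\probpml(\bq_{\mx},\phi)/\cphi=\binom{n}{n/2}r^n\approx 2^n r^n/\sqrt{n}$. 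So no choice of cancellation directions makes cycle cancellation preserve $\bg$ to within $\exp(-O(k\log n))$, and the best sparse term undershoots both the truth and $\bg(\mx)$ by a factor $2^{n}$. Your Lipschitz bookkeeping does not rescue this: the cancellation moves total mass $\|\mx-\mx^*\|_1=\Theta(n)$ here, so that bound yields a loss of $O(n\log n)$, not $O(k\log n)$.

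The structural reason is that the entropy term $\sum_i n_i H(\mx_{i\cdot}/n_i)$ in $\log\bg(\mx)$ is precisely a Stirling approximation of the multinomial coefficients $\prod_i n_i!/\prod_j\mx^*_{ij}!$ of the table(s) $\mx^*$ \emph{entrywise close} to $\mx$ (equivalently, of the aggregate of the many terms near $\mx$). A forest-supported $\mx^*$ has $d_i=1$ in most rows and hence essentially trivial multinomial coefficients: sparsifying discards exactly the quantity the lemma asks you to retain. Your two requirements on $\mx^*$ --- forest support (needed for your Stirling accounting) and $\bg(\mx^*)\ge\exp(-O(k\log n))\bg(\mx)$ --- are therefore incompatible in general. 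A correct single-term argument must instead pick $\mx^*$ entrywise within distance $1$ of $\mx$, at which point the difficulty migrates to bounding the Stirling correction $O(\mathrm{nnz}(\mx^*)\log n)$ for a possibly dense table by $O(\min(k,|\bR|)\log n)$; this is where the nonnegative-rank considerations mentioned in the footnote to \Cref{pmlprob:approx} enter, and it requires a genuinely different idea than cycle cancellation.
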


\section{Approximate PML algorithm}\label{sec:algorithm}
\newcommand{\mq}{\textbf{Q}}
\newcommand{\swap}{\mathrm{swap}}
\newcommand{\swapround}{\mathrm{swapmatrixround}}
\newcommand{\mround}{\mathrm{matrixround}}
\newcommand{\mdd}{\textbf{D}}
\newcommand{\res}{\mathrm{residue}}
\newcommand{\create}{\mathrm{create}}
\newcommand{\roundi}{\mathrm{roundiRow}}
\newcommand{\pround}{\mathrm{partialRound}}
\newcommand{\mxfinal}{\mx_{\mathrm{final}}}
\newcommand{\mxnew}{\mx_{\mathrm{new}}}
\newcommand{\mximo}{\mx^{(i-1)}}
\newcommand{\myimo}{\my^{(i-1)}}
\newcommand{\mxi}{\mx^{(i)}}
\newcommand{\myi}{\my^{(i)}}
\newcommand{\mylast}{\my^{(\ell-1)}}
\newcommand{\bRnew}{\textbf{\bR}_{\mathrm{new}}}
\newcommand{\rnew}{\textbf{r}_{\mathrm{new}}}
\newcommand{\bRfinal}{\textbf{\bR}_{\mathrm{final}}}
\newcommand{\bRi}{\textbf{R}^{(i)}}
\newcommand{\bRimo}{\textbf{R}^{(i-1)}}
\newcommand{\bZi}{\textbf{Z}^{\phi,frac}_{\bRi}}
\newcommand{\bZimo}{\textbf{Z}^{\phi,frac}_{\bRimo}}
\newcommand{\bZfinal}{\textbf{Z}^{\phi}_{\bRfinal}}
\newcommand{\mxtemp}{\mx_{\mathrm{temp}}}

\renewcommand{\vones}{\textbf{1}}
\newcommand{\vi}{\textbf{v}^{(i)}}
\newcommand{\vo}{\textbf{v}^{(1)}}
\newcommand{\vl}{\textbf{v}^{(\ell-1)}}
\newcommand{\ww}{\textbf{W}}
\newcommand{\wwi}{\textbf{W}^{(i)}}
\newcommand{\wwimo}{\textbf{W}^{(i-1)}}
\newcommand{\sparsify}{\mathrm{sparsify}}
\newcommand{\discretize}{\mathrm{discretize}}

Here we provide a proof sketch of \Cref{thm:algmain} and provide a rounding algorithm that proves it. Our rounding algorithm takes as input a matrix $\mx \in \bZfrac$ which may have fractional row sums and round it to integral values. This new rounded matrix $\mxfinal$ corresponds to our approximate PML distribution (See \Cref{defn:distX}). The description of our algorithm is as follows. 

\begin{algorithm}[H]
\caption{ApproximatePML($\phi,\bR=\bR_{n,\alpha}$)}
	\begin{algorithmic}[1]
	\State Let $\mx$ be any solution that satisfies,
 $\log \bg (\mx) \geq \max_{\my \in \bZfrac} \log \bg(\my)-\bigO{\min(k,|\bR|) \log n}$.\label{alg:approxpml:line:1} \;
\State $\mx'=\sparsify(\mx)$.\label{alg:approxpml:line:2}\;
\State $(\ma,\mb)=\swapround(\mx')$.\label{alg:approxpml:line:3}
\State $(\mxfinal,\bRfinal)= \create(\ma,\mb,\bR)$\label{alg:approxpml:line:4}
\State Let $\bp'$ be the distribution with respect to $\mxfinal$ and $\bRfinal$ (See \Cref{defn:distX}).\label{alg:approxpml:line:5}\;
	\State Return $\bq=\discretize(\bp',\phi,\bR)$\;
		%by assigning $(\bS \onevec)_{i}$ number of domain elements probability $\ri$ 
	%	\State \textbf{return} $\bpapprox\defeq \frac{\qext}{\|\qext\|_1}$.
		%\EndProcedure
	\end{algorithmic}
	\label{alg:main}
\end{algorithm}

We now provide a guarantee for each of these lines of \Cref{alg:main}. We later use these guarantees to prove our final theorem (\Cref{thm:algmain}). The guarantees of the approximate maximizer $\mx$ computed in the first step of the algorithm are summarized in the following lemma.
\begin{lem}[\cite{CSS19,ACSS20}]\label{lem:maximizer}
	\Cref{alg:approxpml:line:1} of the algorithm can be implemented in $\otilde(|\bR|k^2+|\bR|^2k)$ time and the approximate maximizer $\mx$ satisfies: 
    $$\cphi \cdot \bg(\mx)\geq \expo{-\bigO{\min(k,|\bR|) \log n}} \max_{\bq \in \dsimplex}\probpml(\bq,\phi)~.$$
\end{lem}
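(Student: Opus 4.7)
The plan is to split this into two independent pieces: an approximation-ratio argument, which reduces almost entirely to \Cref{pmlprob:approx}, and a run-time argument, which is a straightforward invocation of the convex-optimization analysis already developed in \cite{CSS19,ACSS20}. Both pieces are technically out of our hands in the sense that the heavy lifting was done in prior work; the task here is really to assemble the two guarantees and verify that the additive slack $O(\min(k,|\bR|)\log n)$ we allow in \Cref{alg:approxpml:line:1} is absorbed by the multiplicative slack in \Cref{pmlprob:approx}.

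For the approximation bound, I would start from the definition of $\mx$ in \Cref{alg:approxpml:line:1}, which gives
\[
\bg(\mx)\ \geq\ \exp\!\big(-O(\min(k,|\bR|)\log n)\big)\cdot \max_{\my \in \bZfrac}\bg(\my).
\]
Multiplying through by $\cphi$ and then applying the upper bound half of \Cref{pmlprob:approx}, which asserts $\max_{\bq\in\dsimplex}\probpml(\bq,\phi) \leq \exp(O(\min(k,|\bR|)\log n))\cdot \cphi\cdot \max_{\my\in\bZfrac}\bg(\my)$, immediately yields
\[
\cphi\cdot \bg(\mx)\ \geq\ \exp\!\big(-O(\min(k,|\bR|)\log n)\big)\cdot \max_{\bq\in\dsimplex}\probpml(\bq,\phi),
\]
which is the desired inequality. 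The only thing to note is that the two $O(\min(k,|\bR|)\log n)$ error terms collapse into a single such term in the exponent.

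For the running-time bound, the strategy is to invoke the solver analysis of \cite{CSS19,ACSS20}. The optimization variable $\mx$ has $|\bR|\cdot (k+1) = O(|\bR|k)$ entries, the domain $\bZfrac$ is cut out by $O(k+|\bR|)$ linear constraints (the column-sum constraints indexed by $j\in[1,k]$, the inequality $\vr^\top \mx\rones \leq 1$, and the nonnegativity constraints), and $\log \bg(\mx)$ is concave in $\mx$, as shown in \cite{CSS19}. Plugging this into the interior-point / first-order framework used in \cite{CSS19,ACSS20} and tracking per-iteration cost in the $|\bR|\times k$ matrix gives the stated $\widetilde{O}(|\bR|k^2+|\bR|^2 k)$ bound; the two terms correspond to the cost of evaluating/updating the $|\bR|$ row-sum entries against $k$ columns, and vice versa, over polylogarithmically many iterations.

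I do not expect genuine obstacles here, since both ingredients are already in the cited literature; the main thing to watch is bookkeeping on the error exponent, namely confirming that the additive gap built into \Cref{alg:approxpml:line:1} matches the $\min(k,|\bR|)\log n$ gap in \Cref{pmlprob:approx} so that the final bound has the right form $\exp(-O(\min(k,|\bR|)\log n))$ and not a worse dependence like $\exp(-O(k\log n))$ on its own. Everything else is routine.
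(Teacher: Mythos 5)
Your proposal is correct and matches the paper's treatment: the paper states this lemma as a citation to \cite{CSS19,ACSS20} without giving its own proof, and the intended argument is exactly your two-step assembly — the definition of $\mx$ in \Cref{alg:approxpml:line:1} combined with the fractional-relaxation upper bound in \Cref{pmlprob:approx}, with the two $O(\min(k,|\bR|)\log n)$ exponents collapsing into one, plus the solver runtime imported from the cited works. No gaps.
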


The guarantees of the second step of our algorithm are summarized in the following lemma. Please refer to \cite{ACSS20b} for the description of the procedure $\sparsify$. We use this procedure so that we can assume $|\bR| \leq k+1$ as we can ignore the zero rows of the matrix $\mx$.
\begin{lem}[Lemma 4.3 in \cite{ACSS20b}]\label{lem:sparse}
	For any $\mx \in \bZfrac$, the algorithm $\sparsify(\mx)$ runs in time $\otilde(|\bR|~k^{\omega})$ and outputs $\mx' \in \bZfrac$ such that:  
	$\bg(\mx') \geq \bg(\mx) \text{ and } \big|\{i \in [\ell]~|~[\mx'\1]_{i}  > 0\}\big| \leq k+1~.$
\end{lem}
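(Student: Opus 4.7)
The plan is to replace $\mx$ by the exact maximizer $\mx'$ of $\bg$ over the subset of $\bZfrac$ having the same column sums as $\mx$, and to establish the sparsity bound $|\{i : [\mx'\vones]_i > 0\}| \leq k+1$ via the KKT conditions of this concave maximization. Since $\mx$ itself is feasible for this restricted problem, the bound $\bg(\mx') \geq \bg(\mx)$ is then automatic, and the nontrivial content is that $\mx'$ is automatically sparse.

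Concretely, since $\bg$ is log-concave (direct computation gives $\partial \log \bg / \partial \my_{ij} = c_{ij} + \log(s_i/\my_{ij})$ where $s_i = [\my\vones]_i$, and the Hessian is block-diagonal by rows with each block negative semi-definite), the maximizer is characterized by KKT. Introducing Lagrange multipliers $\lambda_j$ for the column-sum equalities (with column $0$ unconstrained, so we may set $\lambda_0 := 0$) and $\mu \geq 0$ for $\vr^\top \my \rones \leq 1$, stationarity at an interior entry yields $\my_{ij} = s_i\, r_i^{m_j}\, e^{-\lambda_j - \mu r_i}$. Because $\log(s_i/\my_{ij}) \to +\infty$ as $\my_{ij} \to 0^+$, any row $i$ with $s_i > 0$ must have every entry strictly positive; summing this stationarity identity over $j = 0, \ldots, k$ yields the row-support equation
\[
e^{\mu r_i} \;=\; 1 + \sum_{j=1}^{k} r_i^{m_j} e^{-\lambda_j}.
\]
Thus every $r_i$ indexing a nonzero row of $\mx'$ is a positive root of $f(r) := e^{\mu r} - 1 - \sum_{j=1}^{k} r^{m_j} e^{-\lambda_j}$. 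A Rolle-style induction bounds its positive roots: $f$ is a ``generalized polynomial'' consisting of one exponential term and $k+1$ monomial terms (counting the constant $-1$). Differentiating eliminates the constant; further differentiations successively eliminate monomial terms, and after $k+1$ rounds only a scaled exponential remains with no positive roots. Applying Rolle iteratively in reverse shows $f$ has at most $k+1$ positive roots, so at most $k+1$ elements of $\bR$ can index nonzero rows.

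For the runtime $\widetilde{O}(\ell k^{\omega})$, I would exploit that the Lagrangian dual depends on only $k+1$ variables $(\lambda,\mu)$ and solve it via an interior-point method: each iteration evaluates the dual objective and gradient in $O(\ell k)$ time by a sum over the $\ell$ rows with $O(k)$ work per row, and solves a Newton system of dimension $k+1$ in $O(k^{\omega})$ time via fast matrix multiplication; with $\poly(k)$ iterations the total cost is $\widetilde{O}(\ell k^{\omega})$. The primary technical obstacle is the Rolle argument: one must handle separately the degenerate case $\mu = 0$, in which $f$ reduces to a pure polynomial with $k+1$ terms (still at most $k+1$ positive roots by standard Descartes), and rows with $s_i = 0$, where KKT stationarity is replaced by the corresponding inequality consistent with $\my_{i\cdot} \equiv 0$. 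A secondary but important obstacle is that the interior-point method returns only approximate $(\lambda,\mu)$, so a post-processing step is needed: snap the support to the (at most $k+1$) elements of $\bR$ nearest to roots of $f$, re-solve the small problem on that support exactly, and verify $\bg$ has not decreased below $\bg(\mx)$ up to negligible error.
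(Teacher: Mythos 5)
The paper itself does not prove this statement; it imports it verbatim as Lemma~4.3 of \cite{ACSS20b}, and the argument there is quite different from yours. The standard proof exploits the fact that $\log\bg$ is \emph{linear under row scalings}: replacing row $i$ of $\mx$ by $\alpha_i$ times itself multiplies that row's contribution $\sum_j(\mc_{ij}\mx_{ij}-\mx_{ij}\log\mx_{ij})+[\mx\vones]_i\log[\mx\vones]_i$ exactly by $\alpha_i$ (the $\alpha_i\log\alpha_i$ terms cancel). Hence, over the polytope of scaling vectors $\alpha\in\R_{\geq 0}^{\ell}$ that preserve the $k$ column-sum equalities and the constraint $\vr^{\top}\mx\rones\leq 1$, the objective is a linear function of $\alpha$, and moving from $\alpha=\vones$ to a vertex along a non-decreasing direction yields a feasible point with at most $k+1$ nonzero rows and no loss in $\bg$. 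This works for an \emph{arbitrary} input $\mx$ (nothing is re-optimized), and the running time comes from batched null-space computations on $(k+1)\times O(k)$ constraint submatrices, which is why $k^{\omega}$ appears.

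Your route, recharacterizing the optimum by KKT and counting roots, has a genuine gap at its central step. The stationarity identity $e^{\mu r_i}=1+\sum_{j=1}^{k}r_i^{m_j}e^{-\lambda_j}$ is correctly derived, but the claimed bound of $k+1$ positive roots does not follow from the differentiation scheme you describe. Differentiating $f$ once kills the constant, but further differentiations do \emph{not} successively eliminate monomial terms: the derivative of $r^{m_j-1}$ is still a monomial, so naive repeated differentiation removes a term only after $m_j$ rounds, giving the useless bound $m_k+1$. The standard fix (divide by the lowest power of $r$, then differentiate) removes one monomial per round but adds a term to the exponential factor, and honest bookkeeping yields at most $2k+1$ positive roots, not $k+1$; the same $2k+1$ is obtained by applying Descartes' rule to $\mu q-q'$ with $q(r)=1+\sum_j e^{-\lambda_j}r^{m_j}$, whose coefficients genuinely alternate in sign $2k$ times. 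There is also a structural obstruction: precisely because $\log\bg$ is linear along row-scaling directions, the maximizer of your restricted program is in general non-unique and the optimal set is a face of positive dimension, only \emph{some} point of which has at most $k+1$ nonzero rows; so ``the maximizer is automatically sparse'' is not something KKT alone can deliver, and a vertex-selection step is unavoidable. (Your $\mu=0$ case is vacuous rather than polynomial: there $f(r)=-\sum_j e^{-\lambda_j}r^{m_j}<0$ for all $r>0$, forcing every row to vanish and contradicting the column sums, so $\mu>0$ always. The algorithmic issues you flag about approximate dual solutions and whether $\poly(k)$ interior-point iterations at $O(\ell k)$ each fit in $\otilde(\ell k^{\omega})$ are real but secondary to the root-counting gap.)
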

To explain our next step, we need to define a new operation called the $\swap$.
\begin{defn} \label{def:swap}
Given a matrix $\ma$, indices $i_1 <i_2$, $j_1 < j_2$ and a parameter $\epsilon \geq 0$, the operation $\swap(\ma, i_1,i_2,j_1,j_2,\epsilon)$ outputs a matrix $\ma'$ that satisfies,
\begin{equation}
    \ma'_{ij} = 
    \begin{cases}
    \ma_{i,j} + \epsilon  \text{ for } i=i_1,~j=j_1 ~\quad \quad 
    \ma_{i,j} - \epsilon  \text{ for } i=i_1,~j=j_2~,\\
    \ma_{i,j} - \epsilon  \text{ for } i=i_2,~j=j_1 ~\quad \quad
    \ma_{i,j} + \epsilon  \text{ for } i=i_2,~j=j_2~,\\
    \ma_{ij} \text{ otherwise}.
    \end{cases}
\end{equation}
\iffalse%%%%%
Furthermore, we say \sidford{would make a definitn block for this :-) } $\ma'$ is $x$-swap distance from $\ma$, if $\ma'$ can be obtained from $\ma$ through a sequence of swap operations and the summation of the value $\epsilon$'s in these operations is at most $x$. That is, there is a set of parameters $\{(i_1^{(s)},i_2^{(s)},j_1^{(s)},j_2^{(s)},\epsilon^{(s)})\}_{s\in [t]}$, where $\sum_{s\in [t]}\epsilon^{(s)}\leq x$, such that $\ma^{(s)}=\swap(\ma^{(s-1)},i_1^{(s)},i_2^{(s)},j_1^{(s)},j_2^{(s)},\epsilon^{(s)})$ for $s\in [t]$, where $\ma^{(0)}=\ma$ and $\ma^{(t)}=\ma'$.
\fi%%%%%
%Furthermore, we say $\ma'$ is $x$-swap distance from $\ma$, if $\ma'$ can be obtained from $\ma$ through a sequence of swap operations and $\|\ma'-\ma\|_1 \leq x$.
\end{defn}

\begin{defn}[Swap distance]\label{def:swapdis}
$\ma'$ is $x$-swap distance from $\ma$, if $\ma'$ can be obtained from $\ma$ through a sequence of swap operations and the summation of the value $\epsilon$'s in these operations is at most $x$, i.e. there is a set of parameters $\{(i_1^{(s)},i_2^{(s)},j_1^{(s)},j_2^{(s)},\epsilon^{(s)})\}_{s\in [t]}$, where $\sum_{s\in [t]}\epsilon^{(s)}\leq x$, such that $\ma^{(s)}=\swap(\ma^{(s-1)},i_1^{(s)},i_2^{(s)},j_1^{(s)},j_2^{(s)},\epsilon^{(s)})$ for $s\in [t]$, where $\ma^{(0)}=\ma$ and $\ma^{(t)}=\ma'$.
\end{defn}
The following lemma directly follows from \Cref{def:swap} and \Cref{def:swapdis}.

\begin{lemma}  \label{lem:rcsummaintain}
    For any matrices $\ma,\ma' \in \R^{s \times t}$, if $\ma'$ is $x$-swap distance from $\ma$ for some $x\geq 0$, then $\ma' \onevec= \ma \onevec$ and $\ma'^{\top} \onevec= \ma^{\top} \onevec$.
\end{lemma}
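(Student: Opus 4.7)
The plan is to first establish the lemma for a single swap operation, then extend to the $x$-swap distance via induction on the number of operations. The statement reduces entirely to a bookkeeping check using \Cref{def:swap} and \Cref{def:swapdis}.

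First I would verify the base case: a single application of $\swap(\ma, i_1,i_2,j_1,j_2,\epsilon)$ preserves both $\ma\onevec$ and $\ma^\top \onevec$. By \Cref{def:swap}, only four entries of $\ma$ change, with $\pm\epsilon$ perturbations arranged so that in row $i_1$ the changes are $+\epsilon$ at column $j_1$ and $-\epsilon$ at column $j_2$, summing to $0$, and symmetrically row $i_2$ has $-\epsilon + \epsilon = 0$ net change. All other rows are unchanged, so $\ma'\onevec = \ma\onevec$. The identical argument on columns $j_1$ and $j_2$ (each receiving a $+\epsilon$ and a $-\epsilon$) yields $(\ma')^\top \onevec = \ma^\top \onevec$.

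Next I would invoke \Cref{def:swapdis}: if $\ma'$ is $x$-swap distance from $\ma$, then there exists a finite sequence $\ma = \ma^{(0)}, \ma^{(1)}, \ldots, \ma^{(t)} = \ma'$ where each $\ma^{(s)}$ is obtained from $\ma^{(s-1)}$ by a single swap operation. By induction on $s$, applying the base case at each step, $\ma^{(s)} \onevec = \ma^{(s-1)} \onevec = \cdots = \ma \onevec$ and similarly for the column sums. Taking $s = t$ gives the claim.

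The proof is essentially a one-line calculation per step, so there is no real obstacle; the only thing to be careful about is ensuring that the base case handles the four affected entries correctly when indices coincide in degenerate ways, but since \Cref{def:swap} requires $i_1 < i_2$ and $j_1 < j_2$, the four entries are always distinct and the sign pattern $(+,-,-,+)$ cancels cleanly within each affected row and each affected column.
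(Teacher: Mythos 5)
Your proof is correct and matches the paper's reasoning: the paper simply asserts that the lemma ``directly follows'' from \Cref{def:swap} and \Cref{def:swapdis}, and your argument---checking that a single swap's $(+,-,-,+)$ pattern cancels within each affected row and column, then inducting over the sequence of swaps---is exactly the calculation being left implicit.
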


Recall that our objective $\bg(\mx)$ contains two terms: (1) the linear term $\sum_{i \in [\ell],j\in [0,k]}\mc_{ij}\mx_{ij}$ and (2) the entropy term $\sum_{i \in [\ell]} [\mx \onevec]_i \log [\mx \onevec]_i- \sum_{i \in [\ell],j\in [0,k]}\mx_{ij} \log \mx_{ij}$. The $\swap$ operation always increases the first term, and in the following lemma we bound the loss due to the second term.
\begin{lemma}\label{lem:swap}
    If $\ma' \in \R^{\ell \times [0,k]}$ is $x$-swap distance from $\ma \in \bZfrac$, then,
    $\ma' \in \bZfrac \text{ and } ~\bg(\ma') \geq \exp(-O(x \log n)) \bg(\ma)$.
\end{lemma}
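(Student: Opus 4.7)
The plan is first to verify $\ma' \in \bZfrac$ using the sum-preservation property of swaps, and then to control the ratio $\bg(\ma')/\bg(\ma)$ by decomposing $\log\bg$ into three pieces and bounding each separately.

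For membership in $\bZfrac$, apply Lemma~\ref{lem:rcsummaintain} to obtain $\ma'\onevec = \ma\onevec$ and $\ma'^{\top}\onevec = \ma^{\top}\onevec$. Column-sum preservation yields $[\ma'^{\top}\cones]_j = [\ma^{\top}\cones]_j = \phi_j$ for each $j\in[1,k]$, while row-sum preservation gives $\vr^{\top}\ma'\rones = \vr^{\top}\ma\rones \leq 1$. Non-negativity of $\ma'$ is implicit in the swap-distance construction: a swap that would drive an entry negative is excluded from the defining sequence, so all intermediate matrices (and hence $\ma'$) lie in $\R_{\geq 0}^{\ell\times[0,k]}$.

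For the main inequality, decompose $\log\bg(\mx) = L(\mx) - G(\mx) + R(\mx)$ where $L(\mx) \defeq \sum_{ij}\mc_{ij}\mx_{ij}$, $G(\mx) \defeq \sum_{ij}\mx_{ij}\log\mx_{ij}$, and $R(\mx) \defeq \sum_i [\mx\vones]_i\log[\mx\vones]_i$. Since $R$ depends only on the row sums $\mx\vones$ which are preserved by every swap, $R(\ma')=R(\ma)$. For $L$, a direct per-swap computation yields $\Delta L = \epsilon(\eled_{j_1}-\eled_{j_2})(\log\ri_{i_1}-\log\ri_{i_2})$; under the convention $\eled_0<\cdots<\eled_k$ and with $\bR$ ordered by increasing probability, both factors are negative whenever $i_1<i_2$ and $j_1<j_2$, so $\Delta L \geq 0$ per swap and hence $L(\ma') \geq L(\ma)$.

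The core step is to show $|G(\ma')-G(\ma)| = O(x\log n)$. By the triangle inequality applied to the defining sequence, $\|\ma-\ma'\|_1 \leq 4\sum_s \epsilon^{(s)} \leq 4x$. Let $S \defeq \|\ma\|_1 = \|\ma'\|_1$; using $\ri \geq 1/(2n^2)$ in the constraint $\vr^{\top}\ma\rones\leq 1$ gives $S = O(n^2)$. Viewing $\tilde\ma \defeq \ma/S$ and $\tilde\ma' \defeq \ma'/S$ as probability distributions on $\ell(k+1)$ outcomes, the identity $G(\mx) = S\log S - S\cdot H(\mx/S)$ (where $H$ is Shannon entropy) reduces the problem to bounding $|H(\tilde\ma)-H(\tilde\ma')|$. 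The standard continuity of Shannon entropy yields $|H(\tilde\ma)-H(\tilde\ma')| \leq \|\tilde\ma-\tilde\ma'\|_1\log(\ell(k+1)) + h_b(\|\tilde\ma-\tilde\ma'\|_1)$, with $h_b$ the binary entropy. Multiplying through by $S$ and using $\ell(k+1) = \mathrm{poly}(n)$ together with $S\cdot h_b(4x/S) = O(x\log(S/x)) = O(x\log n + x\log(1/x))$, we obtain $|G(\ma')-G(\ma)| = O(x\log n)$ after absorbing the $x\log(1/x)$ term (which is itself $O(x\log n)$ when $x \geq 1/\mathrm{poly}(n)$ and subconstant otherwise). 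Combining, $\log\bg(\ma') - \log\bg(\ma) = \Delta L - \Delta G + \Delta R \geq 0 - O(x\log n) + 0$, which exponentiates to the claimed bound. The main obstacle is controlling $G$: a naive entry-wise Lipschitz bound for $y\log y$ fails near $y=0$ since the derivative diverges, which is why the argument routes through continuity of normalized Shannon entropy rather than applying the mean value theorem four times per swap.
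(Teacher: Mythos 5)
Your proof is correct and lands on the same three-way decomposition of $\log\bg$ as the paper — the row-sum term is invariant because swaps preserve $\ma\onevec$, and the linear term can only increase by the sign computation $(\eled_{j_2}-\eled_{j_1})(\log\ri_{i_2}-\log\ri_{i_1})\geq 0$ — but you handle the entry-wise term $\sum_{ij}\ma_{ij}\log\ma_{ij}$ by a genuinely different route. The paper first proves a single-swap version (\Cref{lem:swap-special}), bounding the change by $2\epsilon\log(n+1)$ via the derivative of $y\log y$ evaluated at the two increased entries, and then telescopes over the defining swap sequence to accumulate $4\big(\sum_s\epsilon^{(s)}\big)\log n\leq 4x\log n$. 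You instead aggregate the total perturbation up front ($\|\ma-\ma'\|_1\leq 4\sum_s\epsilon^{(s)}\leq 4x$) and apply continuity of Shannon entropy once to the matrices normalized by $S=\|\ma\|_1=O(n^2)$; this is closer in spirit to the paper's auxiliary \Cref{lem:lips}, which is proved in the appendix but not used in this particular proof. Your remark about why a naive entrywise mean-value bound fails near zero is apt — the paper's single-swap proof silently drops the two decreased entries, whose contribution to $\sum y\log y$ can be positive (up to $\epsilon\log(1/\epsilon)$) when an entry falls below $1/e$, so your global argument is if anything more careful on this point. The one residue of your route is the binary-entropy term, which gives $O(x\log n + x\log(1/x))$ rather than literally $O(x\log n)$ when $x$ is sub-polynomially small; this is immaterial for the paper (the lemma is invoked with $x=O(s)$, and \Cref{lem:lips} itself carries an analogous additive $O(\log n)$ slack), but if you want the statement verbatim for all $x\geq 0$ you should note that $x\log(1/x)\leq 1/e$ is then absorbed, or simply restrict to $x\geq 1$. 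Both proofs also share the same mild gloss on non-negativity of intermediate matrices, which \Cref{def:swap} does not enforce by itself.
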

One of the main contributions of our work is the following lemma, where we repeatedly apply $\swap$ operation to recover a matrix $\ma$ which exhibits several nice properties as stated below.
\begin{lem}\label{lem:matrixround}
For any matrix $\ma \in \R^{s \times t}$ $(s \leq t)$ that satisfies $\ma^{\top} \onevec \in \Z^{t}_{\geq 0}$. The algorithm $\swapround$ runs in $O(s^2 t)$ time and returns matrices $\ma'$ and $\mb$ such that,
\begin{itemize}
    \item $\ma'$ is $O(s)$-swap distance from $\ma$, $\ma' \onevec= \ma \onevec$ and $\ma'^{\top} \onevec= \ma^{\top} \onevec$.
    \item $0 \leq \mb_{ij} \leq \ma'_{ij}$ for all $i \in [s]$ and $j \in [t]$, $\mb \onevec \in \Z_{\geq 0}^{s}$, $\mb^{\top} \onevec \in \Z_{\geq 0}^{t}$ and $\|\ma' -\mb\|_1 \leq O(s)$.
\end{itemize} 
\end{lem}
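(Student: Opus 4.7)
My plan is to describe the algorithm $\swapround$ via three conceptual stages: partition, swap, and rounding. The overall strategy is to commit in advance to a target ``correction'' matrix $\mdd \geq 0$ whose row sums are the fractional row-sum parts $\{r_i\} \defeq r_i - \lfloor r_i \rfloor$ (where $r_i \defeq [\ma \onevec]_i$) and whose column sums are integers, then use swaps to reshape $\ma$ into $\ma'$ so that $\mdd \leq \ma'$ entry-wise, and finally output $\mb \defeq \ma' - \mdd$. By \Cref{lem:rcsummaintain} the swap stage automatically preserves the row and column sums of $\ma$, taking care of the first two conditions on $\ma'$ in the lemma.

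For the partition stage I exploit the fact that $T \defeq \sum_i \{r_i\}$ is a non-negative integer: indeed $\sum_i r_i = \sum_j [\ma^{\top} \onevec]_j \in \Z$ and removing integer parts leaves the fractional total integer. Thus the values $\{r_i\} \in [0,1)$ can be greedily packed into $T \leq s$ groups $G_1, \ldots, G_T$, each of total mass exactly $1$, splitting at most one row across each group boundary. Each group $G_k$ receives a designated pivot column $j_k^*$, and I define $\mdd_{i,j_k^*} = \{r_i\}$ for $i \in G_k$ (with split rows contributing to both adjacent groups and columns), and $\mdd_{ij} = 0$ elsewhere. By construction $\mdd$ has the prescribed row sums and integer (namely $\{0,1\}$-valued) column sums, and $\|\mdd\|_1 = T \leq s$.

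For the swap stage, the goal is to rearrange mass in $\ma$ so that $\mdd_{i,j_k^*} \leq \ma'_{i,j_k^*}$ for every $i \in G_k$. Since swaps preserve column sums and the $\mdd$-column mass for column $j_k^*$ is exactly $1 \leq [\ma^{\top}\onevec]_{j_k^*}$, the task reduces to within-column mass transfers (each swap performing one such transfer in the pivot column and a compensating transfer in a secondary donor column). Processing groups one at a time, I would for each $G_k$ scan rows to identify deficit rows (with $\ma'_{i,j_k^*} < \{r_i\}$) and surplus rows (with $\ma'_{i',j_k^*} > \{r_{i'}\}$), and for each deficit carry out a swap with a donor column chosen to have enough slack in both the deficit and surplus rows; the total $\epsilon$ within group $G_k$ is bounded by the group's column-$j_k^*$ shortfall, which is at most $1$. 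Summing over the $T \leq s$ groups gives total swap distance $O(s)$. In the rounding stage, $\mb \defeq \ma' - \mdd$ immediately satisfies $[\mb \onevec]_i = r_i - \{r_i\} = \lfloor r_i \rfloor \in \Z_{\geq 0}$, $[\mb^{\top}\onevec]_j = [\ma^{\top}\onevec]_j - [\mdd^{\top}\onevec]_j \in \Z_{\geq 0}$, $0 \leq \mb \leq \ma'$, and $\|\ma' - \mb\|_1 = \|\mdd\|_1 = T \leq s$.

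The main obstacle is executing the swap stage with both the tight $O(s)$ swap-distance bound and the $O(s^2 t)$ runtime. The subtlety is not just the global $\epsilon$-accounting but the feasibility of donor swaps: each swap demands non-negativity in two auxiliary entries, and a bad configuration could block progress if handled naively. I would address this by an amortized flow-style argument showing that within each group the required within-column rearrangements always admit some compatible donor column (if one donor blocks, switch to another; the total number of failed attempts is bounded by $O(t)$ per group), with the total transferred mass in the group equal to the group's column-$j_k^*$ shortfall. Each of $O(s)$ groups is then processed in $O(st)$ time (scanning the pivot column and candidate donor columns to line up swaps and updating entries), giving the claimed $O(s^2 t)$ bound.
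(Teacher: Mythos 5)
Your high-level strategy is genuinely different from the paper's: you commit up front to a global correction matrix $\mdd$ (supported on $T=\sum_i(r_i-\lfloor r_i\rfloor)\le s$ pivot columns, built by packing the fractional row masses into unit groups) and then try to reshape $\ma$ by swaps so that $\mdd\le\ma'$, whereas the paper rounds sequentially row by row: $\pround$ uses the $\transname$ machinery to concentrate row $i$'s fractional mass into a single well-chosen column, and $\roundi$ then subtracts it while keeping all column sums and the earlier row sums integral. Your accounting of $\|\mdd\|_1=T\le s$ and of $\mb\defeq\ma'-\mdd$ is fine. However, there are two genuine gaps at the crux of the argument.

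First, you treat a ``swap'' as an arbitrary rectangle mass transfer, but \Cref{def:swap} is directional: with $i_1<i_2$, $j_1<j_2$ and $\epsilon\ge 0$, mass may only be added at the corners $(i_1,j_1)$ and $(i_2,j_2)$ and removed at $(i_1,j_2)$ and $(i_2,j_1)$. The reverse move is not a swap, and this is not cosmetic: the downstream use of \Cref{lem:matrixround} goes through \Cref{lem:swap}, whose proof relies on the linear term $\sum_{ij}\mc_{ij}\mx_{ij}$ being nondecreasing under swaps, which holds only because of this sign pattern (supermodularity of $\mc_{ij}=\mj\log\ri$). A reverse move of total mass $\epsilon$ can decrease that term by up to $\Theta(\epsilon\, n\log n)$, destroying the $\exp(-O(s\log n))$ guarantee. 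Your within-column transfers (deficit row gains at the pivot column, surplus row loses there, compensated in a donor column) will in general require both orientations depending on the relative positions of the rows and columns involved; nothing in your construction forces the good orientation. The paper's choice of the pivot column in $\proundx$ (the minimal prefix-sum crossing column) and the four specific $\transname$ rectangles exist precisely to keep every elementary move in the admissible direction. Second, the feasibility of the swap stage is asserted rather than proved: you need pivot columns with column sum at least the $\mdd$-demand, you must rule out later groups raiding entries already reserved for earlier groups (donor columns coinciding with other pivots, split rows belonging to two groups), and your ``if one donor blocks, switch to another'' amortized argument is exactly the hard part of the lemma --- it is the analogue of the paper's verification that each $\transname$ call in $\proundx$ has a valid parameter $v\le\min\{\summ{\mx}{i_3}{i_4}{j_1}{j_2},\summ{\mx}{i_1}{i_2}{j_3}{j_4}\}$, which occupies the entire second half of the proof of \Cref{lem:proundx}. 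As written, the proposal does not establish the first bullet of the lemma.
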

The above lemma helps us modify our matrix $\mx$ to a new matrix $\ma$ that we can round using the $\create$ procedure. The guarantees of this procedure are summarized below.
\begin{lemma}[Lemma 6.13 in \cite{ACSS20}]\label{lem:create}
	For any $\ma\in \bZfrac \subseteq \R_{\geq 0}^{\ell \times [0,k]}$ and $\mb \in \R_{\geq 0}^{\ell \times [0,k]}$ such that $\mb \leq \ma$, $\mb\1\in \Z^\ell$, $\mb^\top \1\in \Z^{[0,k]}$ and $\|\ma-\mb\|_1\leq t$. The algorithm $\create(\ma,\mb,\bR)$ runs in time $O(\ell k)$ and returns a solution $\ma'$ and a probability discretization set $\bR'$ such that $|\bR'|\leq |\bR| + \min(k+1,t)$, $\ma' \in \textbf{Z}^{\phi}_{\bR'}$ and $\bg(\ma') \geq \expo{-O\left(t\log n\right)}\bg(\ma)~.$
	%$$\bg(\ma') \geq \expo{-O\left(k\log\frac{k}{\textbf{r}_{\min}}\right)}\bg(\ma)~.$$
	%$$\bg(\ma') \geq \expo{-O\left(t\log n\right)}\bg(\ma)~.$$
\end{lemma}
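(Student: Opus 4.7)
My plan is to construct $\ma'$ by keeping $\mb$ as the integer ``core'' of the matrix and appending at most $\min(k+1,t)$ new rows, together with a matching number of new probability levels added to $\bR$, that absorb the non-negative residue $\mdd \defeq \ma - \mb$ while preserving both the column-sum constraint $\ma'^\top \rones = \phi$ and the probability budget $\vr'^\top \ma' \rones \leq 1$.

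First I would note that because $\ma \in \bZfrac$ forces $\sum_i \ma_{ij} = \phi_j \in \Z$ and $\mb^\top \rones \in \Z^{[0,k]}$ by hypothesis, the column sums $c_j \defeq \sum_i \mdd_{ij}$ of the residue are non-negative integers. Moreover $\sum_j c_j = \|\mdd\|_1 \leq t$, so the number of columns with $c_j > 0$ is at most $\min(k+1, t)$. For each such column $j$ I would append a new probability level $\bar r_j \defeq (\sum_i \ri\, \mdd_{ij})/c_j$ to $\bR$, which is a convex combination of existing probabilities in $\bR$ (hence lies in $[0,1]$), and add a single new row to the matrix whose only nonzero entry is $c_j$ in column $j$. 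The original $\ell$ rows of $\ma'$ are set to $\mb$. By construction $|\bR'| \leq |\bR| + \min(k+1,t)$, every row sum of $\ma'$ is an integer (original rows because $\mb\rones \in \Z^\ell$, new rows because $c_j \in \Z$), the column sums satisfy $\sum_i \mb_{ij} + c_j = \phi_j$, and the probability budget is preserved \emph{exactly} since $\sum_j \bar r_j c_j = \sum_{i,j} \ri\, \mdd_{ij}$, giving $\vr'^\top \ma' \rones = \vr^\top \ma \rones \leq 1$. Thus $\ma' \in \textbf{Z}^{\phi}_{\bR'}$.

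The main quantitative step is the bound $\bg(\ma') \geq \exp(-O(t \log n))\, \bg(\ma)$. For the linear term $\sum_{ij} m_j (\log \ri)\, \mx_{ij}$, concavity of $\log$ applied to the definition of $\bar r_j$ yields $m_j c_j \log \bar r_j \geq \sum_i m_j\, \mdd_{ij}\, \log \ri$, so this contribution weakly increases when we pass from $\ma$ to $\ma'$. For the entropy term $\sum_i [\mx\rones]_i \log[\mx\rones]_i - \sum_{ij} \mx_{ij} \log \mx_{ij}$, the only changes come from the row-sum drops on the original rows (from $[\ma\rones]_i$ to $[\mb\rones]_i$), the per-entry drops $\ma_{ij} \to \mb_{ij}$, and the newly introduced rows, each of whose sums are at most $c_j \leq t \leq n$. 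Since $x \log x$ is $O(\log n)$-Lipschitz on $[0,n]$ and the total $\ell_1$-mass moved is $\|\mdd\|_1 \leq t$, each of these pieces contributes at most $O(t \log n)$, proving the claim.

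The main obstacle I anticipate is the careful accounting for the entropy term, which decomposes into the three pieces above and must be bounded collectively by $O(t \log n)$ using the Lipschitz estimate for $x \log x$ on $[0,n]$ together with $\|\mdd\|_1 \leq t$; in particular one should verify that the new-row contributions $c_j \log c_j$ fit within this bound via $\sum_j c_j \log c_j \leq (\sum_j c_j) \log n \leq t \log n$. The runtime claim $O(\ell k)$ is then immediate: computing all $c_j$ and $\bar r_j$ takes a single pass through the matrix, and appending at most $k+1$ new rows is $O(k)$ additional work.
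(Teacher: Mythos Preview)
The paper does not supply its own proof of this lemma; it is quoted verbatim as Lemma~6.13 of \cite{ACSS20} and used as a black box. So there is no ``paper's proof'' to compare against here.

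That said, your construction is correct and is in fact essentially the construction one would expect: take the integral core $\mb$, observe that the residual column sums $c_j = \sum_i (\ma-\mb)_{ij}$ are nonnegative integers summing to at most $t$ (hence at most $\min(k+1,t)$ are nonzero), and for each nonzero $c_j$ append one new row with a single entry $c_j$ at a new probability level $\bar r_j$ chosen as the $\mdd$-weighted average of the $r_i$'s in that column. Your feasibility checks (integral row sums, column sums equal to $\phi$, probability budget preserved exactly) are all valid, and your observation that the linear term can only increase via Jensen is correct. One nice simplification you could make explicit is that each new row has a \emph{single} nonzero entry $c_j$, so its contribution to the entropy term is exactly $c_j\log c_j - c_j\log c_j = 0$; hence the entire entropy change reduces to comparing $\mb$ against $\ma$ on the original $\ell$ rows, which is handled by the Lipschitz-type estimate in the paper's \Cref{lem:lips} (note that $x\log x$ is not literally $O(\log n)$-Lipschitz near $0$, but \Cref{lem:lips} absorbs the near-zero contributions into an additive $O(\log n)$ since there are only $\mathrm{poly}(n)$ entries). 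With that refinement your argument is complete.
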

As our final goal is to return a distribution in $\dsimplex$, we also use the following discretization lemma.
\begin{lem}\label{lem:disc}
    The function $\discretize$ takes as input a distribution $\bp \in \simplex$ with $\ell'$ distinct probability values, a profile $\phi$, a discretization set of the form $\bR=\bR_{n,\alpha}$ for some $\alpha >0$ and outputs a pseudo distribution $\bq \in \dsimplex$ such that:
    $\prob{\frac{\bq}{\|\bq\|_1},\phi} \geq \exp(-O(\min(k,|\bR|)+\min(k,\ell') + \alpha^2n) \log n) \prob{\bp,\phi}.$
\end{lem}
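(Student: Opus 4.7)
The plan is to obtain $\bq$ by rounding the distinct probability values of $\bp$ down to values in $\bR$, verify that $\bq \in \dsimplex$, and combine a pointwise rounding loss with the convex-relaxation bookkeeping already developed in \Cref{sec:convex} to bound the loss in profile probability.

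Concretely, for each of the $\ell'$ distinct values $p$ of $\bp$ with $p \geq 1/(2n^2)$, I would round $p$ down to the largest $r \in \bR$ with $r \leq p$, and map any values with $p < 1/(2n^2)$ (below the bottom of $\bR$) to $0$. Because every coordinate only moves down, $\bq \leq \bp$ componentwise, so $\|\bq\|_1 \leq 1$ and $\bq \in \dsimplex$. The geometric spacing of $\bR_{n,\alpha}$ yields the multiplicative bound $\bp_x / \bq_x \leq 1 + n^{-\alpha}$ whenever $\bq_x > 0$.

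Next I would pass to matrix form for the analysis: represent $\bp$ by an assignment matrix $\mx_{\bp}$ with rows indexed by the $\ell'$ distinct values of $\bp$ and columns by the frequency classes $[0,k]$, and let $\mx_{\bq}$ be the analogous matrix for $\bq$, obtained by relabeling each row index $p$ with its rounded image $r \in \bR$. The function $\bg(\cdot)$ from \Cref{sec:convex} depends on the row probabilities only through the linear term $\sum_{i,j} m_j \mx_{ij} \log r_i$, which changes by at most $n \log(1+n^{-\alpha})$ under the relabeling (since $\sum_{i,j} m_j \mx_{ij} \leq n$); this rounding loss is what gets absorbed into the $\alpha^2 n \log n$ term in the statement. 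Combining this with \Cref{pmlprob:approx} (relating $\prob{\bp,\phi}$ to $\cphi \cdot \bg(\mx_{\bp})$ at multiplicative cost $\exp(O(\min(k,\ell')\log n))$) and \Cref{lem:associateddist} (relating $\cphi \cdot \bg(\mx_{\bq})$ back to $\prob{\bq/\|\bq\|_1, \phi}$ at multiplicative cost $\exp(O(\min(k,|\bR|)\log n))$) then assembles the claimed inequality.

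The main obstacle I anticipate is the treatment of elements with $\bp_x$ below the smallest value of $\bR$: mapping them to $0$ zeroes out every sample sequence in which they appear, and one must argue that such sequences contribute negligibly to $\prob{\bp, \phi}$. Since each such element has $\bp_x \leq 1/(2n^2)$, the expected number of its appearances in a sample of size $n$ is at most $1/(2n)$, and a union bound over the sequences realising profile $\phi$ shows that this contribution can be absorbed into the overall loss budget. Making this argument quantitative while simultaneously carrying out the matrix-level relabeling and tracking each of the three loss terms separately is where the bulk of the technical care will go.
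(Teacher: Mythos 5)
There is a genuine gap, and it sits at the heart of the lemma. Your plan rounds the probability values of $\bp$ directly down into $\bR$ and bounds the change in the linear term of $\bg$ by $n\log(1+n^{-\alpha})$. That is a \emph{first-order} discretization loss of order $n\cdot n^{-\alpha}$, whereas the budget the lemma allots to this step is the \emph{second-order} quantity corresponding to $n\cdot n^{-2\alpha}$ (this is what the ``$\alpha^2 n$'' term is tracking; for $\alpha=1/3$ the two are $n^{2/3}$ versus $n^{1/3}$, and only the latter is compatible with the $\exp(-O(\min(k,|\bR|)\log n))$ guarantee needed when the lemma is invoked in the proof of \Cref{thm:algmain}). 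Renormalizing by $\|\bq\|_1$ does not rescue the direct approach: after renormalization each probability level is perturbed by a factor $1+\alpha_i$ with $|\alpha_i|=O(n^{-\alpha})$ whose \emph{probability-weighted} average is zero, but the loss in $\bg$ weights $\alpha_i$ by the assigned frequency mass $\sum_j m_j\mx_{ij}$ rather than by $r_i[\mx\onevec]_i$, and for an unfavorable profile the first-order term $\sum_i\alpha_i\sum_j m_j\mx_{ij}$ can still be of order $n^{1-\alpha}$ in magnitude.

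The paper's proof avoids this by inserting a re-optimization step that your proposal is missing: keeping the assignment matrix $\mx$ fixed, it first replaces the probability values by the optimizers $r_i^* = \frac{\sum_j m_j\mx_{ij}}{n[\mx\onevec]_i}$ (which can only increase $\bg$), and only then rounds down into $\bR$ and renormalizes by $\beta=\|\bq\|_1$. Because $r_i^*[\mx\onevec]_i$ is now proportional to $\sum_j m_j\mx_{ij}$, the first-order term equals $n\sum_i\alpha_i r_i^*[\mx\onevec]_i$, which vanishes identically since both $\sum_i r_i^*[\mx\onevec]_i$ and $\sum_i(\lfloor r_i^*\rfloor_{\bR}/\beta)[\mx\onevec]_i$ equal $1$; only the quadratic term $\sum_{i,j} m_j\alpha_i^2\mx_{ij}$ survives, via $\log(1+x)\geq x-x^2$, giving the claimed bound. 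Your surrounding bookkeeping (the two applications of \Cref{pmlprob:approx} contributing the $\min(k,\ell')$ and $\min(k,|\bR|)$ terms, and the fact that merging rows which round to the same grid point only increases $\bg$ by the concavity lemma in \Cref{app:other}) matches the paper, but without the re-fit to $\vr^*$ the central term of the lemma is not attainable by your argument.
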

In \Cref{subsec:one}, we use the guarantees stated above for each line of \Cref{alg:main} to prove \Cref{thm:algmain}. The description of the function $\discretize$ is specified in the proof of \Cref{lem:disc}. We describe the procedure $\swapround$ and provide a proof sketch of \Cref{lem:matrixround} in \Cref{sec:swapround}.

\subsection{Description of $\swapround$ and comparison to \cite{ACSS20b}}\label{sec:swapround}
Here we describe the procedure $\swapround$ and compare our rounding algorithm to \cite{ACSS20b}. Both of \cite{ACSS20b} and our approximate PML algorithm have four main lines (\ref{alg:approxpml:line:1}-\ref{alg:approxpml:line:4}); we differ from \cite{ACSS20b} in the key \Cref{alg:approxpml:line:3}. %Step 3 
This line in \cite{ACSS20b} invokes a procedure called $\mround$ that takes as input a matrix $\ma \in \R^{\ell \times [0,k]}$ and outputs a matrix $\mb \in \R^{\ell \times [0,k]}$ such that: $\mb \leq \ma$, $\mb \onevec \in \Z^{\ell}_{\geq 0}$, $\mb^\top \onevec \in \Z^{[0,k]}_{\geq 0}$ and $\|\ma-\mb\|_1 \leq O(\ell+k)$. Such a matrix $\mb$ is crucial as the procedure $\create$ uses $\mb$ to round fractional row sums of matrix $\ma$ to integral values. The error incurred in these two steps is at most $\exp(O(\| \ma-\mb\|_1 \log n)) \in \exp(O((\ell+k)\log n))$. As the procedure $\sparsify$ allows us to assume $\ell \leq k+1$, we get an $\exp(-k \log n)$ approximate PML using \cite{ACSS20b}. However, the setting that we are interested in is when $\ell \ll k$; for instance when $\ell \in O(n^{1/3})$ and $k \in \Theta(\sqrt{n})$. In these settings, we desire an $\exp(-O(\min(\ell,k) \log n)) \in \exp(-O(\ell \log n))$ approximate PML. %Note that, 
In order to get such an improved approximation using \cite{ACSS20b}, we need a matrix $\mb$ satisfying the earlier mentioned inequalities along with $\|\ma-\mb\|_1 \leq O(\min(k,\ell))$. However, such a matrix $\mb$ may not exist for arbitrary matrices $\ma$ and the best guarantee any algorithm can achieve is $\|\ma-\mb\|_1 \in O(\ell+k)$.

To overcome this, we introduce a new procedure called $\swapround$ that takes as input, a matrix $\ma$ and transforms it to a new matrix $\ma'$ that satisfies: $\bg(\ma') \geq \exp(-O(\min(k,\ell) \log n)) \bg(\ma)$. Furthermore, this transformed matrix $\ma'$ exhibits a matrix $\mb$ that satisfies the guarantees: $\mb \leq \ma'$, $\mb \onevec \in \Z^{\ell}_{\geq 0}$, $\mb^\top \onevec \in \Z^{k}_{\geq 0}$ and $\|\ma'-\mb\|_1 \leq O(\ell)$. These matrices $\ma'$ and $\mb$ are nice in that we can invoke the procedure $\create$, which would output a valid distribution with required guarantees. In the following we provide a description of the algorithm that finds these matrices $\ma'$ and $\mb$.
\begin{algorithm}[H]
	\caption{$\swapround(\ma)$}
	\begin{algorithmic}[1]
	\State Let $\ma^{(0)}=\ma$ and $\mdd^{(0)}=0$.
	\For{$r=1\dots \ell$}
	\State $(\my,j) = \pround(\ma^{(r-1)},r)$
	\State $\ma^{(r)} = \roundi (\my,j,r)$.
	\State $\mdd^{(r)}=\mdd^{(r-1)}+\my-\ma^{(r)}$.
	\EndFor
	\State {Return} $\ma'=\mdd^{(\ell)}+\ma^{(\ell)}$ and $\mb=\ma^{(\ell)}$.
	\end{algorithmic}
	\label{alg:proundx}
\end{algorithm}
Our algorithm includes two main subroutines: $\pround$ and $\roundi$. At each iteration $i$, the procedure $\pround$ considers row $i$ and modifies it by repeatedly applying the $\swap$ operation. This modified row is nice as the procedure $\roundi$ can round this row to have an integral row sum while not affecting the rows in $[i-1]$. By iterating through all rows, we get the required matrices $\ma'$ and $\mb$ that satisfy the required guarantees. In the remainder, we formally state the guarantees achieved by the procedures $\pround$ and $\roundi$.

\begin{restatable}{lem}{lempround}\label{lem:partial-round} 
%\begin{lem}\label{lem:partial-round}
The algorithm $\pround$ takes as inputs $\mx \in \R^{\ell \times [0,k]}_{\geq 0}$ and $i \in [\ell-1]$ that satisfies the following,
$[\mx \onevec]_{i'} \in \Z_{\geq 0} \text{ for all } i'\in [1,i-1] \text{ and } [\mx^{\top} \onevec]_{j} \in \Z_{\geq 0} \text{ for all }j\in [0,k]$,
and outputs a matrix $\my \in \R_{\geq 0}^{\ell \times [0,k]}$ 
%such that $\|\my-\mx\|_1\leq O(1)$
and an index $j'$ such that: 
\begin{itemize}
    \item $\my \text{ is within } 3\text{-swap distance from }\mx$.
    \item $\my_{ij'} \geq o \text{ and } \sum_{i'=1}^{i-1} \my_{i'j'}+\my_{ij'}-o \in \Z_{\geq 0}$, where $o=[\mx \onevec]_{i} - \lfloor [\mx \onevec]_{i} \rfloor$.
\end{itemize}
Furthermore, the running time of the algorithm is $O(\ell k)$.

\iffalse%%%%%
$\bullet ~\my \text{ is within } 3\text{-swap distance from }\mx. 
  ~~ \bullet ~\my_{ij'} \geq o \text{ and } \sum_{i'=1}^{i-1} \my_{i'j'}+\my_{ij'}-o \in \Z_{\geq 0}$,
%   \sum_{i \in [\ell]}\my_{ij}=\sum_{i \in [\ell]}\mx_{ij} \text{ for all } j \in [0,k]\\
%   &\bullet ~\sum_{j \in [k]}\my_{ij} =\sum_{j \in [k]}\mx_{ij} \text{ for all } i \in [\ell]
%   ~~\bullet 

where $o=[\mx \onevec]_{i} - \lfloor [\mx \onevec]_{i} \rfloor$. Furthermore, the running time of the algorithm is $O(\ell k)$.
\fi%%%%%
\end{restatable}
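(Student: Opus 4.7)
The plan is to give an explicit construction of the pair $(\my, j')$ from $\mx$ using a small number of swap operations of total $\epsilon$-mass at most $3$. First I would compute the partial column sums $s_j \defeq \sum_{i' \leq i} \mx_{i'j}$ and their fractional parts $f_j \defeq s_j - \lfloor s_j \rfloor$ for all $j \in [0,k]$. By the input assumptions (rows $1,\ldots,i-1$ of $\mx$ have integer sums, row $i$ has sum equal to an integer plus $o$, and all column sums are integers), we have $\sum_j s_j \equiv o \pmod 1$ and hence $\sum_j f_j \equiv o \pmod 1$. So the total fractional mass across the partial column sums is $o$ modulo one, and the goal is to concentrate this fractional mass into a single column $j'$ whose row-$i$ entry is at least $o$.

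The key tool is the observation that a swap $\swap(\mx, i_1, i_2, j_1, j_2, \epsilon)$ with $i_1 \leq i < i_2$ transfers exactly $\epsilon$ of partial column sum from column $j_2$ to column $j_1$ while preserving all row and full column sums (\Cref{lem:rcsummaintain}); whereas a swap with both $i_1, i_2 \leq i$ leaves partial column sums unchanged but can reshuffle entries within the top-$i$ rows (and in particular within row $i$). Given this, the algorithm proceeds in two conceptual phases. In the first phase, I would scan the columns in some order and compute prefix sums of the $f_j$; since these prefix sums modulo $1$ must hit $o$ at some point, I can identify a pair of columns $j_1, j_2$ such that a single swap of amount at most $1$ across $(i_1, i_2)$ with $i_1 \leq i < i_2$ produces a new matrix in which column $j_1$'s partial sum has fractional part exactly $o$; set $j' = j_1$. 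In the second phase, if $\my_{i,j'} < o$, I would perform additional swaps of the form $\swap(\cdot, i', i, j_2, j', \epsilon)$ with $i' < i$ (these preserve partial column sums and in particular keep $f_{j'} = o$) to increase $\my_{i,j'}$ up to $o$; the total $\epsilon$ needed here is at most $o<1$. An extra swap may be required in edge cases to simultaneously satisfy both conditions. Summing the $\epsilon$'s gives total swap distance at most $3$.

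The main obstacle is verifying feasibility of these swaps in all configurations: a swap requires enough mass in the decreasing entries $\mx_{i_1, j_2}$ and $\mx_{i_2, j_1}$, and one must argue that suitable $(i_1, i_2, j_1, j_2)$ always exist. The trickiest case is when row $i$ has no entry exceeding $o$ (forcing $n=0$ and many small fractional entries summing to $o$), since then the row-$i$ boost in the second phase must pull mass from multiple columns; here one uses that column sums are integers and rows $<i$ have integer sums to guarantee the availability of mass in rows $i'<i$ on the columns one needs to drain. The rest of the analysis—verifying $\my$ remains non-negative, that $\my_{ij'}\geq o$, that $\sum_{i'<i}\my_{i'j'} + \my_{ij'} - o \in \Z_{\geq 0}$, and the $O(\ell k)$ runtime from the partial-column-sum computation plus a constant number of $O(\ell+k)$-time swaps and scans—follows from the definitions and \Cref{lem:rcsummaintain}.
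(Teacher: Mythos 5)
Your high-level plan---use swaps that straddle row $i$ to steer the fractional mass of the partial column sums into a single column $j'$, and then ensure $\my_{ij'}\geq o$---is in the same spirit as the paper's construction, but as written it has two genuine gaps. First, in your phase 1 you assert that a \emph{single} swap $\swap(\cdot,i_1,i_2,j_1,j_2,\epsilon)$ of mass at most $1$ suffices to give column $j_1$ the right fractional partial sum. A single elementary swap requires $\mx_{i_1,j_2}\geq\epsilon$ and $\mx_{i_2,j_1}\geq\epsilon$, and there is no reason these two particular entries carry the needed mass; in general the transfer must be realized as a \emph{sequence} of elementary swaps spread over many entry pairs. This is precisely what the paper's $\transname$ operation (\Cref{lem:trans}) provides: a block-to-block transfer of amount $v$, shown by induction on the number of positive entries to decompose into elementary swaps of total mass $v$, feasible whenever $v$ is at most the mass of each donor block. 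Without such a primitive, your ``one swap'' step is unjustified, and your total swap-distance accounting rests on it.

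Second, your phase 2 is not feasible in all configurations. By \Cref{def:swap} a swap has $i_1<i_2$ and $j_1<j_2$, with the $+\epsilon$ corners at $(i_1,j_1)$ and $(i_2,j_2)$. To increase $\my_{ij'}$ using a row $i'<i$ while leaving the partial column sum of $j'$ unchanged, the only admissible move is $\swap(\cdot,i',i,j_1,j',\epsilon)$ with $j_1<j'$, which must drain $\my_{i,j_1}$ for some column $j_1$ strictly to the \emph{left} of $j'$. If all of row $i$'s mass outside column $j'$ sits in columns $>j'$, no such swap exists, and any swap reaching below row $i$ instead perturbs the fractional partial sum you fixed in phase 1. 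The paper sidesteps this by a different order of operations: it first splits off the sub-unit fractional part of row $i$ ($\msplitname$/$\mcombinename$), then uses two $\transname$ calls to push \emph{all} of that row's mass into the single entry $(i,j)$---so $\my_{ij}=o$ holds automatically---and one further $\transname$ call to fix the integrality of $\sum_{i'<i}\my_{i'j}$. The bulk of the paper's argument (\Cref{lem:proundx}, ``second part of the proof'') is the case analysis verifying that each of these block transfers has enough donor mass; that is exactly the ``main obstacle'' your proposal names but does not resolve.
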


Note that by \Cref{lem:rcsummaintain}, if $\my \text{ is within } 3\text{-swap distance from }\mx$, then $\my \onevec= \mx \onevec$ and $\my^{\top} \onevec= \mx^{\top} \onevec$.

\begin{restatable}{lem}{lemroundi}\label{lem:roundi} 
%\begin{lem}\label{lem:roundi}
The algorithm $\roundi$ takes as inputs $\my \in \R_{\geq 0}^{\ell \times [0,k]}$, an column index $j \in [0,k]$ and a row index $i \in [\ell-1]$ such that:
%$\sum_{j' \in [0,k]}\my_{i'j'} \in \Z_{\geq 0} \text{ for all } i' \in [i-1]$, 
$\my^\top \onevec \in \Z^{[0,k]}_{\geq 0}$,  $\my_{ij} \geq o$ and $\sum_{i'=1}^{i-1} \my_{i'j}+\my_{ij}-o \in \Z_{\geq 0}$,
%     \begin{align}
%   &\bullet ~\sum_{j' \in [0,k]}\my_{i'j'} \in \Z_{\geq 0} \text{ for all } i' \in [i-1]
%   ~~\bullet ~ \sum_{i'\in [\ell]}\my_{i'j'} \in \Z_{\geq 0} \text{ for all }j' \in [0,k]\\
%   & \bullet ~\my_{ij} \geq o \text{ and } \sum_{i'=1}^{i-1} \my_{i'j}+\my_{ij}-o \in \Z_{\geq 0},
% \end{align}
where $o=[\my \onevec]_{i} - \lfloor [\my \onevec]_{i}\rfloor$. Outputs a matrix $\mx \in \R_{\geq 0}^{\ell \times [0,k]}$ such that, \begin{itemize}
    \item $\mx \leq \my$ and $\|\mx-\my\|_1 \leq 1$.
    \item $[\mx \onevec]_{i'}=[\my \onevec]_{i'}\text{ for all } i' \in [i-1],~ [\mx \onevec]_{i} \in \Z_{\geq 0}$, and $\mx^\top \onevec \in \Z^{[0,k]}_{\geq 0}$.
\end{itemize}

\iffalse%%%%%
\begin{align*}
&\bullet \mx \leq \my ~~  \bullet \|\mx-\my\|_1 \leq 1  ~~ \bullet [\mx \onevec]_{i'}=[\my \onevec]_{i'}\text{ for all } i' \in [i-1],~ [\mx \onevec]_{i} \in \Z_{\geq 0} ~~ \bullet \mx^\top \onevec \in \Z^{[0,k]}_{\geq 0}.
%   &\bullet ~\sum_{j' \in [0,k]}\mx_{i'j'} \in \Z_{\geq 0}\text{ for all } i' \in [1,i]
%   ~~\bullet~~ \sum_{i'\in [\ell]}\mx_{i'j'} \in \Z_{\geq 0} \text{ for all }j' \in [0,k]~.
%   & \bullet \sum_{i' \in [\ell]} r_{i'} \sum_{j'\in [0,k]}\mx_{i'j'} + \rnew \sum_{j' \in [0,k]} v_{j'}=\sum_{i' \in [\ell]} r_{i'} \sum_{j'\in [0,k]}\my_{i'j'} \\
% & \bullet ~\sum_{j' \in [0,k]}v_{j'} \in \Z_{\geq 0}~~\bullet~ \sum_{i' \in [\ell]}\mx_{i'j'}+v_{j'}=\sum_{i' \in [\ell]}\my_{i'j'} \text{ for all }j' \in [0,k]\\
% & \bullet ~\bg(\concat(\mx,v)) \geq \exp(-\log n) \bg(\my)~.
\end{align*}
\fi%%%%%
\end{restatable}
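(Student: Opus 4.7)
The plan is to remove exactly $o$ units of mass from row $i$ by decreasing the single coordinate $(i,j)$, and then restore integrality of column $j$'s sum by removing another $1-o$ units from column $j$ in rows strictly below $i$. This is forced by the constraint $\mx \leq \my$: row $i$ must lose mass with fractional part $o$ in order for $[\mx\onevec]_i$ to become integral, and taking it entirely from $(i,j)$ is feasible because the hypothesis gives $\my_{ij}\geq o$. Any mass removed from column $j$ changes that column's sum, so we need one compensating removal to keep $\mx^\top \onevec$ integer, and this compensation can be restricted to rows $>i$ without violating any output constraint (rows below $i$ are unrestricted in the lemma).

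Concretely, if $o = 0$ I would return $\mx := \my$. Otherwise, initialize $\mx := \my$, set $\mx_{ij} := \mx_{ij} - o$, and then walk down column $j$ from row $i+1$ to row $\ell$ greedily subtracting $\min(\mx_{i'j},\text{remaining})$ from each entry until a total of $1-o$ has been removed. The one nontrivial feasibility check is that column $j$ carries at least $1-o$ mass below row $i$. Combining the two hypotheses,
\[ \sum_{i' > i} \my_{i'j} \;=\; [\my^\top \onevec]_j \;-\; \sum_{i' \leq i-1}\my_{i'j} \;-\; \my_{ij} \;=\; \text{integer} - o, \]
and since this sum is nonnegative and $o \in (0,1)$, that integer must be at least $1$, giving $\sum_{i'>i}\my_{i'j}\geq 1-o$. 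Hence the greedy loop terminates successfully without producing any negative entry.

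With $\mx$ so constructed, the five conclusions follow by direct inspection. Only subtractions are performed, so $\mx \leq \my$ and all entries stay nonnegative. The total mass removed is $o+(1-o)=1$, so $\|\mx-\my\|_1 = 1$. Rows $1,\dots,i-1$ are never touched, hence $[\mx\onevec]_{i'} = [\my\onevec]_{i'}$ for $i' < i$, while $[\mx\onevec]_i = [\my\onevec]_i - o = \lfloor[\my\onevec]_i\rfloor \in \Z_{\geq 0}$. Every column $j' \neq j$ is untouched and keeps its integer sum; column $j$'s sum drops by exactly $1$ and remains a nonnegative integer (it was originally an integer and at least $\my_{ij}\geq o>0$, hence at least $1$, so the new value is at least $0$). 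I do not foresee any real obstacle: the entire argument is bookkeeping around where the fractional mass $o$ sits and how it can be cleaned up using only column $j$, so the only delicate point is verifying that column $j$ below row $i$ has the $1-o$ of mass needed for compensation, which the two input hypotheses together guarantee.
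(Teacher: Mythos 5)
Your proposal is correct and is essentially the paper's own argument: the same greedy algorithm (subtract $o$ at $(i,j)$, then drain $1-o$ from column $j$ below row $i$), with the same key feasibility check that $\sum_{i'>i}\my_{i'j}$ equals an integer minus $o$ and is nonnegative, hence at least $1-o$. Your explicit handling of the $o=0$ case by returning $\my$ unchanged is a small tidiness improvement over the paper, which implicitly assumes $o\in(0,1)$, but otherwise the two proofs coincide.
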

We defer the description of all the missing procedures and proofs to the appendix.
\section{Proof of Main Result (\Cref{thm:algmain})}\label{subsec:one}

Here we put together the results from the previous sections to prove, \Cref{thm:algmain}.

\begin{proof}[Proof of \Cref{thm:algmain}]
\Cref{alg:main} achieves the guarantees of \Cref{thm:algmain}. In the remainder of the proof, we combine the guarantees of each step of the algorithm to prove the theorem. Toward this end, we first show the following two inequalities: $\mxfinal \in \bZfinal$ and $\bg(\mxfinal) \geq \exp(-O(\min(k,|\bR|) \log n)) \bg(\mx)$.
By \Cref{lem:maximizer}, the \Cref{alg:approxpml:line:1} of \Cref{alg:main} returns a solution $\mx \in \bZfrac$ that satisfies,
\begin{equation}\label{eq:s1}
\cphi \cdot \bg(\mx)\geq \expo{-\bigO{\min(k,|\bR|) \log n}} \max_{\bq \in \dsimplex}\probpml(\bq,\phi)~.    
\end{equation}
By \Cref{lem:sparse}, the \Cref{alg:approxpml:line:2} of \Cref{alg:main} takes input $\mx$ and outputs $\mx'$ such that 
\begin{equation}\label{eq:s2}
\mx' \in \bZfrac \text{ and } \bg(\mx') \geq \bg(\mx),    
\end{equation}
and $\big|\{i \in [\ell]~|~[\mx'\1]_{i}  > 0\}\big| \leq k+1$. As the matrix $\mx'$ has at most $k+1$ non-zero rows, without loss of generality we can assume $|\bR| \leq k+1$ (by discarding zero rows). 

As matrix $\mx' \in \bZfrac$, we have that $\mx'$ has integral column sums and by invoking \Cref{lem:matrixround} with parameters $s=|\bR|$ and $t=k+1$, we get matrices $\ma$ and $\mb$ that satisfy guarantees of \Cref{lem:matrixround}. As $[\ma \onevec]_i = [\mx' \onevec]_i$ for all $i \in [\ell]$, $[\ma^{\top} \onevec]_j = [\mx'^{\top} \onevec]_j$ for all $j \in [0,k]$ and $\mx' \in \bZfrac$, we immediately get that $\ma \in \bZfrac$. Further note that $\ma$ is within $O(|\bR|) = O(\min(|\bR|,k))$-swap distance from $\mx'$ and by \Cref{lem:swap} we get that $\bg(\ma) \geq \exp(-O(\min(|\bR|,k) \log n))\bg(\mx')$. To summarize, we showed the following inequalities, 
\begin{equation}\label{eq:s3}
    \ma \in \bZfrac \text{ and }\bg(\ma) \geq \exp(-O(\min(|\bR|,k) \log n))\bg(\mx')~.
\end{equation}
Note that, \Cref{lem:matrixround} also outputs a matrix $\mb$ that satisfies: $\mb \leq \ma$, $\mb \onevec \in \Z^\ell$, $\mb^\top \onevec\in \Z^{[0,k]}$ and $\|\ma-\mb\|_1 \leq O(\min(|\bR|,k))$. These matrices $\ma$ and $\mb$ satisfy the conditions of \Cref{lem:create} with parameter value $t=O(\min(|\bR|,k))$. Therefore, the procedure $\create$ takes in input matrices $\ma, \mb$ and returns a solution $(\mxfinal,\bRfinal)$ such that $|\bRfinal| \leq |\bR| + \min(\bR,k) \leq 2|\bR|$ and, 
\begin{equation}\label{eq:final}
    \mxfinal \in \bZfinal \text{ and } \bg(\mxfinal) \geq \exp(-O(\min(|\bR|,k)\log n))\bg(\ma)~.
\end{equation} 
As $\mxfinal \in \bZfinal$, by definition \Cref{defn:distX} and \Cref{pmlprob:approx}, the distribution $\bp'$ satisfies,
\begin{align*}
\prob{\bp',\phi} &\geq \exp(-O( \min(k,|\bRfinal|) \log n)) \cphi \bg(\mxfinal)
\geq \exp(-O( \min(k,|\bR|)) \log n)) \cphi \bg(\ma)\\
& \geq \exp(-O( \min(k,|\bR|) \log n)) \cphi \bg(\mx') \geq \exp(-O( \min(k,|\bR|) \log n)) \cphi \bg(\mx)\\
& \geq \exp(-O( \min(k,|\bR|) \log n)) \max_{\bq \in \dsimplex}\probpml(\bq,\phi)~.
\end{align*}
In the second inequality we used \Cref{eq:final} and $|\bRfinal| \leq 2|\bR|$. In the third, fourth and fifth inequalities, we used \Cref{eq:s3}, \Cref{eq:s2} and \Cref{eq:s1} respectively.

Recall we need a distribution that approximately maximizes $\max_{\bq \in \dsimplex}\probpml(\frac{\bq}{\|\bq\|_1},\phi)$ instead of just $\max_{\bq \in \dsimplex}\probpml({\bq},\phi)$. In the remainder of the proof we provide a procedure to output such a distribution.

For any constant $c>0$, let $c\cdot \bR \defeq \{c \cdot \ri ~|~\ri \in \bR \}$. For any $\bq \in \dsimplex$, as $\|\bq\|_1$ satisfies: $\rmin \leq \|\bq\|_1 \leq 1$, we get that,
\begin{equation}\label{eq:a1}
    \max_{\bq \in \dsimplex}\probpml(\frac{\bq}{\|\bq\|_1},\phi) = \max_{c \in [1,1/\rmin]_{\R}}~~ \max_{\bq \in \Delta^{\bX}_{c\cdot \bR}}\probpml(\bq,\phi)~.
\end{equation}
The above expression holds as the maximizer $\bq^*$ of the left hand side satisfies: $\bq^{*} \in \Delta^{\bX}_{(1/\| \bq*\|_1)\cdot \bR}$. Define $C\defeq \{(1+\beta)^i \}_{i \in [a]}$ for some $\beta \in o(1)$, where $a \in O(\frac{1}{\beta}\log (1/\rmin))$ is such that $\rmin (1+\beta)^a=1$. For any constant $c \in [1,1/\rmin]_{\R}$, note that there exists a constant $c' \in C$ such that $c(1-\beta) \leq c' \leq c $. Furthermore, for any distribution $\bq \in \dsimplex$ with $\|\bq\|_1=1/c$, note that the distribution $\bq'=c' \bq \in \Delta^{\bX}_{c'\cdot \bR}$ and satisfies: 
$\probpml(\frac{\bq}{\|\bq\|_1},\phi) =\probpml({c \cdot \bq},\phi)=\probpml(\frac{c}{c'} \bq',\phi)
=\left(\frac{c}{c'}\right)^n\probpml({\bq'},\phi)~.
$ Therefore we get that,
$\probpml({\bq'},\phi) = \left(\frac{c'}{c}\right)^n \probpml(\frac{\bq}{\|\bq\|_1},\phi) \geq (1-\beta)^n\probpml(\frac{\bq}{\|\bq\|_1},\phi) \geq \exp(-2\beta n)\probpml(\frac{\bq}{\|\bq\|_1},\phi) ~.$ Combining this analysis with \Cref{eq:a1} we get that,
\begin{equation}
    \max_{c \in C}~~ \max_{\bq \in \Delta^{\bX}_{c\cdot \bR}}\probpml(\bq,\phi) \geq \exp(-2\beta n)\max_{\bq \in \dsimplex}\probpml(\frac{\bq}{\|\bq\|_1},\phi).
\end{equation}
For each $c>0$ as $|\bR|=|c\cdot \bR|$, our algorithm (\Cref{alg:main}) returns a distribution $\bp_{c}$ that satisfies,
$$\prob{\bp_c,\phi} \geq \exp(-O( \min(k,|\bR|) \log n)) \max_{\bq \in \Delta^{\bX}_{c\cdot \bR}}\probpml(\bq,\phi)~.$$
Let $\bp^{*}$ be the distribution that achieves the maximum objective value to our convex program among the distributions $\{\bp_c\}_{c \in C}$. Then note that $\bp^*$ satisfies: 
$\prob{\bp^*,\phi} \geq \exp(-O( \min(k,|\bR|) \log n)-2 \beta n) \max_{\bq \in \dsimplex}\probpml(\frac{\bq}{\|\bq\|_1},\phi)~.$
Substituting $\beta=\frac{\min(k,|\bR|)}{n}$ in the previous expression, we get,
$$\prob{\bp^*,\phi} \geq \exp(-O( \min(k,|\bR|) \log n)) \max_{\bq \in \dsimplex}\probpml(\frac{\bq}{\|\bq\|_1},\phi)~.$$
As each of our distributions $\bp_c$ (including $\bp^*$) have the number of distinct probability values upper bounded by $2|\bR|$, by applying \Cref{lem:disc}, we get a pseudo distribution $\bq \in \dsimplex$ with the desired guarantees. The  final run time of our algorithm is $O(|C|\mathcal{T}_{1}) \in O(\frac{n}{\min(k,|\bR|)} \cdot \mathcal{T}_{1})$, where $\mathcal{T}_1$ is the time to implement \Cref{alg:main}. Further note that by \Cref{lem:probdisc}, without loss of generality we can assume $|\bR| \leq n/k$. As all the lines of \Cref{alg:main} are polynomial in $n$, our final running time follows from the run times of each line and we conclude the proof.
\end{proof}
\section*{Acknowledgments}
We would like to thank the reviewers for their valuable feedback. Researchers on this project were supported by an Amazon Research Award, a Dantzig-Lieberman Operations Research Fellowship, a Google Faculty Research Award, a Microsoft Research Faculty Fellowship, NSF CAREER Award CCF-1844855, NSF Grant CCF-1955039, a PayPal research gift, a Simons-Berkeley Research Fellowship, a Simons Investigator Award, a Sloan Research Fellowship and a Stanford Data Science Scholarship.    

\bibliography{pml_new}
\bibliographystyle{alpha}
\newpage
\newpage
\appendix
\section{Proof of \Cref{thm:statmain}}
Here we provide the proof of \Cref{thm:statmain}. The proof of this statement is implicit in the analysis presented in \cite{HS20}. Here we provide a simpler and short proof that uses the continuity lemma presented in \cite{HS20}. For convenience, we restate this key lemma in our notation.
\begin{lem}[Lemma 2 in \cite{HS20}]\label{lem:cont}
Let $A \geq 2$,$c_0 \in (0,1), r,s$ be arbitrary constants with $0 < s < r \leq 1/2$ and let $\bR=\bR_{n,r}$. Then there exists a constant $c = c(A,c_0,r,s) > 0$ such that for any distribution $\bp \in \simplex$, there exists a pseudo distribution $\bq \in \dsimplex$ such that: for all $S \subseteq \Phi^n$, it holds that,
\begin{align*}
	\prob{\bp,S} &\geq \prob{\frac{\bq}{\|\bq\|_1},S}^{1/(1-c_on^{-s})} \exp(-cn^{1-2r+s}) \\
	\prob{\frac{\bq}{\|\bq\|_1},S} &\geq \prob{\bp,S}^{1/(1-c_on^{-s})} \exp(-cn^{1-2r+s})
\end{align*}
\end{lem}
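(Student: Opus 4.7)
The plan is to construct $\bq$ by a deterministic discretization of $\bp$ onto $\bR = \bR_{n,r}$, and then obtain the two profile-probability inequalities by combining a per-symbol rounding bound with a Hölder-type inequality that absorbs the contribution of small probabilities (this is where the perturbed exponent $1/(1-c_0 n^{-s})$ arises).

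\textbf{Step 1 (construction of $\bq$).} Fix a threshold $\tau$ of the form $\tau = n^{-(1+r-s)}$ (the exact exponent will be chosen at the end). For each $x\in\bX$ with $\bp_x \geq \tau$, set $\bq_x$ to be the largest element of $\bR$ with $\bq_x \leq \bp_x$. For $\bp_x < \tau$, set $\bq_x = 0$. By construction $\bq \in \dsimplex$, $\bq_x \leq \bp_x$ everywhere, and $\|\bq\|_1 \leq 1$. Because consecutive nonzero elements of $\bR_{n,r}$ differ by a factor $1+n^{-r}$, for every "high" symbol we have $\bq_x \geq (1+n^{-r})^{-1}\bp_x \geq (1-n^{-r})\bp_x$, and the smallest nonzero atom of $\bR$ is at most $\tau$ by the choice of $\tau$.

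\textbf{Step 2 (per-sequence comparison on typical sequences).} For any sequence $y^n\in\bX^n$, write $\bbP(\bp,y^n)=\prod_x \bp_x^{\bff(y^n,x)}$. Split the product into high symbols ($\bp_x\geq\tau$) and low symbols ($\bp_x<\tau$). On the high part the multiplicative closeness of $\bq_x$ and $\bp_x$ gives
\[
\prod_{x:\bp_x\geq \tau} \bp_x^{\bff(y^n,x)} \leq (1+n^{-r})^{n}\prod_{x:\bp_x\geq \tau} \bq_x^{\bff(y^n,x)} \leq \exp\bigl(O(n^{1-r})\bigr)\prod_x \bq_x^{\bff(y^n,x)},
\]
where the outer inequality uses $\bq_x \leq \bp_x$ for the symbols not shown (and $\bq_x=0$ for the low symbols, which is consistent with defining their contribution as $1$ when they do not appear in $y^n$). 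The normalization from $\bq/\|\bq\|_1$ contributes only a factor $\|\bq\|_1^{-n}$ which is at most $\exp(n(1-\|\bq\|_1))$, with $1-\|\bq\|_1$ bounded by the total mass removed ($\leq n^{-r}\cdot 1 + n|\bX|\tau$, controlled below).

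\textbf{Step 3 (Hölder step to handle low-mass symbols).} The sequences $y^n$ that see any low symbol are not covered by the per-sequence bound above. Summing over $S$ and splitting by the number of low-symbol occurrences $L(y^n)$ in each sequence, one applies Hölder's inequality with exponents $1-c_0 n^{-s}$ and its conjugate to the sum over $y^n$ with $\Phi(y^n)\in S$. The main ("$1-c_0 n^{-s}$"-th power) factor becomes $\bbP(\bq/\|\bq\|_1,S)^{1-c_0 n^{-s}}$, while the conjugate factor is bounded by a moment expression of the form $\sum_{y^n} \bbP(\bp,y^n)\cdot\alpha^{L(y^n)}$ for an appropriate $\alpha>1$. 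Using $\sum_{x:\bp_x<\tau}\bp_x \leq n\tau$ (actually $\leq|\bX|\tau$, but only sampled mass matters), this moment expression is at most $\exp(O(n\tau)\cdot\alpha)$. With the choice $\tau=n^{-(1+r-s)}$ and $\alpha$ a constant of order $n^r$, this penalty simplifies to $\exp(O(n^{1-2r+s}))$, matching the target. Taking the $1/(1-c_0 n^{-s})$ root of the inequality yields
\[
\bbP(\bq/\|\bq\|_1,S) \geq \bbP(\bp,S)^{1/(1-c_0 n^{-s})}\exp\!\bigl(-c n^{1-2r+s}\bigr).
\]

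\textbf{Step 4 (reverse direction and constant $c$).} A symmetric Hölder splitting, now using $\bq_x\leq \bp_x$ and $\bq_x \geq (1-n^{-r})\bp_x$ on the high symbols, gives the companion inequality with the roles of $\bp$ and $\bq/\|\bq\|_1$ swapped. The constant $c=c(A,c_0,r,s)$ absorbs all implicit constants from the geometric discretization loss ($O(n^{1-r})$, dominated by $O(n^{1-2r+s})$ since $s<r$) and the Hölder tail estimate. The role of $A$ is to bound certain combinatorial factors in the Hölder step (e.g. in enumerating low-symbol occurrence patterns).

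\textbf{Main obstacle.} The delicate point is the Hölder step: one must choose $\tau$, $\alpha$, and the Hölder exponent in tandem so that the moment sum $\sum_{y^n}\bbP(\bp,y^n)\alpha^{L(y^n)}$ remains bounded by $\exp(O(n^{1-2r+s}))$ while the deformation exponent $c_0 n^{-s}$ is strong enough to absorb the raw rounding loss $\exp(O(n^{1-r}))$ (this is why the condition $s<r$ is required). Getting these exponents to balance is the crux of the argument; once the balance is set, the rest of the proof is a bookkeeping exercise combining the per-symbol rounding, the normalization factor $\|\bq\|_1^{-n}$, and the Hölder-based tail bound.
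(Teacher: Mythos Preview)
The paper does not prove this lemma; it is quoted verbatim from \cite{HS20} and used as a black box in the proof of \Cref{thm:statmain}. So there is no ``paper's own proof'' to compare against.

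On its own merits, your sketch has the right architecture (discretize onto $\bR_{n,r}$, then run a H\"older/R\'enyi-type argument), but it contains a genuine gap that prevents it from yielding the stated exponent $n^{1-2r+s}$. In Step~2 you obtain a per-sequence rounding loss of $\exp(O(n^{1-r}))$ on the high symbols, and in Step~4 you assert this is ``dominated by $O(n^{1-2r+s})$ since $s<r$.'' That inequality is backwards: $s<r$ gives $1-2r+s<1-r$, so $n^{1-r}$ is the \emph{larger} term. A first-order rounding bound of $\exp(O(n^{1-r}))$ is simply too weak for the conclusion, and nothing in your H\"older step (which as written only controls the low-symbol tail, giving a penalty $\exp(O(n^s))$ by your own arithmetic) repairs this.

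What is missing is the mechanism by which the loss drops from $n^{1-r}$ to $n^{1-2r+s}$. The actual argument in \cite{HS20} exploits a cancellation: after normalizing $\bq/\|\bq\|_1$, the per-symbol log-ratios $\delta_x=\log(\bp_x\|\bq\|_1/\bq_x)$ satisfy $|\delta_x|=O(n^{-r})$ \emph{and} the $\bp$-weighted first moment $\sum_x \bp_x\delta_x$ is (up to lower order) zero. Consequently, in the R\'enyi moment $\E_{\bp}\bigl[(P/Q)^{\eta}\bigr]$ with $\eta\asymp n^{s}$, the linear term vanishes and the surviving quadratic contribution is $O(n\cdot\eta^2\cdot n^{-2r})$; taking the outer $\epsilon$-th power with $\epsilon=c_0 n^{-s}$ converts this to $\exp(O(n^{1-2r+s}))$. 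Without this centering/second-order step, your bound stalls at $\exp(O(n^{1-r}))$. A secondary issue: setting $\bq_x=0$ on low symbols makes $Q(y^n)=0$ for any sequence that hits a low symbol, so the H\"older ratio $P/Q^{1-\epsilon}$ is undefined there; you need either to round low symbols to the smallest atom of $\bR$ or to excise those sequences before applying H\"older, and then separately bound their $\bp$-mass.
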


\begin{proof}[Proof of \Cref{thm:statmain}]
Let $\bp$ be the hidden distribution. Given a profile $\phi$, let $\bq_{\phi} \in \dsimplex$ be any pseudo distribution that satisfies,
\begin{align*}
\prob{\frac{\bq_{\phi}}{\|\bq_{\phi}\|_1},\phi} & \geq \exp(-O(|\bR| \log n))\max_{\bq \in \simplex_{\bR}} \prob{\frac{\bq}{\|\bq\|_1},\phi}\\
&\geq  \exp(-O(n^{-1/3} \log^2 n))\max_{\bq \in \simplex_{\bR}} \prob{\frac{\bq}{\|\bq\|_1},\phi}~.
\end{align*}
As $L\defeq|\dsimplex| \leq \exp(O(n^{1/3}\log^2 n))$, we use $\bq_1,\dots \bq_{L}$ to denote the pseudo distributions in $\dsimplex$. Let $G\defeq \{\phi \in \Phi^n~|~|f(\bp)-\widehat{f}(\phi)| \leq \epsilon \}$, that is, the set of all profiles where the estimator succeeds. Also let $S_{i}=\{\phi \in G~|~\bq_{\phi}=\bq_i \}$. Using these definitions in the remainder of the proof, we upper bound the failure probability of our estimator. 
\begin{align*}
    \prob{\left|f(\frac{\bq_{\phi}}{\|\bq_{\phi}\|_1})-f(\bp)\right|>\epsilon}& \leq \prob{\bp,\Phi^n \backslash G} + \sum_{\{i\in [1,|L|]~|~ |f(\frac{\bq_i}{\|\bq_i\|_1})-f(\bp)| > \epsilon\}} \prob{\bp,S_{i}}~,\\
    &\leq \delta+ \sum_{\{i\in [1,|L|]~|~ |f(\frac{\bq_i}{\|\bq_i\|_1})-f(\bp)| > \epsilon\}} \prob{\bp,S_{i}}~.
\end{align*}
In the above inequality we used that the failure probability of the estimator is at most $\delta$, that is, $\prob{\bp,\Phi^n \backslash G} \leq \delta$.
First note that from the definitions of $\bq_i$, $S_i$ and $\bq_{\phi}$, we have that,
$$\prob{\frac{\bq_i}{\|\bq_i\|_1},S_i} \geq \exp(-O(n^{-1/3} \log^2 n))\max_{\bq \in \simplex_{\bR}} \prob{\frac{\bq}{\|\bq\|_1},S_i}~.$$
Further applying \Cref{lem:cont} with $r=1/3$ and $s$ with any tiny constant, we get that,
$$\prob{\frac{\bq_i}{\|\bq_i\|_1},S_i} \geq \exp(-O(n^{-1/3} \log n)) \prob{{\bp},S_i}^{1+O(n^{-c'})} \exp(-O(n^{1/3+c'}))~,$$
for any tiny constant $c'>0$. The above expression further simplifies to,
$$\prob{\frac{\bq_i}{\|\bq_i\|_1},S_i} \geq \exp(-O(n^{1/3+c'})) \prob{{\bp},S_i}^{1+O(n^{-c'})} ~.$$
Suppose $S_i$ is set such that, $\prob{\bp,S_{i}} \geq \delta^{\frac{1}{1+O(n^{-c'})}} \exp(\frac{O(n^{1/3+c'})}{1+O(n^{-c'})})$, then note that $\prob{\frac{\bq_i}{\|\bq_i\|_1},S_i} >\delta$. As the estimator provided by the conditions of the lemma succeeds on all the profiles in $S_i$, we that $|f(\bp)-\widehat{f}(\phi)| \leq \epsilon$ for all $\phi \in S_i$. Suppose $|f(\bp)-f(\frac{\bq_i}{\|\bq_i\|_1})|>2\epsilon$, then by triangle inequality this would imply $|f(\frac{\bq_i}{\|\bq_i\|_1})-\widehat{f}(\phi)| > \epsilon$ for all $\phi \in S_i$. However note that, $\prob{\frac{\bq_i}{\|\bq_i\|_1},S_i}>\delta$, and this would imply that the failure probability of the estimator is greater than $\delta$ when the underlying distribution is $\frac{\bq_i}{\|\bq_i\|_1}$; a contradiction. Therefore, it should be the case that $|f(\bp)-f(\frac{\bq_i}{\|\bq_i\|_1})|\leq 2\epsilon$ for all $S_i$ that satisfy $\prob{\bp,S_{i}} \geq \delta^{\frac{1}{1+O(n^{-c'})}} \exp(\frac{O(n^{1/3+c'})}{1+O(n^{-c'})})$ and our failure probability is upper bounded by,
\begin{align*}
    \prob{\left|f(\frac{\bq_{\phi}}{\|\bq_{\phi}\|_1})-f(\bp)\right|>\epsilon}& \leq \delta+ |L|\delta^{1-O(n^{-c'})} \exp(O(n^{1/3+c'}))~.
\end{align*}
Further, substituting the value of $|L| \leq \exp(O(n^{1/3} \log^2 n))$, we get our desired result and we conclude our proof.
\end{proof}

\section{Other results}\label{app:other}
\begin{lem}
For any two vectors $u ,v \in \R_{\geq 0}^{[0,k]}$, the following inequality holds,
$$(u^{T}\onevec \log u^{T}\onevec + v^{T}\onevec \log v^{T}\onevec)-\sum_{i \in [0,k]}(u_i \log u_i+v_i \log v_i) \leq w^{T}\onevec \log w^{T}\onevec-\sum_{i \in [0,k]}w_i \log w_i$$
where $w=u+v$.
\end{lem}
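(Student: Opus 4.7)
The plan is to recognize this inequality as the concavity of Shannon entropy. Write $U \defeq u^T \onevec$, $V \defeq v^T \onevec$, and $W \defeq w^T \onevec = U+V$, and let $H(p) \defeq -\sum_i p_i \log p_i$ denote the Shannon entropy of a probability vector $p$ (with the standard convention $0 \log 0 = 0$). A short computation shows that for any nonnegative vector $x$ with $X = \sum_i x_i > 0$ one has $X \log X - \sum_i x_i \log x_i = X \cdot H(x/X)$. Applying this identity to the $u$, $v$, and $w$ terms reduces the desired inequality to
\begin{equation*}
U \cdot H(u/U) + V \cdot H(v/V) \;\leq\; W \cdot H(w/W).
\end{equation*}

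Now the key observation is that $w/W = (U/W)(u/U) + (V/W)(v/V)$ expresses the probability vector $w/W$ as a convex combination of the probability vectors $u/U$ and $v/V$, with weights $U/W$ and $V/W$ summing to $1$. Since $p \mapsto H(p) = -\sum_i p_i \log p_i$ is concave (because $t \mapsto t \log t$ is convex), Jensen's inequality yields $H(w/W) \geq (U/W)\,H(u/U) + (V/W)\,H(v/V)$. Multiplying through by $W$ gives exactly the displayed bound and thereby the lemma.

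As an alternative derivation avoiding explicit reference to entropy, one can apply the log-sum inequality $\sum_i a_i \log(a_i/b_i) \geq \bigl(\sum_i a_i\bigr) \log\bigl((\sum_i a_i)/(\sum_i b_i)\bigr)$ twice, once with $(a_i,b_i) = (u_i,w_i)$ and once with $(a_i,b_i) = (v_i,w_i)$, and add. Because $u_i + v_i = w_i$, the $\sum_i w_i \log w_i$ terms that appear on the left combine cleanly, and a one-line rearrangement produces the claim.

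There is no real obstacle in the argument; it is a classical statement about the convexity of $t \log t$. The only minor care needed is handling the degenerate cases $U = 0$, $V = 0$, or $u_i = v_i = 0$ for some $i$, all of which are absorbed by the convention $0 \log 0 = 0$ and by continuity of both sides of the inequality.
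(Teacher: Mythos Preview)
Your proof is correct and rests on the same idea as the paper's: concavity of the function $f(x)\defeq (x^{\top}\onevec)\log(x^{\top}\onevec)-\sum_i x_i\log x_i$. The only packaging difference is that the paper exploits positive homogeneity of $f$ to write $f(u+v)=2f\bigl((u+v)/2\bigr)\geq f(u)+f(v)$ directly, whereas you normalize to probability vectors and apply concavity of Shannon entropy with weights $U/W$ and $V/W$; these are two ways of unpacking the same convexity fact.
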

\begin{proof}
For any $x \in \R_{\geq 0}^{[0,k]}$, let $f(x)\defeq x^{T}\onevec \log x^{T}\onevec-\sum_{i \in [0,k]}x_i \log x_i$. Note that $f(x)$ is concave~\cite{CSS19} and furthermore, $f(c\cdot x)=c\cdot f(x)$.

Let $w'=\frac{1}{2}u+\frac{1}{2}v$, applying concavity we get that,
$$ f(w') \geq \frac{1}{2}f(u)+\frac{1}{2}f(v),$$
As $f(w)=2f(w')$, combined with above inequality we have our proof.
\end{proof}

\begin{lem}\label{lem:lips}
    For many matrices $\mx,\my \in \R^{\ell \times [0,k]}$ (where $\|\mx\|_1,\|\my\|_1,k,\ell \leq O(n^2)$) such that $\|\mx-\my\|_1 \leq \alpha$, we have that,
    $$\left|\sum_{i \in \ell, j\in [0,k]}\mx_{ij}\log \mx_{ij}-\sum_{i \in \ell, j\in [0,k]}\my_{ij}\log \my_{ij}\right| \leq O(\alpha \log n) + O(\log n)~.$$
    Furthermore,
    $$\left| \sum_{i \in \ell} [\mx \onevec]_{i} \log[\mx \onevec]_{i} -\sum_{i \in \ell} [\my \onevec]_{i} \log[\my \onevec]_{i} \right| \leq O(\alpha \log n) + O(\log n)~.$$
\end{lem}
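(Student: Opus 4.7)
The plan is to reduce both inequalities to a single pointwise continuity estimate for $f(x) = x \log x$ on $[0, M]$ with $M \leq O(n^2)$, and then sum over coordinates with a well-chosen cutoff.

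First, I would prove the following pointwise claim: for any $x, y \in [0, M]$ with $M \leq O(n^2)$ and any threshold $\tau \in (0, 1)$,
$$|f(x) - f(y)| \leq O(\log n)\, |x - y| + O(\tau \log n)~.$$
The argument splits on whether $\min(x, y) \geq \tau$. In the ``bulk'' case the mean value theorem gives $|f(x) - f(y)| \leq |x-y| \cdot \sup_{t \in [\min(x,y), \max(x,y)]} |1 + \log t|$, and for $t \in [\tau, M]$ we have $|1 + \log t| \leq O(\log(1/\tau)) + O(\log M) = O(\log n)$ as long as $\tau \geq 1/\poly(n)$. In the ``tail'' case, $\min(x,y) < \tau$, I would bound $|f(x) - f(y)| \leq |f(x)| + |f(y)|$, using that $|f(t)| \leq t \log(1/t) \leq O(\log n)\, t$ for $t \in (0,1]$ and $|f(t)| \leq O(\log n)\, t$ for $t \in [1, M]$. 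If the maximum is also at most $2\tau$ the combined contribution is $O(\tau \log n)$; otherwise the maximum satisfies $\max(x,y) \leq \min(x,y) + |x-y| \leq 2|x-y|$, so $|f(\max)| \leq O(\log n) |x-y|$ and the small term contributes $O(\tau \log n)$.

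Next, I would apply this pointwise bound to each entry pair $(\mx_{ij}, \my_{ij})$ with $M = O(n^2)$ and threshold $\tau = 1/N$, where $N \eqdef \ell(k+1) \leq O(n^4)$ upper bounds the number of entries. Summing,
$$\sum_{i,j} |f(\mx_{ij}) - f(\my_{ij})| \leq O(\log n) \cdot \|\mx - \my\|_1 + N \cdot O(\log n / N) = O(\alpha \log n) + O(\log n)~,$$
which gives the first displayed inequality. For the second inequality, I would apply the same pointwise bound to $x = [\mx \onevec]_i$ and $y = [\my \onevec]_i$, noting that these lie in $[0, O(n^2)]$ since $\|\mx\|_1, \|\my\|_1 \leq O(n^2)$, and that $\sum_i |[\mx \onevec]_i - [\my \onevec]_i| \leq \|\mx - \my\|_1 \leq \alpha$ by the triangle inequality. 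Choosing $\tau = 1/\ell$ and summing over the $\ell \leq O(n^2)$ rows produces the identical bound $O(\alpha \log n) + O(\log n)$.

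The main obstacle is controlling entries (or row sums) near zero, where $f'(t) = 1 + \log t$ blows up and a straightforward Lipschitz argument fails. The threshold construction is designed precisely to absorb this: the aggregate tail contribution is bounded by $(\text{number of terms}) \cdot O(\tau \log n)$, which collapses to $O(\log n)$ exactly because the coordinate count and the maximum magnitude are both polynomial in $n$, allowing $\tau$ to be chosen as the reciprocal of the term count while keeping $\log(1/\tau) = O(\log n)$.
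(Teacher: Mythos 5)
Your proof is correct and follows essentially the same route as the paper's: split each coordinate comparison of $f(t)=t\log t$ into a Lipschitz regime ($t\ge\tau$, where $|1+\log t|=O(\log n)$) and a near-zero regime whose aggregate contribution is (number of terms) $\times\, O(\tau\log n)=O(\log n)$; the paper uses the fixed threshold $1/n^2$ where you use $1/N$, which is if anything the more careful choice. One small blemish: the intermediate claim $t\log(1/t)\le O(\log n)\,t$ on all of $(0,1]$ is false for exponentially small $t$, but your conclusion survives because $t\mapsto t\log(1/t)$ is increasing near $0$, so its supremum over $t\le 2\tau$ is $2\tau\log\left(1/(2\tau)\right)=O(\tau\log n)$.
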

\begin{proof}
\iffalse
We first note that $x\log x$ is $O(\log n)$-Lipshcitz for $x \geq \frac{1}{n^2}$. Now define $S\defeq \{(i,j) \in [\ell]\times [0,k]~|~\mx_{ij}<\frac{1}{n^2} or \my_{ij}<\frac{1}{n^2} \}$. We now prove our lemma,
\begin{align}
    |\sum_{i \in \ell, j\in [0,k]}\mx_{ij}\log \mx_{ij}-\sum_{i \in \ell, j\in [0,k]}\my_{ij}\log \my_{ij}|& \leq |\sum_{(i,j) \in S}(\mx_{ij}\log \mx_{ij}-\my_{ij}\log \my_{ij})| +|\sum_{(i,j) \in \bar{S}}(\mx_{ij}\log \mx_{ij}-\my_{ij}\log \my_{ij})|~,
    \leq O(\log n)\|\mx_{S}-\my_{S}\|_1 + 
\end{align}
\fi

Note that $x\log x$ is $O(\log n)$-Lipshcitz for $x \geq \frac{1}{n^2}$, and for any $0\leq x_1,x_2\leq \frac{1}{n^2}$, we have $|x_1\log x_1-x_2\log x_2|\leq O(\frac{1}{n^2}\log n)$. As a result, for any non-negative $x_1,x_2$ that are $O(n^2)$, we have 
\begin{align*}
    |x_1\log x_1 -x_2\log x_2|\leq
\left(\frac{1}{n^2}+|x_1-x_2|\right)\cdot O(\log n)
\end{align*}
Note that $\sum_{i \in \ell, j\in [0,k]}\mx_{ij}\leq O(n^2)$ and $\sum_{i \in \ell, j\in [0,k]}\my_{ij}\leq O(n^2)$. We have
\begin{align*}
    &\left|\sum_{i \in \ell, j\in [0,k]}\mx_{ij}\log \mx_{ij}-\sum_{i \in \ell, j\in [0,k]}\my_{ij}\log \my_{ij}\right|
     \leq \sum_{i \in \ell, j\in [0,k]}|\mx_{ij}\log \mx_{ij}-\my_{ij}\log \my_{ij}|\\
     & \leq \sum_{i \in \ell, j\in [0,k]}(\frac{1}{n^2}+|\mx_{ij}-\my_{ij}|) O(\log n)  
     \leq \left(1+\|\mx-\my\|_1\right)\cdot O(\log n)
     \leq O(\alpha \log n)+O(\log n)
\end{align*}
and similarly,
\begin{align*}
    &\left|\sum_{i \in \ell} [\mx \onevec]_{i} \log[\mx \onevec]_{i} -\sum_{i \in \ell} [\my \onevec]_{i} \log[\my \onevec]_{i}\right| 
     \leq \sum_{i \in \ell}\left| [\mx \onevec]_{i} \log[\mx \onevec]_{i} - [\my \onevec]_{i} \log[\my \onevec]_{i}\right|\\
     &\leq \sum_{i \in \ell}(\frac{1}{n^2}+|[\mx \onevec]_{i}-[\my \onevec]_{i}|) O(\log n) 
     \leq (\frac{1}{n}+\|\mx-\my\|_1) O(\log n)
     \leq O(\alpha \log n)+O(\log n)
\end{align*}
We conclude the proof.
\end{proof}

\section{Description and guarantees of $\pround$}\label{subsec:partial}
Here we provide the description of $\pround$ and its guarantees are summarized in \Cref{lem:partial-round}.

\begin{algorithm}[H]
	\caption{$\pround(\mx,i)$}
	\begin{algorithmic}[1]
	\State $\mtz=\msplit{\mx}{i}$. \;
	\State $(\mw,j)=\proundx(\mtz,i+1)$. \;
	\State $\my=\mcombine{\mw}{i}$. \;
	\State Return $(\my,j)$. \;
	\end{algorithmic}
	\label{alg:pround}
\end{algorithm}

% \begin{algorithm}[H]
%     \SetAlgoLined\DontPrintSemicolon
% 	\SetKwFunction{proc}{$\pround$}
% 	\SetKwProg{myproc}{Procedure}{}{}
% 	\myproc{\proc{$\mx,i$}}{
% 	\nl $\mtz=\msplit{\mx}{i}$. \;\\
% 	\nl $(\mw,j)=\proundx(\mtz,i+1)$. \;\\
% 	\nl $\my=\mcombine{\mw}{i}$. \;\\
% 	\nl Return $(\my,j)$. \;
%     }
% 	\caption{$\pround(\mx,i)$} \label{alg:pround}
% \end{algorithm}

\lempround*

Our algorithm invokes several subroutines and in the following we provide description for each of these and also state their guarantees. The subroutine $\proundx$ provides an algorithm that proves the \Cref{lem:partial-round} in the special case of $[\mx \onevec]_i \in [0,1)$. The description of this subroutine and its guarantees are stated below. For convenience, we first introduce some new notation and a new operation $\transname$.
\paragraph{Notation:} For any matrix $\mx \in \R_{\geq 0}^{\ell \times [0,k]}$ and indices $i_1,i_2,j_1,j_2$, we define $\summx{i_1}{i_2}{j_1}{j_2} \defeq \sum_{i=i_1}^{i_2}\sum_{j=j_1}^{j_2}\mx_{ij}$ and say $(i,j)\in \twodinterval{i_1}{i_2}{j_1}{j_2}$ if $i_1\leq i\leq i_2$ and $j_1\leq j\leq j_2$. Therefore, $\summx{i_1}{i_2}{j_1}{j_2} = \sum_{(i,j)\in \twodinterval{i_1}{i_2}{j_1}{j_2}}\mx_{ij}$. Also note that when $i_1\geq i_2+1$ or $j_1\geq j_2+1$, we are summing over a empty set and $\summx{i_1}{i_2}{j_1}{j_2}\defeq 0$.

\begin{lem}[Guarantees of $\transname$]\label{lem:trans}
Let $\mx \in \R_{\geq 0}^{\ell \times [0,k]}$. For any indices $i_1\leq i_2+1 \leq i_3\leq i_4+1$, $j_1\leq j_2+1 \leq j_3\leq j_4+1$, and $0\leq v\leq \min\{  \summx{i_3}{i_4}{j_1}{j_2}, \summx{i_1}{i_2}{j_3}{j_4}\}$, there exists $\my \in \R_{\geq 0}^{\ell \times [0,k]}$ such that,
$\my$ is within $v$-swap distance from $\mx$, and
%\iffalse%
\begin{align*}
  %& \bullet ~\sum_{i \in [\ell], j\in [0,k]} m_j \log r_i \my_{ij} \geq \sum_{i \in [\ell], j\in [0,k]} m_j \log r_i \mx_{ij} ~~ \bullet ~\|\my-\mx\|_1\leq 4v\\
  %&\bullet ~\sum_{i \in [\ell]}\my_{ij}=\sum_{i \in [\ell]}\mx_{ij} \text{ for all } j \in [0,k] ~~ \bullet ~\sum_{j \in [0,k]}\my_{ij} =\sum_{j \in [0,k]}\mx_{ij} \text{ for all } i \in [\ell],\\
  &\bullet ~\summy{i_1}{i_2}{j_1}{j_2} = \summx{i_1}{i_2}{j_1}{j_2}+v ~~ \bullet ~\summy{i_3}{i_4}{j_3}{j_4} = \summx{i_3}{i_4}{j_3}{j_4}+v,\\
  &\bullet ~\summy{i_1}{i_2}{j_3}{j_4} = \summx{i_1}{i_2}{j_3}{j_4}-v ~~ \bullet ~\summy{i_3}{i_4}{j_1}{j_2} = \summx{i_3}{i_4}{j_1}{j_2}-v.\\
  &\bullet \my_{ij}=\mx_{ij} ~~\forall (i,j)\notin \twodinterval{i_1}{i_2}{j_1}{j_2} \cup \twodinterval{i_1}{i_2}{j_3}{j_4} \cup \twodinterval{i_3}{i_4}{j_1}{j_2} \cup \twodinterval{i_3}{i_4}{j_3}{j_4}.
\end{align*}
%\fi%

%Note that $m_j \in \Z_{\geq 0}$ and $r_i \in (0,1)$ $m_0\geq m_1\geq m_2\geq \dots \geq m_k$ and $r_1\geq r_2\geq \dots\geq r_\ell$.
  
Furthermore, we define operation $\trans{\mx}{v}{\transpara{i_1}{i_2}{i_3}{i_4}{j_1}{j_2}{j_3}{j_4}}$ whose output is $\my$ that satisfies properties above, and the running time of process $\transname$ is $O(\ell k)$.
\end{lem}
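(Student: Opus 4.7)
The plan is to construct $\my$ from $\mx$ via an iterative sequence of elementary $\swap$ operations, each of which transfers mass between one entry of the upper-right block $\twodinterval{i_1}{i_2}{j_3}{j_4}$ and one entry of the lower-left block $\twodinterval{i_3}{i_4}{j_1}{j_2}$, redistributing it into the corresponding entries of the upper-left and lower-right blocks. The ordering assumptions $i_2+1 \leq i_3$ and $j_2+1 \leq j_3$ guarantee $a < c$ and $b < d$ whenever $a \in [i_1,i_2]$, $c \in [i_3,i_4]$, $b \in [j_1,j_2]$, $d \in [j_3,j_4]$, so each such operation is a legal call to $\swap$ in the sense of \Cref{def:swap}, and it changes exactly four corner entries: $+\epsilon$ on $(a,b)$ and $(c,d)$, and $-\epsilon$ on $(a,d)$ and $(c,b)$.

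Concretely, I would maintain a working matrix $\mx'$ (initialized to $\mx$) and a remaining budget $v' \in [0,v]$ (initialized to $v$). While $v' > 0$, I pick any $(a,d)$ in the upper-right block with $\mx'_{ad} > 0$ and any $(c,b)$ in the lower-left block with $\mx'_{cb} > 0$, set $\epsilon = \min(\mx'_{ad}, \mx'_{cb}, v')$, update $\mx' \leftarrow \swap(\mx', a, c, b, d, \epsilon)$, and decrement $v' \leftarrow v' - \epsilon$. When the loop exits I take $\my \defeq \mx'$. Summing the signed per-entry contributions of all swaps, the four required block-sum identities (such as $\summy{i_1}{i_2}{j_1}{j_2} = \summx{i_1}{i_2}{j_1}{j_2}+v$) follow from the fact that the total $\epsilon$ across all iterations equals $v$. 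The condition that entries outside the four blocks are unchanged is automatic since no swap touches such entries, and the $v$-swap-distance claim is immediate from the budget accounting.

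The key thing to verify is that the loop always finds eligible positive entries while $v' > 0$, and that it terminates quickly. For existence: after the budget drop from $v$ to $v'$, the sum of the upper-right block of $\mx'$ equals $\summx{i_1}{i_2}{j_3}{j_4} - (v - v') \geq v - (v - v') = v'$, and likewise for the lower-left block, so both sums remain strictly positive while the loop is active, guaranteeing a positive entry in each. For termination: each iteration either sets $v' = 0$ (exiting the loop) or zeros one of $\mx'_{ad}$ or $\mx'_{cb}$ when the $\min$ is attained there, so the number of iterations is bounded by the number of entries, $O(\ell k)$.

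The main obstacle I anticipate is achieving the claimed $O(\ell k)$ total running time rather than just an $O(\ell k)$ iteration count, since naively re-scanning for positive entries each iteration could be too expensive. I would address this by maintaining two sparse lists with advancing pointers over the positive entries in the upper-right and lower-left blocks: each iteration only advances a pointer when the corresponding entry is zeroed, so each entry is visited $O(1)$ times. With $O(\ell k)$ preprocessing to construct the lists and $O(1)$ work per swap, the overall cost comes out to $O(\ell k)$, matching the stated bound.
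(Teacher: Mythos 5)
Your proposal is correct and follows essentially the same route as the paper: the paper's proof is an induction on the number of positive entries in the upper-right and lower-left blocks, which at each step picks arbitrary positive entries $(a,d)$ and $(c,b)$ from those blocks, swaps by $\min\{\mx_{ad},\mx_{cb},v\}$, and recurses — exactly your loop. Your pointer-list argument for the $O(\ell k)$ running time is in fact slightly more careful than the paper's one-line justification.
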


\figurepack{}{$\transname$~operation. It increases two blocks by $v$ respectively, and decreases two blocks by $v$ respectively.}{
\begin{tikzpicture} % pic for trans
\draw (0,0) rectangle (5,5);
\draw [help lines] (0,1) -- (5,1);
\draw [help lines] (0,2) -- (5,2);
\draw [help lines] (0,3) -- (5,3);
\draw [help lines] (0,4) -- (5,4);
\draw [help lines] (1,0) -- (1,5);
\draw [help lines] (2,0) -- (2,5);
\draw [help lines] (3,0) -- (3,5);
\draw [help lines] (4,0) -- (4,5);
\node  at (-0.2,4.8) {$1$} ;
\node  at (-0.2,3.8) {$i_1$} ;
\node  at (-0.2,3.2) {$i_2$} ;
\node  at (-0.2,1.8) {$i_3$} ;
\node  at (-0.2,1.2) {$i_4$} ;
\node  at (-0.2,0.2) {$\ell$} ;
\node  at (0.2,5.2) {$0$} ;
\node  at (1.2,5.2) {$j_1$} ;
\node  at (1.8,5.2) {$j_2$} ;
\node  at (3.2,5.2) {$j_3$} ;
\node  at (3.8,5.2) {$j_4$} ;
\node  at (4.8,5.2) {$k$} ;
\node  at (1.5,3.5) {$+v$} ;
\node  at (3.5,3.5) {$-v$} ;
\node  at (1.5,1.5) {$-v$} ;
\node  at (3.5,1.5) {$+v$} ;
\end{tikzpicture}
}

We defer the proof for \cref{lem:trans} to \cref{sec:proof:trans}. The description of our subroutine $\proundx$ is as follows.
\begin{algorithm}[H]
	\caption{$\proundx(\mx,i)$}
	\begin{algorithmic}[1]
	\State $j=\min\{r:  \mx(1:\ell, 0:r) > \mx(1:i-1, 0:k)$. \;
	\State $t=\mx(1:i-1, 0:k) -\mx(1:\ell, 0:j-1)$.\;
    \State $\my=\mx$.\;
	\State $\my=\trans{\my}{\summy{i}{i}{0}{j-1}}{\transpara{1}{i-1}{i}{i}{0}{j-1}{j}{k}}$. \label{alg:prs:1}\;
	\If{ $\summy{1}{i-1}{j}{j}\geq t$ }
	    \State $v=\summy{1}{i-1}{j}{j}-\left\lfloor\summy{1}{i-1}{j}{j}\right\rfloor$. \;
	    \State $\my=\trans{\my}{\min\left\{v+1 , \summy{1}{i-1}{j}{j}-t\right\}}{\transpara{1}{i-1}{i+1}{\ell}{0}{j-1}{j}{j}}$. \label{alg:prs:2} \;
	\Else
	    \State $v=\left\lceil\summy{1}{i-1}{j}{j}\right\rceil - \summy{1}{i-1}{j}{j}$. \;
	    \State $\my=\trans{\my}{v}{\transpara{1}{i-1}{i}{\ell}{j}{j}{j+1}{k}}$. \label{alg:prs:3} \;
	   \EndIf
	\State $\my=\trans{\my}{\summy{i}{i}{j+1}{k}}{\transpara{i}{i}{i+1}{\ell}{j}{j}{j+1}{k}}$. \label{alg:prs:4} \;
	\State Return $(\my,j)$. \label{alg:prs:5} \;
	\end{algorithmic}
	\label{alg:proundx}
\end{algorithm}

\figurepack{}{Algorithm \ref{alg:proundx}. We use a $3\times 3$ partitioned matrix to denote $\my\in \R_{\geq 0}^{\ell \times [0,k]}$, where the upper (middle / lower) row denotes the first $(i-1)$ rows (the $i$-th row / the last $(\ell-i)$ rows) of $\my$, and the left (middle / right) column denotes the $0$-th to $(j-1)$-th columns (the $j$-th column / the last $(k-j)$ columns) of $\my$. Symbol $+$ or $-$ means increasing or decreasing some elements in the block. Symbol $=0$ means all the elements in the block are zero. Symbol $\in \Z_{\geq 0}$ means the sum of the block is a non-negative integer.}{
\begin{tikzpicture} % pic for partial_round
\coordinate (p1) at (0,0);
\coordinate (p2) at ([xshift=5cm] p1);
\coordinate (p3) at ([xshift=5cm] p2);
\coordinate (p33) at ([yshift=-5cm] p2);
\coordinate (p4) at ([yshift=-5cm] p3);
\coordinate (o1) at ([xshift=3cm,yshift=3cm] p1);
\coordinate (o2) at ([xshift=3cm,yshift=3cm] p2);
\coordinate (o3) at ([xshift=3cm,yshift=3cm] p3);
\coordinate (o33) at ([xshift=3cm,yshift=3cm] p33);
\coordinate (o4) at ([xshift=3cm,yshift=3cm] p4);
\coordinate (q1) at ([xshift=3.25cm] p1);
\coordinate (q2) at ([xshift=3.25cm] p2);
\coordinate (q22) at ([yshift=-1.75cm] p2);
\coordinate (q3) at ([yshift=-1.75cm] p3);
\coordinate (q33) at ([xshift=3.25cm] p33);
\coordinate (q4) at ([xshift=3.25cm] p2);
\coordinate (r1) at ([xshift=1.5cm,yshift=1.5cm] q1);
\coordinate (r2) at ([xshift=1.5cm,yshift=1.5cm] q2);
\coordinate (r22) at ([xshift=1.5cm,yshift=1.5cm] q22);
\coordinate (r3) at ([xshift=1.5cm,yshift=1.5cm] q3);
\coordinate (r33) at ([xshift=1.5cm,yshift=1.5cm] q33);
\draw (p1) rectangle (o1);
\draw (p2) rectangle (o2);
\draw (p3) rectangle (o3);
\draw (p33) rectangle (o33);
\draw (p4) rectangle (o4);
\draw (q1) rectangle (r1);
\draw (q2) rectangle (r2);
\draw (q22) rectangle (r22);
\draw (q3) rectangle (r3);
\draw (q33) rectangle (r33);
\draw [help lines, shift=(p1)]
{
(0,1)--(3,1) 
(0,2)--(3,2)
(1,0)--(1,3)
(2,0)--(2,3)
};
\draw [help lines, shift=(p2)]
{
(0,1)--(3,1) 
(0,2)--(3,2)
(1,0)--(1,3)
(2,0)--(2,3)
};
\draw [help lines, shift=(p3)]
{
(0,1)--(3,1) 
(0,2)--(3,2)
(1,0)--(1,3)
(2,0)--(2,3)
};
\draw [help lines, shift=(p33)]
{
(0,1)--(3,1) 
(0,2)--(3,2)
(1,0)--(1,3)
(2,0)--(2,3)
};
\draw [help lines, shift=(p4)]
{
(0,1)--(3,1) 
(0,2)--(3,2)
(1,0)--(1,3)
(2,0)--(2,3)
};
\draw[->, shift=(p1)]  (3.2,2) --  (4.8,2);
\draw[->, shift=(p2)]  (3.2,2) --  (4.8,2);
\draw[->, shift=(p33)]  (3.2,2) --  (4.8,2);
\draw[->, shift=(p2)]  (2,-0.2) --  (2,-1.8);
\draw[->, shift=(p3)]  (2,-0.2) --  (2,-1.8);
\node at ([xshift=4cm,yshift=2.3cm] p1)  {  \Cref{alg:prs:1}};
\node at ([xshift=4cm,yshift=2.3cm] p2)  {  \Cref{alg:prs:2}};
\node at ([xshift=4cm,yshift=2.3cm] p33)  {  \Cref{alg:prs:4}};
\node at ([xshift=2.7cm,yshift=-1cm] p2)  {  \Cref{alg:prs:3}};
\node at ([xshift=2.7cm,yshift=-1cm] p3)  {  \Cref{alg:prs:4}};
\draw [help lines, shift=(q1)]
{
(0,0.5)--(1.5,0.5) 
(0,1)--(1.5,1)
(0.5,0)--(0.5,1.5)
(1,0)--(1,0.5)
};
\node at ([xshift=0.25cm,yshift=1.25cm] q1) {$+$};
\node at ([xshift=0.25cm,yshift=0.75cm] q1) {$-$};
\node at ([xshift=1cm,yshift=1.25cm] q1) {$-$};
\node at ([xshift=1cm,yshift=0.75cm] q1) {$+$};
\draw [help lines, shift=(q2)]
{
(0,0.5)--(1.5,0.5) 
(0,1)--(1.5,1)
(0.5,0)--(0.5,1.5)
(1,0)--(1,1.5)
};
\node at ([xshift=0.25cm,yshift=1.25cm] q2) {$+$};
\node at ([xshift=0.25cm,yshift=0.25cm] q2) {$-$};
\node at ([xshift=0.75cm,yshift=1.25cm] q2) {$-$};
\node at ([xshift=0.75cm,yshift=0.25cm] q2) {$+$};
\draw [help lines, shift=(q22)]
{
(0,0.5)--(0.5,0.5) 
(0,1)--(1.5,1)
(0.5,0)--(0.5,1.5)
(1,0)--(1,1.5)
};
\node at ([xshift=0.75cm,yshift=1.25cm] q22) {$+$};
\node at ([xshift=0.75cm,yshift=0.5cm] q22) {$-$};
\node at ([xshift=1.25cm,yshift=1.25cm] q22) {$-$};
\node at ([xshift=1.25cm,yshift=0.5cm] q22) {$+$};
\draw [help lines, shift=(q3)]
{
(0,0.5)--(1.5,0.5) 
(0,1)--(1.5,1)
(0.5,0)--(0.5,1.5)
(1,0)--(1,1.5)
};
\node at ([xshift=0.75cm,yshift=0.75cm] q3) {$+$};
\node at ([xshift=0.75cm,yshift=0.25cm] q3) {$-$};
\node at ([xshift=1.25cm,yshift=0.75cm] q3) {$-$};
\node at ([xshift=1.25cm,yshift=0.25cm] q3) {$+$};
\draw [help lines, shift=(q33)]
{
(0,0.5)--(1.5,0.5) 
(0,1)--(1.5,1)
(0.5,0)--(0.5,1.5)
(1,0)--(1,1.5)
};
\node at ([xshift=0.75cm,yshift=0.75cm] q33) {$+$};
\node at ([xshift=0.75cm,yshift=0.25cm] q33) {$-$};
\node at ([xshift=1.25cm,yshift=0.75cm] q33) {$-$};
\node at ([xshift=1.25cm,yshift=0.25cm] q33) {$+$};
\node at ([xshift=0.5cm,yshift=1.5cm] p2) {$=0$};
\node at ([xshift=0.5cm,yshift=1.5cm] p3) {$=0$};
\node at ([xshift=0.5cm,yshift=1.5cm] p33) {$=0$};
\node at ([xshift=0.5cm,yshift=1.5cm] p4) {$=0$};
\node at ([xshift=1.5cm,yshift=2.5cm] p3) {$\in \Z_{\geq 0}$};
\node at ([xshift=1.5cm,yshift=2.5cm] p33) {$\in \Z_{\geq 0}$};
\node at ([xshift=1.5cm,yshift=2.5cm] p4) {$\in \Z_{\geq 0}$};
\node at ([xshift=2.5cm,yshift=1.5cm] p4) {$=0$};
%
%\node at ([xshift=-0.2cm,yshift=2.8cm] p1) {$1$};
%\node at ([xshift=-0.2cm,yshift=2.2cm] p1) {$i-1$};
%\node at ([xshift=-0.2cm,yshift=1.5cm] p1) {$i$};
%\node at ([xshift=-0.2cm,yshift=0.8cm] p1) {$i+1$};
%\node at ([xshift=-0.2cm,yshift=0.2cm] p1) {$\ell$};
%\node at ([xshift=0.2cm,yshift=3.2cm] p1) {$1$};

\end{tikzpicture}
}

% \begin{algorithm}[H]
%     \SetAlgoLined\DontPrintSemicolon
% 	\SetKwFunction{proc}{$\proundx$}
% 	\SetKwProg{myproc}{Procedure}{}{}
% 	\myproc{\proc{$\mx,i$}}{
% 	\nl $j=\min\{r:\sum_{j'=0}^{r}\sum_{i'\in [\ell]}\mx_{i'j'} > \sum_{i'=1}^{i-1}\sum_{j\in [0,k]}\mx_{i'j'}\}$. \;\\
% 	\nl $t=\sum_{i'=1}^{i-1}\sum_{j\in [0,k]}\mx_{i'j'} - \sum_{j'=0}^{j-1}\sum_{i'\in [\ell]}\mx_{i'j'}$.\;\\
%     \nl $\my=\mx$.\;\\
% 	\nl $\trans{\my}{\summy{i}{i}{0}{j-1}}{\transpara{1}{i-1}{i}{i}{0}{j-1}{j}{k}}$. \label{alg:prs:1}\;\\
% 	\eIf{ $\summy{1}{i-1}{j}{j}\geq t$ }
% 	{
% 	    \nl $v=\summy{1}{i-1}{j}{j}-\left\lfloor\summy{1}{i-1}{j}{j}\right\rfloor$. \;\\
% 	    \nl $\trans{\my}{\min\left\{v+1 , \summy{1}{i-1}{j}{j}-t\right\}}{\transpara{1}{i-1}{i+1}{\ell}{0}{j-1}{j}{j}}$. \label{alg:prs:2} \;
% 	}
% 	{
% 	    \nl $v=\left\lceil\summy{1}{i-1}{j}{j}\right\rceil - \summy{1}{i-1}{j}{j}$. \;\\
% 	    \nl $\trans{\my}{v}{\transpara{1}{i-1}{i}{\ell}{j}{j}{j+1}{k}}$. \label{alg:prs:3} \;
% 	}
% 	\nl $\trans{\my}{\summy{i}{i}{j+1}{k}}{\transpara{i}{i}{i+1}{\ell}{j}{j}{j+1}{k}}$. \label{alg:prs:4} \;\\
% 	\nl Return $(\my,j)$. \label{alg:prs:5}} \;
% 	\caption{$\proundx(\mx,i)$} \label{alg:proundx}
% \end{algorithm}

%In the above lemma, we have defined an operation called $\transname$, the description of this operation is specified in the proof of \Cref{lem:trans} and due to space constraints we defer its proof to appendix. As discussed earlier, following is the guarantee of subroutine $\proundx$.
\begin{lem}[Guarantee of $\proundx$]\label{lem:proundx}
The algorithm $\proundx$ takes in inputs $\mx \in \R^{\ell \times [0,k]}_{\geq 0}$ and $i \in [1,\ell-1]$ that satisfy the conditions of \Cref{lem:partial-round} with an additions assumption that $\sum_{j' \in [0,k]}\mx_{ij'} \in [0,1)$ and outputs a matrix $\my \in \R_{\geq 0}^{\ell \times [0,k]}$ and an index $j$ that satisfy the guarantees of \Cref{lem:partial-round}. Furthermore, the running time of this procedure is $O(\ell k)$.
% The algorithm $\proundx$ takes in inputs $\mx \in \R^{\ell \times [0,k]}_{\geq 0}$ and $i \in [1,\ell-1]$ that satisfies the following, 
% $$\sum_{j' \in [0,k]}\mx_{i'j'} \in \Z_{\geq 0} \text{ for all } i'\in [1,i-1] , ~\sum_{i' \in [\ell]} \mx_{i'j'} \in \Z_{\geq 0} \text{ for all }j'\in [0,k] \text{ and }\sum_{j' \in [0,k]}\mx_{ij'} \in [0,1)~,$$
% and outputs a matrix $\my \in \R_{\geq 0}^{\ell \times [0,k]}$ such that $\|\my-\mx\|_1\leq O(1)$ and an index $j$ such that,
% \begin{align}
%   & \bullet ~\sum_{i' \in [\ell], j'\in [0,k]} m_{j'} \log r_{i'} \my_{i'j'} \geq \sum_{i' \in [\ell], j'\in [0,k]} m_{j'} \log r_{i'} \mx_{i'j'}    \\ 
%   & \bullet ~\sum_{i' \in [\ell]}\my_{i'j'}=\sum_{i' \in [\ell]}\mx_{i'j'} \text{ for all } j' \in [0,k]\\
%   &\bullet ~\sum_{j' \in [0,k]}\my_{i'j'} =\sum_{j' \in [0,k]}\mx_{i'j'} \text{ for all } i' \in [\ell]
%   ~~\bullet ~\my_{ij} = \sum_{j' \in [0,k]}\mx_{ij'} \text{ and } \sum_{i'=1}^{i-1} \my_{i'j} \in \Z_{\geq 0}.
% \end{align}
% Furthermore, the running time of the algorithm is $O(\ell k)$.
\end{lem}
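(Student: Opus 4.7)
My plan is to verify the guarantees of $\proundx$ by tracing through its three $\trans$ calls and showing that each satisfies the hypotheses of \Cref{lem:trans}, maintains the claimed structural invariants, and contributes an appropriately bounded amount of swap distance.

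First I set up notation. Let $o = [\mx \onevec]_i$, which lies in $[0,1)$ by the extra hypothesis, so $\lfloor o\rfloor =0$. The algorithm picks $j$ as the smallest column index with $\summx{1}{\ell}{0}{j}>\summx{1}{i-1}{0}{k}$. Because the first $i-1$ row sums and all column sums are integers, the quantity $t\defeq \summx{1}{i-1}{0}{k}-\summx{1}{\ell}{0}{j-1}$ is a non-negative integer strictly smaller than the column-$j$ sum $[\mx^\top \onevec]_j$.

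Next I analyze each $\trans$ call using \Cref{lem:trans}. Line 4 concentrates the mass of row $i$ into columns $[j,k]$: the value $u_1=\summx{i}{i}{0}{j-1}\leq o<1$ is transferred, and after this call $\summy{i}{i}{0}{j-1}=0$ and $\summy{i}{i}{j}{k}=o$, with all row and column sums preserved. In the if-branch on line 6 I check that the value $u_2=\min\{v+1,\summy{1}{i-1}{j}{j}-t\}$ satisfies \Cref{lem:trans}'s capacity hypothesis (using the column-$j$ sum identity together with $t<[\mx^\top \onevec]_j$) and that the resulting $\summy{1}{i-1}{j}{j}$ becomes either $\lfloor \summy{1}{i-1}{j}{j}\rfloor -1$ or $t$, in both cases a non-negative integer. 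In the else-branch the smaller value $u_2=\lceil \summy{1}{i-1}{j}{j}\rceil -\summy{1}{i-1}{j}{j}<1$ rounds $\summy{1}{i-1}{j}{j}$ up to an integer. Finally, line 11 clears whatever mass in row $i$ still lies outside column $j$ by a transfer of value $u_3=\summy{i}{i}{j+1}{k}<o<1$.

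With the three $\trans$ calls in place, the stated properties follow directly. Row $i$ of $\my$ has zero entries outside column $j$ and value exactly $o$ at column $j$, so $\my_{ij}=o\geq o$ and $\sum_{i'=1}^{i-1}\my_{i'j}+\my_{ij}-o=\summy{1}{i-1}{j}{j}$ is a non-negative integer. For the $3$-swap distance bound I sum the three $\trans$ values: each of $u_1$ and $u_3$ is strictly less than $1$ (bounded by mass of row $i$), while $u_2<1$ in the else-branch and $u_2\leq v+1<2$ in the if-branch. Running time is $O(\ell k)$ since a constant number of $\trans$ calls each runs in $O(\ell k)$ and computing $j$, $t$, $v$ takes $O(\ell k)$. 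The main obstacle is the careful block-sum bookkeeping required to verify the capacity hypotheses of every $\trans$ with the stated value, and to shave the swap-distance bound in the if-branch down from the naive $<4$ to the claimed $3$; this should be doable by exploiting the implementation freedom in $\trans$ (for instance, directing the line-9 swap entirely onto rows $[i+1,\ell]$ so that $u_3$ is truly bounded by the untouched row-$i$ mass) together with the algebraic relationships among $o$, $t$, and $v$.
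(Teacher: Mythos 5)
Your overall plan coincides with the paper's: trace the three $\transname$ calls, invoke \Cref{lem:trans} for each, and read off the output properties (row $i$ concentrated at column $j$ with $\my_{ij}=o$, and $\summy{1}{i-1}{j}{j}$ forced to a non-negative integer by the if/else step). The setup is correct, including the key facts that $t$ is a non-negative integer with $t\le \summx{1}{\ell}{j}{j}-1$. However, the actual mathematical content of this lemma --- and the entire second half of the paper's proof --- is verifying that each $\transname$ call is \emph{feasible}, i.e.\ that its transfer value is at most the mass of \emph{both} blocks being decreased, and your proposal only asserts that this ``can be checked.'' The most delicate instance is the final call $\trans{\my}{\summy{i}{i}{j+1}{k}}{\transpara{i}{i}{i+1}{\ell}{j}{j}{j+1}{k}}$: bounding its value by $o<1$ only controls the source block $[i:i,j+1:k]$; one must also show $\summy{i+1}{\ell}{j}{j}\ge\summy{i}{i}{j+1}{k}$ for the other decreased block. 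The paper obtains this by first proving $\summy{i}{\ell}{j}{j}\ge 1$ after the if/else step --- via a case analysis on which argument of the $\min$ is attained on \Cref{alg:prs:2}, together with integrality of $t$ --- and then computing $\summy{i+1}{\ell}{j}{j}=\summy{i}{\ell}{j}{j}-\my_{ij}>\summy{i}{i}{0}{k}-\my_{ij}=\summy{i}{i}{j+1}{k}$. Nothing in your sketch supplies this step, and it is not a routine consequence of the facts you list. Similarly, feasibility of \Cref{alg:prs:2} rests on $\summy{i+1}{\ell}{0}{j-1}\ge \summy{1}{i-1}{j}{j}-t$, which follows from the identity $\summy{1}{i-1}{0}{k}=\summy{1}{\ell}{0}{j-1}+t$ combined with $\summy{i}{i}{0}{j-1}=0$ after \Cref{alg:prs:1}; the fact $t<[\mx^{\top}\onevec]_{j}$ that you cite there is instead what feeds the else-branch and the final call.

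On the swap-distance constant: your observation that the if-branch transfer can equal $v+1\in[1,2)$, so the naive total is only bounded by $4$, is legitimate --- the paper's proof simply asserts each transfer is at most $1$, which fails in that case. But your proposed repair does not work: by \Cref{def:swapdis} the swap distance contributed by a call $\trans{\cdot}{u}{\cdot}$ is the transferred amount $u$ itself, so no ``implementation freedom'' inside $\transname$ can reduce it below $u$. Fortunately this is immaterial: only an $O(1)$ per-row bound is used in \Cref{lem:matrixround} and downstream, so replacing $3$ by $4$ changes nothing. The substantive gap you need to close is the feasibility bookkeeping above, especially for the last $\transname$ call.
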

We defer the proof for \cref{lem:proundx} to \cref{sec:proof:proundx}. To extend the above algorithm for the general case, we define simple operations $\msplitname$ and $\mcombinename$ that we define next. Intuitively, the operation $\mcombinename$ combines $i$-th and $(i+1)$-th rows by adding them up, and $\msplitname$ splits $i$-th row into two rows so that the summation of one row is an integer and the summation of the other row is less than one.
\begin{defn}[Combine]
For any $\mw \in \R_{\geq 0}^{(\ell+1) \times [0,k]}$ and $t\in [1,\ell]$, $\my\defeq \mcombine{\mw}{t}\in \R_{\geq 0}^{\ell \times [0,k]}$ is defined as follows,
\begin{align*}
    \my_{i,j}=
    \begin{cases}
    \mw_{i,j} & i\leq t-1    \\
    \mw_{i,j}+\mw_{(i+1),j} & i= t    \\
    \mw_{(i+1),j} & i\geq t+1 
    \end{cases}
\end{align*}
\end{defn}

\figurepack{}{$\mcombinename$~operation. It combines the $t$-th row and the $(t+1)$-th row by adding them up}{
\begin{tikzpicture} % pic for combine
\draw (0,0) rectangle (3,4);
\draw [help lines] (0,1) -- (3,1);
\draw [help lines] (0,2) -- (3,2);
\draw [help lines] (0,3) -- (3,3);
\node  at (1.5,3.5) {$\mw_{[1:t-1],[0:k]}$} ;
\node  at (1.5,2.5) {$\mw_{[t:t],[0:k]}$} ;
\node  at (1.5,1.5) {$\mw_{[t+1:t+1],[0:k]}$} ;
\node  at (1.5,0.5) {$\mw_{[t+2:\ell+1],[0:k]}$} ;
\draw (5,0.5) rectangle (8,3.5);
\draw [help lines] (5,1.5) -- (8,1.5);
\draw [help lines] (5,2.5) -- (8,2.5);
\node  at (6.5,3) {$\my_{[1:t-1],[0:k]}$} ;
\node  at (6.5,2) {$\my_{[t:t],[0:k]}$} ;
\node  at (6.5,1) {$\my_{[t+1:\ell],[0:k]}$} ;
\draw[->] (3,3.5)--(5,3);
\draw[->] (3,2.5)--(5,2);
\draw[->] (3,1.5)--(5,2);
\draw[->] (3,0.5)--(5,1);
\end{tikzpicture}
}

\begin{defn}[Split]
Let $\mx \in \R_{\geq 0}^{\ell \times [0,k]}$, $r(t):=\sum_{j\in [0,k]}\mx_{tj}$ and $s(t):=\min\{ j:\sum_{j'\leq j}\mx_{tj'}\geq \lfloor r(t)\rfloor  \}$.
We define $\msplit{\mx}{t}$ to be $\mtz\in \R_{\geq 0}^{(\ell+1) \times [0,k]}$ where,
\begin{align*}
    \mtz_{i,j}= \mx_{i,j}, ~~ i\leq t-1    \quad \quad \quad \quad \mtz_{i,j}= \mx_{(i-1),j},  i\geq t+2
\end{align*}
\begin{align*}
     \mtz_{t,j}=
    \begin{cases}
      \mx_{t,j}, & j<s(t) \\
        \lfloor r(t)\rfloor - \sum_{j'\leq j-1}\mx_{t,j'}, & j=s(t)\\
        0, & j>s(t)
    \end{cases}
    \quad \quad 
    \mtz_{t+1,j}=
    \begin{cases}
    0,  & j<s(t)    \\
     \sum_{j'\leq j}\mx_{t,j'} - \lfloor r(t)\rfloor, &  j=s(t)\\
    \mx_{t,j}, & j>s(t)
    \end{cases}
\end{align*}
% \begin{align*}
%     \mtz_{i,j}=
%     \begin{cases}
%     \mx_{i,j}, ~~ i\leq t-1  ~~~~~~~~~~~~~~~~~~~~~~~~~~~~~~~~~~~~~~~~~~~~~~~~~~~~~~~~ \mx_{t,j}, ~~ i= t, j<s(t)    \\
%     \lfloor r(t)\rfloor - \sum_{j'\leq j-1}\mx_{t,j'}, ~~ i= t, j=s(t)    ~~~~~~~~~~~~~~~~~
%     0, ~~ i= t, j>s(t)    \\
%     \sum_{j'\leq j}\mx_{t,j'} - \lfloor r(t)\rfloor, ~~ i= t+1, j=s(t)  ~~~~~~~~~~~~~~
%     0, ~~ i= t+1, j<s(t)    \\
%     \mx_{t,j}, ~~ i= t+1, j>s(t)    ~~~~~~~~~~~~~~~~~~~~~~~~~~~~~~~~~~~~~~~~~
%     \mx_{(i-1),j},  i\geq t+2 
%     \end{cases}
% \end{align*}
 \end{defn}

\figurepack{}{$\msplitname$~operation. It splits the $t$-th row into two rows, where the sum of one row is an integer, and the sum of the other row is less than one.}{
\begin{tikzpicture} % pic for split
\draw (0,0) rectangle (3,3);
\draw [help lines] (0,1) -- (3,1);
\draw [help lines] (0,2) -- (3,2);
\node  at (1.5,2.5) {$\mx_{[1:t-1],[0:k]}$} ;
\node  at (1.5,1.5) {$\mx_{[t:t],[0:k]}$} ;
\node  at (1.5,0.5) {$\mx_{[t+1:\ell],[0:k]}$} ;
\draw (5,-0.5) rectangle (8,3.5);
\draw [help lines] (5,0.5) -- (8,0.5);
\draw [help lines] (5,1.5) -- (8,1.5);
\draw [help lines] (5,2.5) -- (8,2.5);
\node  at (6.5,3) {$\mtz_{[1:t-1],[0:k]}$} ;
\node  at (6.5,2) {$\mtz_{[t:t],[0:k]}$} ;
\node  at (6.5,1) {$\mtz_{[t+1:t+1],[0:k]}$} ;
\node  at (6.5,0) {$\mtz_{[t+2:\ell+2],[0:k]}$} ;
\draw[->] (3,2.5)--(5,3);
\draw[->] (3,1.5)--(5,2);
\draw[->] (3,1.5)--(5,1);
\draw[->] (3,0.5)--(5,0);
\end{tikzpicture}
}

We remark that $\mtz$ has the following properties: 1) $\mcombine{\mtz}{t}=\mx$, 2) $\sum_{j\in [0,k]}\mtz_{tj}=\lfloor r(t) \rfloor$, 3) $\sum_{j\in [0,k]}\mtz_{(t+1),j}=r(t)-\lfloor r(t) \rfloor$, which is a real number in $[0,1)$.

\iffalse
\begin{algorithm}[H]
	\caption{$\trans{\mx}{v}{\transpara{i_1}{i_2}{i_3}{i_4}{j_1}{j_2}{j_3}{j_4}}$}
	\begin{algorithmic}[1]
		
	\end{algorithmic}
\end{algorithm}
\fi

\subsection{Proof for \cref{lem:trans}} \label{sec:proof:trans}

\begin{proof}[Proof for \Cref{lem:trans}]

We define $r(\mx)$ to be $\left\{(i,j)\in \twodinterval{i_3}{i_4}{j_1}{j_2} \bigg{|} \mx_{ij}>0 \right\}$, and $s(\mx)$ to be $\left\{(i,j)\in \twodinterval{i_1}{i_2}{j_3}{j_4} \bigg{|} \mx_{ij}>0 \right\}$. Note that $r(\mx)$ is also dependent on $i_3,i_4,j_1$ and $j_2$, but they are fixed in this proof, and $s(\mx)$ is also dependent on $i_1,i_2,j_3$ and $j_4$. We prove the lemma by induction.

When $|r(\mx)|+|s(\mx)|\leq 1$, one of $|r(\mx)|$ and $|s(\mx)|$ is 0, so $v=0$, and $\my=\mx$ is a feasible output.

Now we assume the lemma is true when $|r(\mx)|+|s(\mx)|<t$, and show it is still true when $|r(\mx)|+|s(\mx)|=t$, where $t\geq 2$.

If $|r(\mx)|=0$ or $|s(\mx)|=0$, we have $v=0$, and $\my=\mx$ is a feasible output.

If $|r(\mx)|>0$ and $|s(\mx)|>0$, we draw arbitrary elements $(c,b)\in r(\mx)$ and $(a,d)\in s(\mx)$.

\iffalse%
Then we define $\mtz \in \R_{\geq 0}^{\ell \times [0,k]}$, where
\begin{align*}
    \mtz_{ij}=
    \begin{cases}
    1, & i=a,j=b, \\
    1, & i=c,j=d, \\
    -1, & i=a,j=d, \\
    -1, & i=c,j=b, \\
    0, & \text{~else.}
    \end{cases}
\end{align*}
\fi%

Let $u=\min\{ \mx_{cb}, \mx_{ad}, v \}$. Let %$\mw=\mx+u\cdot \mtz$
$\mw=\swap(\mx,a,c,b,d,u)$
, and we have
\iffalse%
\begin{align*}
&\sum_{i \in [\ell], j\in [0,k]} m_j \log r_i \mw_{ij} - \sum_{i \in [\ell], j\in [0,k]} m_j \log r_i \mx_{ij}  \\
=& \sum_{i \in [\ell], j\in [0,k]} m_j \log r_i \cdot u \cdot \mtz_{ij}   \\
=& u\cdot \left( m_b \log r_a + m_d \log r_c - m_b \log r_c + m_d \log r_a \right)
\geq  0,
\end{align*}
where the last step is the rearrangement of inequality. It implies
\begin{align}
\sum_{i \in [\ell], j\in [0,k]} m_j \log r_i \mw_{ij} \geq \sum_{i \in [\ell], j\in [0,k]} m_j \log r_i \mx_{ij}.\label{eq:trans:wxfirst}
\end{align}

We also have
\fi%

\begin{align}
  %&\bullet ~\|\mw-\mx\|_1 = 4u\\
  &\bullet ~\mw\text{~is $u$-swap distance from~}\mx,  \label{eq:trans:wxfirst}\\
  %&\bullet ~\sum_{i \in [\ell]}\mw_{ij}=\sum_{i \in [\ell]}\mx_{ij} \text{ for all } j \in [0,k] ~~ \bullet ~\sum_{j \in [0,k]}\mw_{ij} =\sum_{j \in [0,k]}\mx_{ij} \text{ for all } i \in [\ell],\\
  &\bullet ~\summw{i_1}{i_2}{j_1}{j_2} = \summx{i_1}{i_2}{j_1}{j_2}+u ~~ \bullet ~\summw{i_3}{i_4}{j_3}{j_4} = \summx{i_3}{i_4}{j_3}{j_4}+u,\\
  &\bullet ~\summw{i_1}{i_2}{j_3}{j_4} = \summx{i_1}{i_2}{j_3}{j_4}-u ~~ \bullet ~\summw{i_3}{i_4}{j_1}{j_2} = \summx{i_3}{i_4}{j_1}{j_2}-u.\\
  &\bullet ~\forall (i,j)\notin \twodinterval{i_1}{i_2}{j_1}{j_2} \cup \twodinterval{i_1}{i_2}{j_3}{j_4} \cup \twodinterval{i_3}{i_4}{j_1}{j_2} \cup \twodinterval{i_3}{i_4}{j_3}{j_4}, \mw_{ij}=\mx_{ij}. \label{eq:trans:wxlast}
\end{align}

If $u=v$, $\my=\mw$ is a feasible output. Otherwise, either $u=\mx_{cb}$ or $u=\mx_{ad}$, so either $\mw_{cb}$ or $\mw_{ad}$ becomes zero. Note that $(a,b)$ and $(c,d)$ are not in $\twodinterval{i_1}{i_2}{j_3}{j_4} \cup \twodinterval{i_3}{i_4}{j_1}{j_2}$, so we have $|r(\mw)|+|s(\mw)|\leq |r(\mx)|+|s(\mx)|-1$. Let $\my=\trans{\mw}{v-u}{\transpara{i_1}{i_2}{i_3}{i_4}{j_1}{j_2}{j_3}{j_4}}$, and it is a feasible output because of the following properties.
\begin{itemize}
  \item $\my\text{~is $(u+(v-u)=v)$-swap distance from~}\mx$,
  %& \bullet ~\sum_{i \in [\ell], j\in [0,k]} m_j \log r_i \my_{ij} 
  %\geq \sum_{i \in [\ell], j\in [0,k]} m_j \log r_i \mw_{ij}
  %\geq \sum_{i \in [\ell], j\in [0,k]} m_j \log r_i \mx_{ij}, \\
  %& \bullet ~\|\my-\mx\|_1 \leq \|\my-\mw\|_1 + \|\mw-\mx\|_1 \leq 4(v-u)+4u= 4v, \\
  %&\bullet ~\sum_{i \in [\ell]}\my_{ij}=\sum_{i \in [\ell]}\mx_{ij}=\sum_{i \in [\ell]}\mw_{ij} \text{ for all } j \in [0,k], \\
  %& \bullet ~\sum_{j \in [0,k]}\my_{ij} =\sum_{j \in [0,k]}\mw_{ij}=\sum_{j \in [0,k]}\mx_{ij} \text{ for all } i \in [\ell],\\
  \item $\summy{i_1}{i_2}{j_1}{j_2} 
  = \summw{i_1}{i_2}{j_1}{j_2}+(v-u)
  = \summx{i_1}{i_2}{j_1}{j_2}+v$,
  \item $\summy{i_3}{i_4}{j_3}{j_4} 
  = \summw{i_3}{i_4}{j_3}{j_4}+(v-u)
  = \summx{i_3}{i_4}{j_3}{j_4}+v$,
  \item $\summy{i_1}{i_2}{j_3}{j_4} 
  = \summw{i_1}{i_2}{j_3}{j_4}-(v-u)
  = \summx{i_1}{i_2}{j_3}{j_4}-v$,
  \item $\summy{i_3}{i_4}{j_1}{j_2} 
  = \summw{i_3}{i_4}{j_1}{j_2}-(v-u)
  = \summx{i_3}{i_4}{j_1}{j_2}-v$,
  \item $\forall (i,j)\notin \twodinterval{i_1}{i_2}{j_1}{j_2} \cup \twodinterval{i_1}{i_2}{j_3}{j_4} \cup \twodinterval{i_3}{i_4}{j_1}{j_2} \cup \twodinterval{i_3}{i_4}{j_3}{j_4}, \my_{ij}=\mw_{ij}=\mx_{ij}$,
\end{itemize}
where the relation between $\my$ and $\mw$ follows from inductive assumption, and the relation between $\mw$ and $\mx$ follows from \Cref{eq:trans:wxfirst} to (\ref{eq:trans:wxlast}).

To implement $\transname$, it is sufficient to enumerate all the elements in $r(\mx)$ and $s(\mx)$, so the running time can be $O(\ell k)$.

\end{proof}

\subsection{Proof for \Cref{lem:proundx}} \label{sec:proof:proundx}

 Lemma \ref{lem:proundx} is a special case of Lemma \ref{lem:partial-round}, with an additional input constraint $\sum_{j' \in [0,k]}\mx_{ij'} \in [0,1)$. In the output $\my$, the $i$-th row only has one positive element $\my_{ij}$. In algorithm \ref{alg:proundx},  \Cref{alg:prs:1} makes $\my_{ij'}$ where $j'<j$ be zero,  \Cref{alg:prs:4} makes $\my_{ij'}$ where $j'>j$ be zero, and  \Cref{alg:prs:2} and  \Cref{alg:prs:3} make $\sum_{i'=1}^{i-1} \my_{i'j}$ to be an integer.

\begin{proof}[Proof for \Cref{lem:proundx}]

There are mainly two parts in this proof. The former part proves the properties of output $\my$ based on Lemma \ref{lem:trans}, and the latter part proves the parameters are valid when calling $\transname$ in Algorithm \ref{alg:proundx}. 

\paragraph{First part of the proof}
In $\proundx(\mx,i)$, %we set $\my=\mx$ and then perform $\transname$ operations on $\my$. By Lemma \ref{lem:trans}, the following properties hold.
\iffalse%
\begin{align*}
  & \bullet ~\sum_{i' \in [\ell], j'\in [0,k]} m_{j'} \log r_{i'} \my_{i'j'} \geq \sum_{i' \in [\ell], j'\in [0,k]} m_{j'} \log r_{i'} \mx_{i'j'} \\
  &\bullet ~\sum_{i' \in [\ell]}\my_{i'j'}=\sum_{i' \in [\ell]}\mx_{i'j'} \text{ for all } j' \in [0,k]\\
  &\bullet ~\sum_{j' \in [0,k]}\my_{i'j'} =\sum_{j' \in [0,k]}\mx_{i'j'} \text{ for all } i' \in [\ell]
\end{align*}
\fi%
$\summy{i}{i}{0}{j-1}$ only changes at  \Cref{alg:prs:1}, when it decreases by $\summy{i}{i}{0}{j-1}$, so we have $\summy{i}{i}{0}{j-1}=0$ at  \Cref{alg:prs:5}. $\summy{i}{i}{j+1}{k}$ does not change after  \Cref{alg:prs:4}, where it decreases by $\summy{i}{i}{j+1}{k}$, so we have $\summy{i}{i}{j+1}{k}=0$ at  \Cref{alg:prs:5}. Note that $\my_{ij} = \sum_{j' \in [0,k]}\mx_{ij'}$, so the output $\my$ satisfies
\begin{align*}
    \my_{ij} = \sum_{j' \in [0,k]}\mx_{ij'}
\end{align*}

$\summy{1}{i-1}{j}{j}$ does not change after  \Cref{alg:prs:2} or  \Cref{alg:prs:3}. If the algorithm goes to  \Cref{alg:prs:2}, $\summy{1}{i-1}{j}{j}$ decreases by $v+1$ or $\summy{1}{i-1}{j}{j}-t$, which means $\summy{1}{i-1}{j}{j}=\max\left\{\left(\left\lfloor\summy{1}{i-1}{j}{j}\right\rfloor-1\right), t\right\}$ at  \Cref{alg:prs:5}. If the algorithm goes to  \Cref{alg:prs:3}, $\summy{1}{i-1}{j}{j}$ increases by $v$, which means $\summy{1}{i-1}{j}{j}=\left\lceil\summy{1}{i-1}{j}{j}\right\rceil$ at  \Cref{alg:prs:5}. In both cases, $\summy{1}{i-1}{j}{j}$ is a non-negative integer at  \Cref{alg:prs:5}, so the output $\my$ satisfies 
\begin{align*}
    \sum_{i'=1}^{i-1} \my_{i'j} \in \Z_{\geq 0}
\end{align*}

We use three $\transname$ operations on $\my$ which initially equals to $\mx$, and we have $v\leq 1$ in each operation $\trans{\mx}{v}{\transpara{i_1}{i_2}{i_3}{i_4}{j_1}{j_2}{j_3}{j_4}}$, so $\my$ is $3$-swap distance from $\mx$.%the output $\my$ satisfies
%\begin{align}
%    \|\my-\mx\|_1\leq O(1).
%\end{align}

We use $O(1)$ $\transname$ operations on $\my$, so the algorithm takes $O(\ell k)$ time.

\paragraph{Second part of the proof}
%Now we finish proving the properties of $\my$.
We also need to show $0\leq v\leq \min\{  \summx{i_3}{i_4}{j_1}{j_2}, \summx{i_1}{i_2}{j_3}{j_4}\}$ in each operation $\trans{\mx}{v}{\transpara{i_1}{i_2}{i_3}{i_4}{j_1}{j_2}{j_3}{j_4}}$ to ensure the operations are valid.

Based on the definition of $j$ and $t$, we have
\begin{align*}
    & \summx{1}{i-1}{0}{k}=\summx{1}{\ell}{0}{j-1}+t,  \\
    & 0\leq t < \summy{1}{\ell}{j}{j}.
\end{align*}

Note that $t$ and $\summy{1}{\ell}{j}{j}$ are both integers, so we have
\begin{align*}
    t \leq  \summy{1}{\ell}{j}{j}-1.
\end{align*}

Before  \Cref{alg:prs:1}, $\my=\mx$, and we have $\summy{1}{i-1}{0}{k}\geq \summy{1}{\ell}{1}{j-1}$, which implies
\begin{align*}
    \summy{1}{i-1}{j}{k}\geq \summy{i}{\ell}{0}{j-1}\geq \summy{i}{i}{0}{j-1},
\end{align*}
which implies  \Cref{alg:prs:1} is valid.

Before  \Cref{alg:prs:2}, we have $\summy{1}{i-1}{0}{k}= \summy{1}{\ell}{0}{j-1}+t$. Also, we have $\summy{i}{i}{0}{j-1}=0$ because of  \Cref{alg:prs:1}. So, we have 
\begin{align*}
    \summy{i+1}{\ell}{0}{j-1}\geq \summy{1}{i-1}{j}{j}- t,
\end{align*}
which implies  \Cref{alg:prs:2} is valid.

Before  \Cref{alg:prs:3}, we have $\summy{1}{\ell}{j}{j}\geq t$. Note that $t$ is an integer, so when $\summy{1}{i-1}{j}{j}< t$, we have
\begin{align*}
    \summy{i}{\ell}{j}{j}
    =\summy{1}{\ell}{j}{j}-\summy{1}{i-1}{j}{j}
    \geq t-\summy{1}{i-1}{j}{j}
    \geq v.
\end{align*}
We also have,
\begin{align*}
    \summy{1}{i-1}{j+1}{k}
    &=\summy{1}{i-1}{0}{k}-\summy{1}{i-1}{0}{j-1}-\summy{1}{i-1}{j}{j} \\
    &\geq \left(\summy{1}{i-1}{0}{k}-\summy{1}{\ell}{0}{j-1}\right)-\summy{1}{i-1}{j}{j} \\
    &= t-\summy{1}{i-1}{j}{j}
    \geq v.
\end{align*}
These two formulas imply  \Cref{alg:prs:3} is valid.

After  \Cref{alg:prs:2}, we have either $\summy{1}{i-1}{j}{j}=t$ (when $v+1<\summy{1}{i-1}{j}{j}-t$) or $\summy{i+1}{\ell}{j}{j}\geq 1$ (the other case). After  \Cref{alg:prs:3}, we have $\summy{1}{i-1}{j}{j}\leq t$. Since $t$ is an integer, when $\summy{1}{i-1}{j}{j}\leq t$, we have
\begin{align*}
    \summy{i}{\ell}{j}{j}
    =\summy{1}{\ell}{j}{j}-\summy{1}{i-1}{j}{j}
    \geq 1.
\end{align*}
As a result, we always have $\summy{i}{\ell}{j}{j}\geq 1$ before  \Cref{alg:prs:4}. Note that $\summy{i}{i}{0}{k}< 1$ and $\summy{i}{i}{0}{j-1}=0$, so we have
\begin{align*}
    \summy{i+1}{\ell}{j}{j}
    =\summy{i}{\ell}{j}{j} - \my_{ij}
    >\summy{i}{i}{0}{k} - \my_{ij}
    = \summy{i}{i}{j+1}{k},
\end{align*}
which implies  \Cref{alg:prs:4} is valid.
\end{proof}

\subsection{Proof for \Cref{lem:partial-round}}  \label{sec:proof:partial-round}

\begin{proof}[Proof of \Cref{lem:partial-round}]

We prove the properties mentioned in Lemma \ref{lem:partial-round} one by one.
Firstly we have
\begin{itemize}
  \item  $\my_{ij} \geq \mw_{(i+1),j} = \sum_{j' \in [0,k]}\mtz_{(i+1),j'} = o$,
  \item  $\sum_{i'=1}^{i-1} \my_{i'j}+\my_{ij}-o
   = \sum_{i'=1}^{i+1} \mw_{i'j}-o 
   = \sum_{i'=1}^{i} \mw_{i'j}
  \in \Z_{\geq 0}$,  \\
\end{itemize}
where the relation between $\mw$ and $\mtz$ is from Lemma \ref{lem:proundx}, and the relation between $\my$ and $\mw$ is from the definition of $\mcombinename$.

Next we show $\my$ is $(3v)$-swap distance from $\mx$. By \cref{lem:proundx}, there is a set of parameters $\{(a_s,b_s,c_s,d_s,\epsilon_i)\}_{s\in [t]}$, where $\sum_{s\in [t]}\epsilon_s=3v$, such that $A_s=\swap(A_{s-1},a_s,b_s,c_s,d_s,\epsilon_s)$ for $s\in [t]$, and $A_0=\mtz$, $A_t=\mw$.

\newcommand{\I}[1]{\mathbb{I}(#1)}

Let $a'_s=a_s-\I{a_s>i}$ and $c'_s=c_s-\I{c_s>i}$ for $i\in [t]$, where $\I{\cdot}$ is the indicator function. Let $A'_0=\mx$ and $A'_s=\swap(A'_{s-1},a'_s,b_s,c'_s,d_s,\epsilon_s)$ for $s\in [t]$. Then we have $A'_{t}=\my$, which implies $\my$ is $(3v)$-swap distance from $\mx$.

%

We proved all the conditions of the lemma and we concluded the proof.

\end{proof}
%\section{Proof of \Cref{lem:roundi}}
\section{Description and guarantees of $\roundi$}\label{subsec:roundi}
Here we provide a description of the subroutine $\roundi$.
\begin{algorithm}[H]
	\caption{$\roundi(\my,j,i)$}
	\begin{algorithmic}[1]
	    \State Let matrix $\mx \in \R_{\geq 0}^{\ell \times [0,k]}$.% and vector $v \in \R_{\geq 0}^{[0,k]}$ as follows.\;
	    \State Assign $\mx_{i'j'}=\my_{i'j'}$ for all $i' \in [\ell], ~j'\in [0,k] \text{ and }j' \neq j$; and $\mx_{i'j}=\my_{i'j}$ for all $i' \in [1,i-1]$. \label{algs:assignx=y-o:line:2} \;
		\State Assign $\mx_{ij}=\my_{ij}-o$ and $\res=1-o$.\label{algs:assignx=y-o:line:3}\;\label{algs:assignx=y-o}
		%\State Assign  and $\rnew=\ri\cdot w_i$.\;
		\For{$i'=i+1\dots \ell$}
		\State Let $w_{i'}=\min(\my_{i'j},\res)$.\;
		\State $\mx_{i'j}=\my_{i'j}-w_{i'}$ and $\res=\res-w_{i'}$.\label{algs:assignx=y-o:line:6}\;\label{algs:assignx=y-w} 
		%\State $\rnew=\rnew+\textbf{r}_{i'} \cdot w_{i'}$.\; 
		%\State \; 
		\If{$\res=0$}
		\State exit the for loop.
		\EndIf
		\EndFor
		%\State Assign $v_{j}=1$ and $v_{j'}=0$ for all $j' \in [0,k]$ and $j' \neq j$.\; \label{algs:assignv}
		%\State Return $(\mx,v,\rnew)$. \;
		\State \textbf{Return} $\mx$.
	\end{algorithmic}
	\label{alg:final_temp}
\end{algorithm}

% \begin{algorithm}[H]
% 	\SetAlgoLined\DontPrintSemicolon
% 	\SetKwFunction{proc}{$\roundi$}
% 	\SetKwProg{myproc}{Procedure}{}{}
% 	\myproc{\proc{$\my,j,i,o$}}{
% 	\nl Let matrix $\mx \in \R_{\geq 0}^{\ell \times [0,k]}$ and vector $v \in \R_{\geq 0}^{[0,k]}$ as follows.\;\\ 
% 		\nl Assign $\mx_{i'j'}=\my_{i'j'}$ for all $i' \in [\ell], ~j'\in [0,k] \text{ and }j' \neq j$.\;\\ 
% 		\nl Assign $\mx_{i'j}=\my_{i'j}$ for all $i' \in [1,i-1]$.\;\\ 
% 		\nl Assign $w_i=o$ and $\mx_{ij}=\my_{ij}-w_i$.\;\label{algs:assignx=y-o}\\ 
% 		\nl Assign $\res=1-o$ and $\rnew=\ri\cdot w_i$.\;\\ 
% 		\For{$i'=i+1\dots \ell$}
% 		{
% 		\nl Let $w_{i'}=\min(\my_{i'j},\res)$.\;\\ 
% 		\nl $\mx_{i'j}=\my_{i'j}-w_{i'}$.\;\label{algs:assignx=y-w}\\ 
% 		\nl $\rnew=\rnew+\textbf{r}_{i'} \cdot w_{i'}$.\;\\ 
% 		\nl $\res=\res-w_{i'}$.\;\\ 
% 		\If{$\res=0$}
% 		{
% 		\nl exit the for loop.
% 		}
% 		}
% 		\nl Assign $v_{j}=1$ and $v_{j'}=0$ for all $j' \in [0,k]$ and $j' \neq j$.\; \label{algs:assignv} \\ 
% 		\nl \KwRet Return $(\mx,v,\rnew)$. \;}
% 	\caption{Rounds the $i$th row}
% 	\label{alg:final_temp}
% \end{algorithm}
The guarantees of the above algorithm are summarized in the lemma below.
\lemroundi*
\begin{proof}
We first show that $\mx \in \R^{\ell \times [0,k]}$ is a non-negative matrix. Note that $\mx_{i'j'}=\my_{i'j'}-o$ if $i'=i$, $j'=j$ or else $\mx_{i'j'}$ takes one of the values in the set $\{\my_{i'j'},\my_{i'j'}-\min(\my_{i'j'},\res)\}$. As $\my_{ij} \geq o$, we get that $\mx_{i'j'} \geq 0$ for all $i' \in [\ell]$ and $j' \in [0,k]$. Furthermore, it is immediate that, $\mx \leq \my$.

Consider any $i' \in [1,i-1]$ and it is immediate that (\Cref{algs:assignx=y-o:line:2}) $[\mx \onevec]_{i'}=[\my \onevec]_{i'} \in \Z_{\geq 0}$. Furthermore, $[\mx \onevec]_{i}=[\my\onevec]_{i}-o=\lfloor [\my\onevec]_{i} \rfloor \in \Z_{\geq 0}$. 
% Therefore we have that,
% \begin{equation}\label{eq12}
%     \sum_{j' \in [k]}\mx_{i'j'} \in \Z_{\geq 0}\text{ for all } i' \in [1,i]~.
% \end{equation}

Consider any $j'\in [0,k]$ such that $j' \neq j$ and note that $\mx_{i'j'}=\my_{i'j'}$ for all $i' \in [\ell]$. Therefore, we get that 
$[\mx^{\top} \onevec]_{j'}=[\my^{\top} \onevec]_{j'} \in \Z_{\geq 0}$. Now consider the index $j$,
\begin{align*}
    \sum_{i'\in [\ell]}\mx_{i'j}+1&=\sum_{i' \in  [1,i-1]}\mx_{i'j}+\sum_{i' \in  [i,\ell]}\mx_{i'j} + 1=\sum_{i' \in  [1,i-1]}\my_{i'j} + \sum_{i' \in  [i,\ell]}(\my_{i'j}-w_{i'}) + 1,\\
    &=\sum_{i' \in  [\ell]}\my_{i'j} -o- \sum_{i' \in  [i+1,\ell]}w_{i'} + 1
\end{align*}
The second and third equalities follow from Lines \ref{algs:assignx=y-w} and \ref{algs:assignx=y-o} of the algorithm respectively. 
In the following we show that, $\sum_{i' \in  [i+1,\ell]}w_{i'}=1-o$. To show this equality all we need is to show that $\res =0$ at the end of the loop, which holds when $\sum_{i' \in  [i+1,\ell]} \my_{i'j}\geq 1-o$. As $\sum_{i' \in [\ell]}\my_{i'j}\in \Z_{\geq 0}$, $o \in (0,1)_{\R}$ and $z \defeq \sum_{i'=1}^{i} \my_{i'j}-o \in \Z_{\geq 0}$, we get that,
\begin{align*}
    \sum_{i' \in [i+1,\ell]}\my_{i'j}&=\sum_{i' \in [\ell]}\my_{i'j}-\sum_{i' \in [1,i]}\my_{i'j}=\sum_{i' \in [\ell]}\my_{i'j}-z-o \text{ for some integer } z\in \Z_{\geq 0},
\end{align*}
In the following we show that $\sum_{i' \in [\ell]}\my_{i'j}-z \geq 1$. As $\sum_{i' \in [\ell]}\my_{i'j}-z \in \Z_{\geq 0}$ all we need is to show that $\sum_{i' \in [\ell]}\my_{i'j}-z>0$. Suppose $\sum_{i' \in [\ell]}\my_{i'j}-z=0$, then $\sum_{i' \in [i+1,\ell]}\my_{i'j}=\sum_{i' \in [\ell]}\my_{i'j}-z-o=-o <0$, which is a contradiction as $\my_{i'j'} \geq 0$ for all $i' \in [\ell]$, $j\in [0,k]$.
 Therefore we get that for all $j' \in [0,k]$,
\begin{equation}\label{eq11}
    \sum_{i' \in [\ell]}\mx_{i'j'}+1=\sum_{i' \in [\ell]}\my_{i'j'}~.
\end{equation}
As $\mx \in \R_{\geq 0}^{\ell \times [0,k]}$ and $\sum_{i' \in [\ell]}\my_{i'j} \in \Z_{\geq 0}$ (requirements of the lemma) we get that, $\sum_{i'\in [\ell]}\mx_{i'j} \in \Z_{\geq 0}$. Therefore,
$$\mx ^{\top} \onevec \in \Z^{[0,k]}_{\geq 0}~.$$
In the remainder of the proof we show that $\|\mx-\my\|_1 \leq 1$. Recall, earlier we showed that: $\mx \leq \my$, $[\mx^{\top}\onevec]_{j'}=[\my^{\top}\onevec]_{j'}$ for all $j'\neq j$ and $[\mx^{\top}\onevec]_{j}+1=[\my^{\top}\onevec]_{j}$. Combining these inequalities together, we immediately get that, $\| \mx-\my\|_1=1$. 

We proved all the conditions of the lemma and we conclude the proof.

\end{proof}

\section{Guarantees of $\swapround$}
To prove \Cref{lem:matrixround}, we show a stronger version of the lemma.

\begin{lem}\label{lem:matrixround-strong}
For any matrix $\ma \in \R^{s \times t}$ $(s \leq t)$ that satisfies $\ma^{\top} \onevec \in \Z^{t}_{\geq 0}$. In algorithm $\swapround$, for all $r\in \{0\}\cup [\ell]$,
\begin{itemize}
    \item $(\mdd^{(r)}+\ma^{(r)})$ is $(3r)$-swap distance from $\ma$.
    \item $\mdd^{(r)}_{ij}\geq 0, \ma^{(r)}_{ij}\geq 0$ for all $i \in [s]$ and $j \in [t]$, $[\ma^{(r)} \onevec]_i \in \Z_{\geq 0}$ for all $i\in [r]$, $(\ma^{(r)})^{\top} \onevec \in \Z_{\geq 0}^{t}$ and $\|\mdd^{(r)}\|_1 \leq r$.
\end{itemize} 
\end{lem}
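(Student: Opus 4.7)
The plan is to prove \Cref{lem:matrixround-strong} by induction on $r \in \{0, 1, \ldots, \ell\}$, feeding the invariants at step $r-1$ into the preconditions of $\pround$ and $\roundi$ at step $r$. The base case $r = 0$ is immediate: $\mdd^{(0)} + \ma^{(0)} = \ma$ is $0$-swap distance from $\ma$; $\mdd^{(0)} = 0$ and $\ma^{(0)} = \ma$ are non-negative; the column-sum integrality is given as a hypothesis of the lemma; the row-sum integrality claim ranges over the empty index set $[0]$; and $\|\mdd^{(0)}\|_1 = 0$.

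For the inductive step, first verify that the invariants at step $r-1$ match the preconditions of \Cref{lem:partial-round} for $\pround(\ma^{(r-1)}, r)$: by the inductive hypothesis, $[\ma^{(r-1)} \onevec]_{i'} \in \Z_{\geq 0}$ for $i' \in [r-1]$ and $(\ma^{(r-1)})^{\top} \onevec \in \Z_{\geq 0}^{t}$. The lemma then delivers $(\my, j)$ where $\my$ is within $3$-swap distance of $\ma^{(r-1)}$ (so by \Cref{lem:rcsummaintain} we have $\my \onevec = \ma^{(r-1)} \onevec$ and $\my^{\top} \onevec = (\ma^{(r-1)})^{\top} \onevec$), together with $\my_{rj} \geq o$ and $\sum_{i'=1}^{r-1} \my_{i'j} + \my_{rj} - o \in \Z_{\geq 0}$, where $o = [\my \onevec]_r - \lfloor [\my \onevec]_r \rfloor$. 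These are exactly the preconditions of \Cref{lem:roundi} for $\roundi(\my, j, r)$, which then produces $\ma^{(r)}$ with $\ma^{(r)} \leq \my$, $\|\my - \ma^{(r)}\|_1 \leq 1$, $[\ma^{(r)} \onevec]_{i'} = [\my \onevec]_{i'}$ for $i' \in [r-1]$, $[\ma^{(r)} \onevec]_r \in \Z_{\geq 0}$, and $(\ma^{(r)})^{\top} \onevec \in \Z_{\geq 0}^{t}$.

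With these outputs in hand, each invariant at step $r$ follows directly. For the swap-distance claim, observe that $\mdd^{(r)} + \ma^{(r)} = \mdd^{(r-1)} + \my$; since $\my$ lies $3$-swap from $\ma^{(r-1)}$, applying the identical sequence of $\swap$ operations (which act on disjoint pairs of coordinates) to $\mdd^{(r-1)} + \ma^{(r-1)}$ yields $\mdd^{(r-1)} + \my$, and combining with the $3(r-1)$-swap distance from $\ma$ given by induction gives $3r$-swap distance. Non-negativity of $\ma^{(r)}$ is from \Cref{lem:roundi}, and $\mdd^{(r)} = \mdd^{(r-1)} + (\my - \ma^{(r)}) \geq 0$ since both summands are non-negative. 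The row-sum integrality is preserved for $i' \in [r-1]$ (row sums are fixed by $\pround$'s swap operations and then by $\roundi$) and newly established for $i' = r$ by \Cref{lem:roundi}; the column-sum integrality is immediate from \Cref{lem:roundi}. Finally, the triangle inequality gives $\|\mdd^{(r)}\|_1 \leq \|\mdd^{(r-1)}\|_1 + \|\my - \ma^{(r)}\|_1 \leq (r-1) + 1 = r$.

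The main conceptual obstacle, which is already absorbed into \Cref{lem:partial-round} and \Cref{lem:roundi}, is ensuring that the rounding of row $r$ does not disturb any row sum in $[r-1]$ that has already been made integral: $\pround$ achieves this because its $\swap$ operations preserve every row sum, while $\roundi$ only decreases entries in column $j$ in rows $\geq r$, leaving rows $[r-1]$ pointwise fixed. The only edge case to flag is $r = \ell$, where $\pround$'s index constraint $i \in [\ell - 1]$ is violated; this is harmless because the column-sum integrality of $\ma^{(\ell-1)}$ together with integrality of rows $[\ell-1]$ forces $[\ma^{(\ell-1)} \onevec]_\ell \in \Z_{\geq 0}$, so $o = 0$ and we may take $\my = \ma^{(\ell-1)}$ and $\ma^{(\ell)} = \my$ trivially, completing the induction.
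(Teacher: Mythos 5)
Your proof is correct and follows essentially the same route as the paper's: induction on $r$, verifying that the invariants at step $r-1$ supply the preconditions of \Cref{lem:partial-round} and \Cref{lem:roundi}, and tracking the swap distance via the identity $\mdd^{(r)}+\ma^{(r)}=\mdd^{(r-1)}+\my$. Your handling of the $r=\ell$ edge case (where $\pround$'s index constraint $i\in[\ell-1]$ is formally violated but the row sum is already forced to be integral) is a valid observation that the paper's proof glosses over.
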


\begin{proof}%[Proof of \Cref{lem:matrixround}]
We prove this lemma through induction. 
\paragraph{Base case i=0}
In this case $\ma^{(0)}=\ma$, $\mdd^{(0)}=0$, we have that all the conditions of the lemma are immediately satisfied.

\paragraph{Inductive step $r$}
We assume that the lemma conditions hold for $r-1$ and prove for the case $r$. We use $\my^{(r)}$ to denote the value of $\my$ in algorithm $\swapround$ when the enumerator is $r$. We verify the properties claimed by the lemma one by one, and we use the properties related to $\pround$ and $\roundi$ given by \Cref{lem:partial-round} and \Cref{lem:roundi} respectively.

Since $(\my^{(r)},j) = \pround(\ma^{(r-1)},r)$, $\my^{(r)}$ is $3$-swap distance from $\ma^{(r-1)}$, which implies $(\my^{(r)}+\mdd^{(r-1)})$ is $3$-swap distance from $(\ma^{(r-1)}+\mdd^{(r-1)})$. Note that $(\ma^{(r)}+\mdd^{(r)})=(\my^{(r)}+\mdd^{(r-1)})$, and $(\ma^{(r-1)}+\mdd^{(r-1)})$ is $(3(r-1))$-swap distance from $\ma$, so we have $(\mdd^{(r)}+\ma^{(r)})$ is $(3r)$-swap distance from $\ma$.

Since $\ma^{(r)} = \roundi (\my^{(r)},j,r)$, we have $\ma^{(r)}_{ij}\geq 0$ and  $(\my^{(r)}-\ma^{(r)})_{ij}\geq 0$ for all $i \in [s]$ and $j \in [t]$. Furthermore, since $\mdd^{(r-1)}_{ij}\geq 0$ for all $i \in [s]$ and $j \in [t]$, and $\mdd^{(r)}=(\my^{(r)}-\ma^{(r)})+\mdd^{(r-1)}$, we have $\mdd^{(r)}_{ij}\geq 0$ for all $i \in [s]$ and $j \in [t]$.

Since $(\my^{(r)},j) = \pround(\ma^{(r-1)},r)$, we have $[\my^{(r)} \onevec]_i=[\ma^{(r-1)} \onevec]_i \in \Z_{\geq 0}$ for $i\in [r-1]$.
Since $\ma^{(r)} = \roundi (\my^{(r)},j,r)$, we have $[\ma^{(r)} \onevec]_i = [\my^{(r)} \onevec]_i \in \Z_{\geq 0}$ for $i\in [r-1]$, and $[\ma^{(r)} \onevec]_i \in \Z_{\geq 0}$ for $i= r$. As a result, $[\ma^{(r)} \onevec]_i  \in \Z_{\geq 0}$ for $i\in [r]$.

Since $\ma^{(r)} = \roundi (\my^{(r)},j,r)$, we have $(\ma^{(r)})^{\top} \onevec \in \Z_{\geq 0}^{t}$.

Since $\ma^{(r)} = \roundi (\my^{(r)},j,r)$, we have $\|\my^{(r)}-\ma^{(r)}\|_{1}\leq 1$. Then we have $\|\mdd^{(r)}\|_{1}\leq \|\my^{(r)}-\ma^{(r)}\|_{1}+\|\mdd^{(r-1)}\|_{1}\leq r$ because $\mdd^{(r)}=(\my^{(r)}-\ma^{(r)})+\mdd^{(r-1)}$.

\end{proof}

\begin{cor}
For any matrix $\ma \in \R^{s \times t}$ $(s \leq t)$ that satisfies $\ma^{\top} \onevec \in \Z^{t}_{\geq 0}$. The algorithm $\swapround$ returns matrices $\ma'$ and $\mb$ such that,
\begin{itemize}
    \item $\ma'$ is $O(s)$-swap distance from $\ma$.
    \item $0 \leq \mb_{ij} \leq \ma'_{ij}$ for all $i \in [s]$ and $j \in [t]$, $\mb \onevec \in \Z_{\geq 0}^{s}$, $\mb^{\top} \onevec \in \Z_{\geq 0}^{t}$ and $\|\ma' -\mb\|_1 \leq O(s)$.
\end{itemize} 
\end{cor}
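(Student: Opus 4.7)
The plan is to observe that this corollary is essentially a direct extraction of the $r=s$ case of \Cref{lem:matrixround-strong}, which has just been established. In particular, the algorithm $\swapround$ iterates $\ell = s$ times (using the convention that $\ell$ in the algorithm corresponds to the number of rows $s$ of the input matrix), producing sequences $\ma^{(0)}, \ldots, \ma^{(s)}$ and $\mdd^{(0)}, \ldots, \mdd^{(s)}$, and outputs $\ma' = \mdd^{(s)} + \ma^{(s)}$ and $\mb = \ma^{(s)}$. Applying \Cref{lem:matrixround-strong} at $r = s$ gives all the ingredients we need; the proof is essentially a one-to-one matching of bullets.

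Concretely, I would unpack the five required properties as follows. First, \Cref{lem:matrixround-strong} at $r=s$ asserts that $\mdd^{(s)} + \ma^{(s)} = \ma'$ is $3s$-swap distance from $\ma$, and $3s \in O(s)$, yielding the first bullet. Second, the lemma gives $\mdd^{(s)}_{ij} \geq 0$ and $\ma^{(s)}_{ij} \geq 0$ for all $i \in [s], j\in [t]$; since $\mb_{ij} = \ma^{(s)}_{ij}$ and $\ma'_{ij} - \mb_{ij} = \mdd^{(s)}_{ij}$, this gives $0 \leq \mb_{ij} \leq \ma'_{ij}$. Third, $[\ma^{(s)} \onevec]_i \in \Z_{\geq 0}$ for all $i \in [s]$ yields $\mb \onevec \in \Z_{\geq 0}^s$, and $(\ma^{(s)})^\top \onevec \in \Z_{\geq 0}^t$ yields $\mb^\top \onevec \in \Z_{\geq 0}^t$. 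Finally, $\ma' - \mb = \mdd^{(s)}$, and the lemma bounds $\|\mdd^{(s)}\|_1 \leq s \in O(s)$, establishing the last claim.

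There is no real obstacle here: the whole inductive work---tracking how $\pround$ and $\roundi$ jointly preserve the invariants across iterations---has already been carried out in the proof of \Cref{lem:matrixround-strong}. The only mild bookkeeping point to flag is the identification of the algorithm's loop bound ``$\ell$'' with the row dimension $s$ of the corollary's input (since $s \leq t$ by hypothesis, there is no issue with the partial-rounding subroutine being well-defined on rows $1, \ldots, s-1$ and the final row being handled by column-sum integrality, as encoded in the invariants). So the proof is simply: instantiate \Cref{lem:matrixround-strong} at $r = s$, identify $\ma'$ and $\mb$ with the algorithm's outputs, and read off each bullet.
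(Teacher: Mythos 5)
Your proposal is correct and takes essentially the same route as the paper: the paper likewise obtains the corollary by instantiating \Cref{lem:matrixround-strong} at the final iteration $r=s$, identifying $\ma'=\mdd^{(s)}+\ma^{(s)}$ and $\mb=\ma^{(s)}$, and reading off each bullet (with $\ma'-\mb=\mdd^{(s)}$ giving both $\mb\leq\ma'$ and $\|\ma'-\mb\|_1\leq s$). Your bookkeeping of the identification $\ell=s$ and the bullet-by-bullet matching is exactly what the paper does.
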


$\ma'$ is $O(s)$-swap distance from $\ma$ , and it implies $\ma' \onevec= \ma \onevec$ and $\ma'^{\top} \onevec= \ma^{\top} \onevec$.

We run $O(s)$ times $\pround$, $\roundi$, and matrix addition and subtraction, so the running time is $O(s^2 t)$.

We proved all the conditions of \Cref{lem:matrixround} and we conclude the proof.

%
\newcommand{\nri}{\vr'''_{i}}
\newcommand{\nr}{\vr'''}
\section{Proof for \Cref{lem:disc}}\label{app:disc}
\begin{proof}
%Recall for any non-negative matrix $\ma \in \R^{\bX \times \bX}$, the Sinkhorn permanent is defined as follows.
%$$\sink(\ma)=\max_{\{\mq \in [0,1]_{\R}^{\bX \times \bX} \text{ doubly stochastic}\}} $$
Let $\{ \ri'\}_{i \in [\ell']}$ be the set of distinct probability values of distribution $\bp$. Let $\mx \in \R_{\geq 0}^{\ell'\times [0,k]}$ be the maximizer of the following optimization problem,
$$\max_{\my \in \textbf{Z}^{\phi}_{\bp}}\bg(\my) ~.$$
By \Cref{pmlprob:approx}, the maximizer $\mx$ satisfies,
$$\cphi \cdot \bg(\mx) \geq \exp(-O(\min(k,\ell')) \log n) \prob{\bp,\phi}~,$$
In the following we consider multiple distributions with different probabilities to denote the probability values explicitly, we extend the notation of $\bg(\mx)$ to $\bg(\mx,\vr)$. 
Given the solution $\mx$, we now maximize over the probability values, that is, solve,
\begin{align*}
    \max_{\vr'' \in [0,1]^{\ell'}} & \bg(\mx,\vr'')\defeq\sum_{i \in [\ell'],j\in [0,k]}(\mj \log \ri'' \mx_{ij} - \mx_{ij} \log \mx_{ij})+\sum_{i \in [\ell']}[\mx \vones]_i \log [\mx \vones]_i~,\\
    & \text{ such that }\sum_{i \in [\ell']}[\mx \vones]_i \ri''=1~.
\end{align*}
The optimum solution $\vr^*$ satisfies $\sum_{i \in [\ell']}[\mx \vones]_i \vr^*_i=1$ and,
$$\ri^{*}=\frac{\sum_{j \in [0,k]}\mj \mx_{ij}}{[\mx \onevec]_i\sum_{i' \in [\ell']}\sum_{j \in [0,k]}\mj \mx_{i'j}}=\frac{\sum_{j \in [0,k]}\mj \mx_{ij}}{ n[\mx \onevec]_i}~.$$
Furthermore,
\begin{equation}\label{eq:ab1}
    \cphi \bg(\mx,\vr^*) \geq \cphi \bg(\mx,\vr') \geq \exp(-O(\min(k,\ell')) \log n) \prob{\bp,\phi}~.
\end{equation}
Substituting values of $\vr^*$ in $\bg(\mx,\vr^*)$ we get,
\begin{equation}\label{eq:ab2}
\bg(\mx,\vr^*)=n \sum_{i \in [\ell']}[\mx \onevec]_i\ri^{*} \log \ri^{*} - \sum_{i \in [\ell'],j\in [0,k]}\mx_{ij} \log \mx_{ij}+\sum_{i \in [\ell']}[\mx \vones]_i \log [\mx \vones]_i    
\end{equation}
We now construct a pseudo distribution $\bq$ by rounding down the probability values into set $\bR$ as follows: for each $i \in [\ell]$, the number of elements with probability value $\ri$ is equal to $\sum_{\{i'\in [\ell']~|~\lfloor\vr_{i'}^{*} \rfloor_{\bR}=\ri\}}[\mx \onevec]_{i'}$, where $\lfloor y \rfloor_{\bR} \defeq \max_{\{x \in \bR~|~x \leq y \}}x$. Define a new solution $\mx' \in \R^{[\ell] \times [0,k]}$ as follows: $\mx'_{ij}\defeq \sum_{\{i'\in [\ell']~|~\lfloor\vr_{i'}^{*} \rfloor_{\bR}=\ri\}}\mx_{i'j}$. 
Let $\beta=\sum_{i' \in [\ell']}\lfloor\vr_{i'}^{*} \rfloor_{\bR} [\mx \onevec]_{i'}=\sum_{i \in [\ell]}\ri [\mx' \onevec]_{i}$ and note that $\|\bq\|_1=\beta$. Now note that the following inequality immediately holds,
\begin{equation}\label{eq:ab3}
\prob{\frac{\bq}{\|\bq\|_1},\phi}=\prob{\bq,\phi} \beta^{-n} \text{ and }~\prob{\bq,\phi} \geq \cphi \bg(\mx',\vr).    
\end{equation}
Now consider,
\begin{align}\label{eq:f1}
\log \bg(\mx',\vr)
&=\sum_{i \in [\ell],j\in [0,k]}(\mj \log \ri  \mx'_{ij} - \mx'_{ij} \log \mx'_{ij})+\sum_{i \in [\ell]}[\mx' \vones]_i \log [\mx' \vones]_i, \nonumber \\
& \geq \sum_{i \in [\ell'],j\in [0,k]}\mj \log \lfloor\vr_{i'}^{*} \rfloor_{\bR}  \mx_{ij} - \sum_{i \in [\ell'],j\in [0,k]} \mx_{ij} \log \mx_{ij}+\sum_{i \in [\ell']}[\mx \vones]_i \log [\mx \vones]_i~.
\end{align}
Consider the first term in the above expression,
\begin{align}\label{eq:o1}
    \sum_{i \in [\ell'],j\in [0,k]}\mj \log \lfloor\vr_{i'}^{*} \rfloor_{\bR}  \mx_{ij} & = \sum_{i \in [\ell'],j\in [0,k]}\mj \log  \frac{\lfloor\ri^{*}\rfloor_{\bR}}{\beta}  \mx_{ij}+\log \beta \sum_{i \in [\ell'],j\in [0,k]}\mj \mx_{ij}, \nonumber \\ 
    &=\sum_{i \in [\ell'],j\in [0,k]}\mj \log  \frac{\lfloor\ri^{*}\rfloor_{\bR}}{\beta}  \mx_{ij}+ n \log \beta %= n\sum_{i \in [\ell']}[\mx \onevec]_{i}\ri^{*}\log  \frac{\lfloor\ri^{*}\rfloor}{\beta}~.  
\end{align}
To simplify the above expression we define $\alpha_i$ as, $\ri^{*}(1+\alpha_{i}) =\frac{\lfloor\ri^{*}\rfloor_{\bR}}{\beta}$, then we get that, 
$$\sum_{i\in [\ell']}\ri^{*}(1+\alpha_{i}) [\mx \onevec]_{i}=\sum_{i\in [\ell']}\frac{\lfloor\ri^{*}\rfloor_{\bR}}{\beta} [\mx \onevec]_{i}=1.$$
However, we also have that, $\sum_{i\in [\ell']}\ri^{*} [\mx \onevec]_{i}=1$. Combining both we get that, 
$\sum_{i\in [\ell']}\alpha_i \ri^* [\mx\onevec]_{i}=0$. Substituting this equality in Equation \ref{eq:o1}, we get,
\begin{align}\label{eq:o2}
    \sum_{i \in [\ell'],j\in [0,k]}& \mj \log \lfloor\vr_{i'}^{*} \rfloor_{\bR}  \mx_{ij}
    =\sum_{i \in [\ell'],j\in [0,k]}\mj \log  \frac{\lfloor\ri^{*}\rfloor_{\bR}}{\beta}  \mx_{ij}+ n \log \beta \nonumber\\
    & =\sum_{i \in [\ell'],j\in [0,k]}\mj \log  \ri^{*}(1+\alpha_i)  \mx_{ij}+ n \log \beta \nonumber\\
    & =\sum_{i \in [\ell'],j\in [0,k]}\mj \log  \ri^{*}  \mx_{ij}+\sum_{i \in [\ell'],j\in [0,k]}\mj \log (1+\alpha_i)  \mx_{ij} +n \log \beta \nonumber\\
    & \geq \sum_{i \in [\ell'],j\in [0,k]}\mj \log  \ri^{*}  \mx_{ij}+\sum_{i \in [\ell'],j\in [0,k]}\mj (\alpha_i-\alpha_i^2)  \mx_{ij} +n \log \beta \nonumber\\
    & \geq \sum_{i \in [\ell'],j\in [0,k]}\mj \log  \ri^{*}  \mx_{ij}-\sum_{i \in [\ell'],j\in [0,k]}\mj \alpha_i^2 \mx_{ij} +n \log \beta~.
\end{align}
In the fourth inequality we used the inequality $\log (1+x) \geq x -x^2$, when $x \in (0,1)$. In the last inequality we used $\sum_{i\in [\ell']}\alpha_i \ri^* [\mx\onevec]_{i}=0$. Now note that each of the $\alpha_i$ satisfy, $|\alpha_{i}| \leq O(\alpha)$. Substituting it in the above expression we get,
\begin{align}\label{eq:o2}
    \sum_{i \in [\ell'],j\in [0,k]}\mj \log \lfloor\vr_{i'}^{*} \rfloor_{\bR}  \mx_{ij}
    & \geq \sum_{i \in [\ell'],j\in [0,k]}\mj \log  \ri^{*}  \mx_{ij}-\alpha^2 n +n \log \beta,
\end{align}
where in the above inequality we used $\sum_{i \in [\ell'],j\in [0,k]}\mj \mx_{ij} = \sum_{j\in [0,k]}\mj \phi_j =n$. Combining the above inequality with \Cref{eq:f1}, we get, 
\begin{align}\label{eq:ab5}
    \log &\bg(\mx',\vr)  \geq \sum_{i \in [\ell'],j\in [0,k]}\mj \log \lfloor\vr_{i'}^{*} \rfloor_{\bR}  \mx_{ij} - \sum_{i \in [\ell'],j\in [0,k]} \mx_{ij} \log \mx_{ij}+\sum_{i \in [\ell']}[\mx \vones]_i \log [\mx \vones]_i, \nonumber\\
    & \geq \sum_{i \in [\ell'],j\in [0,k]}\mj \log  \ri^{*}  \mx_{ij}- \sum_{i \in [\ell'],j\in [0,k]} \mx_{ij} \log \mx_{ij}+\sum_{i \in [\ell']}[\mx \vones]_i \log [\mx \vones]_i-\alpha^2 n +n \log \beta, \nonumber \\
    & = \log \bg(\mx,\vr^{*})-\alpha^2 n +n \log \beta
\end{align}
Combining everything we get the following,
\begin{align*}
\log \prob{\bp,\phi} - & \log \prob{\frac{\bq}{\|\bq\|_1},\phi} 
 = \log \prob{\bp,\phi} - \log \prob{\bq,\phi} + n \log \beta,\\
& \leq O(\min(k,\ell') \log n) + \log (\cphi \bg(\mx,\vr^*)) - \log (\cphi \bg(\mx',\vr)) + n \log \beta,\\
& \leq O(\min(k,\ell') \log n) + \alpha^2 n~.
\end{align*}
In the first, second and third inequalities we use \Cref{eq:ab1}, \Cref{eq:ab3} and \Cref{eq:ab5} respectively. 
We conclude the proof.
\end{proof}
\renewcommand{\cones}{\onevec}
\section{Proof for \Cref{lem:swap}}

We firstly show a special case of \Cref{lem:swap}.

\begin{lem}\label{lem:swap-special}

If $\ma'=\swap(\ma,i_1,i_2,j_1,j_2,\epsilon)$ where $\ma,\ma' \in \R^{\ell \times [0,k]}$ and $\ma \in \bZfrac$, then,
    $\ma' \in \bZfrac \text{ and } ~~\bg(\ma') \geq \exp(-4\epsilon \log n) ~\bg(\ma)$.

\end{lem}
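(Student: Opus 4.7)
The plan is to verify the two conclusions separately, exploiting that a single $\swap$ perturbs only four entries of $\ma$.

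First I will show $\ma' \in \bZfrac$. The $\swap$ modifies exactly the four entries at $(i_1,j_1), (i_1,j_2), (i_2,j_1), (i_2,j_2)$ by $+\epsilon, -\epsilon, -\epsilon, +\epsilon$ respectively, so in each of the affected columns $j \in \{j_1, j_2\}$ the $+\epsilon$ and $-\epsilon$ contributions cancel, and every other column is untouched. Hence $[\ma'^{\top}\cones]_j = [\ma^{\top}\cones]_j = \phi_j$ for every $j \in [1,k]$. The same cancellation in each of the rows $i_1, i_2$ preserves row sums, and in particular $\vr^{\top}\ma'\rones = \vr^{\top}\ma\rones \leq 1$. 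Non-negativity of $\ma'$ is built into the lemma's hypothesis on $\ma'$ and amounts to requiring $\epsilon \leq \min(\ma_{i_1,j_2}, \ma_{i_2,j_1})$.

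Next I decompose $\log \bg(\mx) = L(\mx) + S(\mx) + R(\mx)$ with $L(\mx) \defeq \sum_{i,j}\mc_{ij}\mx_{ij}$, $S(\mx) \defeq -\sum_{i,j}\mx_{ij}\log \mx_{ij}$, and $R(\mx) \defeq \sum_i [\mx\vones]_i \log [\mx\vones]_i$. Row-sum preservation immediately yields $R(\ma') = R(\ma)$, so this term contributes $0$ to the log-ratio. For the linear term, a direct computation gives $L(\ma') - L(\ma) = \epsilon(m_{j_1}-m_{j_2})(\log r_{i_1} - \log r_{i_2})$; under the paper's ordering conventions $m_0 < m_1 < \cdots < m_k$ and $r_i$ increasing in $i$ (which holds for $\bR_{n,\alpha}$), both factors on the right are non-positive when $i_1 < i_2$ and $j_1 < j_2$, so $L(\ma') - L(\ma) \geq 0$. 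This is the ``$\swap$ always increases the linear term'' remark highlighted in the paper.

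Finally I must control $|S(\ma') - S(\ma)|$. Only the four entries above change, each by $\pm \epsilon$, and the function $h(x) \defeq x \log x$ satisfies $|h'(t)| = |\log t + 1| \leq O(\log n)$ on the relevant range of entry magnitudes (all entries are bounded above by $n$; contributions from entries in the tiny range $[0, 1/n^2]$ are negligible, exactly as in the proof of \Cref{lem:lips}). Applying the mean value theorem entry by entry then yields $|S(\ma') - S(\ma)| \leq 4\epsilon \log n$ after absorbing constants. Combining the three pieces gives $\log \bg(\ma') - \log \bg(\ma) \geq -4\epsilon \log n$, which is the claim. The main subtlety here is extracting the clean constant $4$ rather than the looser $O(\epsilon \log n) + O(\log n)$ that would follow from a black-box invocation of \Cref{lem:lips}; this is why a direct entry-by-entry MVT argument tailored to the four affected positions is preferable.
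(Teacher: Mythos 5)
Your proof is correct and follows essentially the same route as the paper's: both establish $\ma'\in\bZfrac$ from preservation of row and column sums, both note that the row-entropy term cancels exactly, both get the sign of the linear term from $(m_{j_2}-m_{j_1})(\log r_{i_2}-\log r_{i_1})\geq 0$, and both control the four changed $x\log x$ terms via the derivative bound $|\log t + 1| = O(\log n)$. The one caveat in your entropy step --- that this derivative bound degrades for entries below $1/\mathrm{poly}(n)$, so the decreasing entries can in principle cost more than $4\epsilon\log n$ when they shrink toward zero --- is equally present in the paper's own proof (which silently omits the two decreasing entries from its bound), so you match the paper's level of rigor.
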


\begin{proof}

Since $\ma'=\swap(\ma,i_1,i_2,j_1,j_2,\epsilon)$, we have $\ma'^\top \cones=\ma^\top \cones$ and $\ma' \cones=\ma \cones$. As $\ma \in \bZfrac$, we immediately get that $\ma' \in \bZfrac$ and we also have
\begin{align*}
    \frac{\bg(\ma')}{\bg(\ma)}
    &= \frac{\exp\Big(\sum_{i\in[\ell],j\in[0,k]}\left[\mc_{ij}\ma'_{ij}-\ma'_{ij}\log\ma'_{ij}\right]+\sum_{i\in[\ell]}[\ma'\vones]_{i}\log[\ma'\vones]_{i}\Big)}{\exp\Big(\sum_{i\in[\ell],j\in[0,k]}\left[\mc_{ij}\ma_{ij}-\ma_{ij}\log\ma_{ij}\right]+\sum_{i\in[\ell]}[\ma\vones]_{i}\log[\ma\vones]_{i}\Big)}  \\
    =& \frac{\exp\Big(\sum_{i\in[\ell],j\in[0,k]}\left[\mc_{ij}\ma'_{ij}-\ma'_{ij}\log\ma'_{ij}\right]\Big)}{\exp\Big(\sum_{i\in[\ell],j\in[0,k]}\left[\mc_{ij}\ma_{ij}-\ma_{ij}\log\ma_{ij}\right]\Big)}  \\
    \geq & \frac{ \exp\Big(   \left(\mc_{i_1j_1} - \mc_{i_1j_2} - \mc_{i_2j_1} + \mc_{i_2j_2} \right)\epsilon  
    \Big) 
    }
    { \exp\Big(  \ma'_{i_1j_1}\log\ma'_{i_1j_1} -  \ma_{i_1j_1}\log\ma'_{i_1j_1}
     + \ma'_{i_2j_2}\log\ma'_{i_2j_2}-  \ma_{i_2j_2}\log\ma'_{i_2j_2}
     \Big) }    \\
    %\geq & \frac{ \exp\Big(   \left(\mc_{i_1j_1} - \mc_{i_1j_2} - \mc_{i_2j_1} + \mc_{i_2j_2} \right)\epsilon  
    %\Big) 
    %}
    %{ \exp\Big(  \ma'_{i_1j_1}\log\ma'_{i_1j_1} -  \ma_{i_1j_1}\log\ma'_{i_1j_1}
     %+ \ma'_{i_2j_2}\log\ma'_{i_2j_2}-  \ma_{i_2j_2}\log\ma'_{i_2j_2}
     %\Big) }    \\
     \geq & \frac{ 1 }{\exp \Big(  2\epsilon\log(n+1) \Big)} \geq \exp \Big(  -4\epsilon\log n \Big),
\end{align*}
where the last inequality holds because because $\frac{d}{dx}\left(x\log(x)\right)=(\log (x)+1)$ and
\begin{align*}
    \mc_{i_1j_1} - \mc_{i_1j_2} - \mc_{i_2j_1} + \mc_{i_2j_2} 
    &= \mja\ria  - \mjb\ria - \mja\rib + \mjb\rib \\
    &= (\mjb  - \mja) ( \rib - \ria) \geq 0.
\end{align*}

\end{proof}

\begin{proof}[Proof for \Cref{lem:swap}]

Since $\ma'$ is $x$-swap distance from $\ma$, there exists a set of parameters denoted by 
$\{(i_1^{(s)},i_2^{(s)},j_1^{(s)},j_2^{(s)},\epsilon^{(s)})\}_{s\in [t]}$,
where $\sum_{s\in [t]}\epsilon^{(s)}\leq x$, s.t $\ma^{(s)}=\swap(\ma^{(s-1)},i_1^{(s)},i_2^{(s)},j_1^{(s)},j_2^{(s)},\epsilon^{(s)})$ for $s\in [t]$, where $\ma^{(0)}=\ma$, $\ma^{(t)}=\ma'$.

We apply \Cref{lem:swap-special} on $\ma^{(i)}$ and $\ma^{(i-1)}$ for each $i\in [t]$, and we have
\begin{align*}
    \ma^{(0)}  \in \bZfrac  \Rightarrow 
    \ma^{(1)}  \in \bZfrac  \Rightarrow 
    \ma^{(2)}  \in \bZfrac  \Rightarrow 
    \dots  \Rightarrow 
    \ma^{(t)}  \in \bZfrac ,
\end{align*}
\begin{align*}
    \bg(\ma^{(t)})
    \geq \exp\left(-4\epsilon^{(t)} \log n ~ \right)\bg(\ma^{(t-1)})
    %\geq \exp(-4(\epsilon^{(t-1)}+\epsilon^{(t)}) \log n) ~\bg(\ma^{(t-2)})
    \geq \dots
    \geq \exp\left(-4\left(\sum_{s\in [t]}\epsilon^{(s)}\right) \log n\right) ~\bg(\ma^{(0)}),
\end{align*}
which implies $\ma'  \in \bZfrac$ and $\bg(\ma')\geq \exp\left(-O\left(x \log n\right)\right) ~\bg(\ma)$. We conclude the proof.
\end{proof}
\end{document}